
\documentclass{article}

\usepackage{jair}

\usepackage{svg}
\usepackage{csvsimple}

\title{Meta-optimization for transfer learning\\ from mixtures of pre-trained kernel regressors}


\author{\name Xuwei Yang \email yangx212@mcmaster.ca \\
      \addr Department of Mathematics\\
      McMaster University
      \AND
      \name Anastasis Kratsios \email kratsioa@mcmaster.ca \\
      \addr Department of Mathematics\\
      McMaster University and Vector Institute
      \AND
      \name Florian Krach \email florian.krach@math.ethz.ch\\
      \addr Department of Mathematics\\
      ETH Z\"{u}rich
      \AND Matheus Grasselli \email grassel@mcmaster.ca\\
      \addr Department of Mathematics\\
      McMaster University
       \AND Aurelien Lucchi \email aurelien.lucchi@unibas.ch\\
    \addr Department of Mathematics and Informatics\\
      University of Basel}

\newcommand\numberthis{\addtocounter{equation}{1}\tag{\theequation}}

\usepackage{amsthm}
\newtheorem{proposition}{Proposition}
\newtheorem{lemma}{Lemma}
\newtheorem{theorem}{Theorem}
\newtheorem{remark}{Remark}
\newtheorem{assumption}{Assumption} 
\newtheorem{corollary}{Corollary}
\newtheorem{experiment}{Experiment}

\usepackage{algpseudocode}
\usepackage[linesnumbered,lined,boxed,commentsnumbered,ruled,longend]{algorithm2e}
    \RestyleAlgo{ruled}

\SetCommentSty{mycommfont}

\raggedbottom

\usepackage{lscape}

\usepackage{wrapfig} 
\usepackage{adjustbox} 
    \usepackage[column=0]{cellspace} 
    \setlength{\cellspacetoplimit}{2pt}
    \setlength{\cellspacebottomlimit}{\cellspacetoplimit}
    \usepackage{multirow}
\usepackage{enumitem}
\usepackage{float}
\usepackage[colorinlistoftodos]{todonotes}
\usepackage{amsmath,amsfonts,bm,amssymb,mathtools}


\usepackage[T1]{fontenc}    
\usepackage{booktabs}       
\usepackage{amsfonts}       
\usepackage{nicefrac}       
\usepackage{microtype}      
\usepackage{bbm}            

\usepackage{caption}
\usepackage{subcaption}


\usepackage{hyperref} 
\makeatletter
\def\@footnotecolor{red}
\define@key{Hyp}{footnotecolor}{%
 \HyColor@HyperrefColor{#1}\@footnotecolor%
}
\def\@footnotemark{%
    \leavevmode
    \ifhmode\edef\@x@sf{\the\spacefactor}\nobreak\fi
    \stepcounter{Hfootnote}%
    \global\let\Hy@saved@currentHref\@currentHref
    \hyper@makecurrent{Hfootnote}%
    \global\let\Hy@footnote@currentHref\@currentHref
    \global\let\@currentHref\Hy@saved@currentHref
    \hyper@linkstart{footnote}{\Hy@footnote@currentHref}%
    \@makefnmark
    \hyper@linkend
    \ifhmode\spacefactor\@x@sf\fi
    \relax
  }%
\makeatother
    \hypersetup{
    	colorlinks = true,
    	linkcolor = blue,
    	anchorcolor = blue,
    	citecolor = blue,
    	filecolor = blue,
    	urlcolor = blue,
    	footnotecolor=black
    }

\usepackage{xcolor} 

\usepackage{xparse}

\usepackage[colorinlistoftodos]{todonotes}
\usepackage{soul}


\def\1{\bm{1}}






\newcommand{\E}{\mathbb{E}}

\newcommand{\R}{\mathbb{R}}



\DeclareMathOperator*{\argmin}{arg\,min}






    \NewDocumentCommand{\rr}{o}{\mathbb{R}^{\IfValueT{#1}{#1}}}


\definecolor{darkcerulean}{rgb}{0.03, 0.27, 0.49}
\definecolor{darkmidnightblue}{rgb}{0.0, 0.2, 0.4}
\definecolor{darkcyan}{rgb}{0.0, 0.55, 0.55}
\definecolor{LandscapeBlue}{RGB}{10,38,255}
\definecolor{darkgreen}{rgb}{0.0, 0.2, 0.13}
\definecolor{deepjunglegreen}{rgb}{0.0, 0.29, 0.29}
\definecolor{applegreen}{rgb}{0.55, 0.71, 0.0}
\definecolor{LandscapeGreen}{RGB}{18,125,9}
\definecolor{darkcandyapplered}{rgb}{0.64, 0.0, 0.0}
\definecolor{darkred}{rgb}{0.55, 0.0, 0.0}
\definecolor{darkscarlet}{rgb}{0.34, 0.01, 0.1}
\definecolor{jasper}{rgb}{0.84, 0.23, 0.24}
\definecolor{darkjazzberryjam}{rgb}{0.45, 0.04, 0.37}
\definecolor{LandscapePurple}{RGB}{135,0,170}
\definecolor{MidnightBlue}{RGB}{25,25,112}
\definecolor{MidnightBlueComplementingGreen}{RGB}{25,112,25}
\definecolor{MidnightBlueComplementingPurple}{RGB}{112,25,112}
\definecolor{MidnightBlueComplementingRed}{RGB}{112,25,69}

\newcommand{\eqdef}{\ensuremath{\stackrel{\mbox{\upshape\tiny def.}}{=}}}

\makeatletter
\renewcommand*{\p@equation}{}
\makeatother



\DeclarePairedDelimiter\abs{\lvert}{\rvert}%
\DeclarePairedDelimiter\norm{\lVert}{\rVert}%

\makeatletter
\let\oldabs\abs
\def\abs{\@ifstar{\oldabs}{\oldabs*}}
\let\oldnorm\norm
\def\norm{\@ifstar{\oldnorm}{\oldnorm*}}
\makeatother

%



\usepackage{booktabs}
\raggedbottom

\setlength\heavyrulewidth{0.3ex}

\usepackage{lscape}





\usepackage[utf8]{inputenc} 
\usepackage[T1]{fontenc}    
\usepackage{url}            
\usepackage{booktabs}       
\usepackage{amsfonts}       
\usepackage{nicefrac}       
\usepackage{microtype}      

\usepackage{natbib}       
\usepackage[capitalise]{cleveref}

\usepackage{comment}
\usepackage{marginnote}
\setlength{\marginparwidth}{2.3cm}
\definecolor{MidnightBlue}{RGB}{25,25,112}
\definecolor{MidnightBlueComplementingGreen}{RGB}{25,112,25}
\definecolor{MidnightBlueComplementingPurple}{RGB}{112,25,112}
\definecolor{MidnightBlueComplementingRed}{RGB}{112,25,69}
\definecolor{deepjunglegreen}{rgb}{0.0, 0.29, 0.29}
\definecolor{applegreen}{rgb}{0.55, 0.71, 0.0}
\definecolor{WowColor}{rgb}{.75,0,.75}
\definecolor{MildlyAlarming}{rgb}{0.85,0.25,0.1}
\definecolor{SubtleColor}{rgb}{0,0,.50}
\definecolor{SubtleColor2}{rgb}{0.6,0.21,.50}
\newcounter{margincounter}

%

%

%


\NewDocumentCommand{\Anastasis}{mo}{
    \IfValueF{#2}{
                        {{\scriptsize
                            \textcolor{violet}{ 
                            \textbf{A:}
                            \textit{{#1}}
                            }
                        }}
        }
    \IfValueT{#2}{
                        \marginnote{{\scriptsize
                            \textcolor{violet}{ 
                            \textbf{A:}
                            \textit{{#1}}
                            }
                        }}
        }
                    }

\NewDocumentCommand{\Xuwei}{mo}{
    \IfValueF{#2}{
                        {{\scriptsize
                            \textcolor{magenta}{ 
                            \textbf{X:}
                            \textit{{#1}}
                            }
                        }}
        }
    \IfValueT{#2}{
                        \marginnote{{\scriptsize
                            \textcolor{magenta}{ 
                            \textbf{S:}
                            \textit{{#1}}
                            }
                        }}
        }
                    }

\NewDocumentCommand{\Matheus}{mo}{
    \IfValueF{#2}{
                        {{\scriptsize
                            \textcolor{blue}{ 
                            \textbf{S:}
                            \textit{{#1}}
                            }
                        }}
        }
    \IfValueT{#2}{
                        \marginnote{{\scriptsize
                            \textcolor{blue}{ 
                            \textbf{S:}
                            \textit{{#1}}
                            }
                        }}
        }
                    }

\NewDocumentCommand{\Aurelien}{mo}{
    \IfValueF{#2}{
                        {{\scriptsize
                            \textcolor{cyan}{ 
                            \textbf{AL:}
                            \textit{{#1}}
                            }
                        }}
        }
    \IfValueT{#2}{
                        \marginnote{{\scriptsize
                            \textcolor{cyan}{ 
                            \textbf{AL:}
                            \textit{{#1}}
                            }
                        }}
        }
                    }

\NewDocumentCommand{\Rodrigo}{mo}{
    \IfValueF{#2}{
                        {{\scriptsize
                            \textcolor{orange}{ 
                            \textbf{R:}
                            \textit{{#1}}
                            }
                        }}
        }
    \IfValueT{#2}{
                        \marginnote{{\scriptsize
                            \textcolor{orange}{ 
                            \textbf{R:}
                            \textit{{#1}}
                            }
                        }}
        }
                    }


\usepackage{cleveref}
\newcounter{termcounter}
\renewcommand{\thetermcounter}{\Roman{termcounter}}
\crefname{term}{term}{terms}
\creflabelformat{term}{#2\textup{(#1)}#3}

\makeatletter
\def\term{\@ifnextchar[\term@optarg\term@noarg}
\def\term@optarg[#1]#2{%
  \textup{#1}%
  \def\@currentlabel{#1}%
  \def\cref@currentlabel{[][2147483647][]#1}%
  \cref@label[term]{#2}}
\def\term@noarg#1{%
  \refstepcounter{termcounter}%
  \textup{(\thetermcounter)}%
  \cref@label[term]{#1}}
\makeatother


\begin{document}



\maketitle

\begin{abstract}
In contemporary machine learning, models are trained by aggregating and fine-tuning a set of pre-trained models, trained on distinct datasets, to a given novel dataset.  This paper addresses the \textit{meta-optmization} problem of the task of designing optimal transfer learning algorithms, which minimize a pathwise regret functional. Using techniques from optimal control, we construct the unique regret-optimal optimization algorithms for fine-tuning mixtures of pre-trained kernel regressors on $\mathbb{R}^d$ sharing the same feature map. Our regret functional balances the objects of predictive power on the new dataset against transfer learning from other datasets under an algorithmic stability penalty.  We show that an adversary which perturbs $q$ training pairs by at most $\varepsilon>0$, across all training sets, cannot reduce the regret-optimal algorithm's regret by more than $\mathcal{O}(\varepsilon q \bar{N}^{1/2})$, where $\bar{N}$ is the aggregate number of training pairs.  Further, we derive estimates of the computational complexity of our regret-optimal optimization algorithm.  Our theoretical findings are ablated on American option pricing problems using random feature models. 
\end{abstract}

\section{Introduction}
\label{s:Intro}



In transfer learning one is usually confronted with a focal task defined by a dataset $\mathcal{D}_1$ {drawn from a distribution $\mu_1$ on $\mathbb{R}^{d+1}$} and additional (more or less) related datasets $\mathcal{D}_2,\dots,\mathcal{D}_N$ {, where each $\mathcal{D}_i$ is drawn from (possibly different) distributions $\mu_i$ on $\mathbb{R}^{d+1}$.}  
The goal is to maximize the performance of a model $f_{\theta}:\mathbb{R}^d\to \mathbb{R}$ on $\mathcal{D}_1$, by optimizing the parameters $\theta \in \mathbb{R}^p$ with the possibility to leverage the knowledge (about the focal task) that is enclosed in the additional datasets $\mathcal{D}_2,\dots,\mathcal{D}_N$.
{
The performance of a model is quantifies using a user-specified $L_{\ell}$-Lipschitz loss function $\ell:\mathbb{R}\times \mathbb{R}\to \mathbb{R}$; with the intent being to minimize the out-of-sample loss on $\mu_1$ of any learner $f_{\theta}:\mathbb{R}^D\to \mathbb{R}$, as quantified by its \textit{true risk} $\mathcal{R}(f_{\theta})$. In practice, we minimize a \textit{weighted} version of the empirical risk $\hat{\mathcal{R}}_{w}^{\mathcal{D}}(f_{\theta})$ proportionally accounting for each dataset, using a \textit{mixture coefficient} $w\in \Delta_N\eqdef \{u\in[0,1]^N:\,\sum_{n=1}^N\,u_n=1\}$.  These risks are defined by
\[
        \mathcal{R}(f_{\theta})
    \eqdef 
        \mathbb{E}_{(X,Y)\sim \mu_1}\big[
            \ell(f_{\theta}(X),Y)
        \big] 
        \] 
and
\[        \hat{\mathcal{R}}_{w}^{\mathcal{D}}(f_{\theta})
    \eqdef 
        \sum_{i=1}^N\,
            \frac{w_i}{|\mathcal{D}_i|}
            \,
            \sum_{j=1}^{|\mathcal{D}_i|}
                \ell(f_{\theta}(X_j^i),Y_j^i)
.
\]
Our first main result (Theorem~\ref{thrm:transfer_bound}) is a \textit{statistical guarantee}, which shows that the following choice of weights $w^{\star}=(w^{\star})_{i=1}^N$ minimizes the worst-case generalization gap between the true risk and the weighted empirical risk of any $L$-Lipschitz learner (for a given $L\ge 0$).
\begin{equation}
\label{eq:Optimal_Weights}
    w^{\star}_i
    =
    \frac{
        e^{
        -
        \gamma\,\mathcal{W}_1(\hat{\mu}_1,\hat{\mu}_i)
        -
        \frac{\gamma}{|\mathcal{D}_i|^{1/(d+D)}}
        }\,
        I_{\mathcal{W}_1(\hat{\mu}_1,\hat{\mu}_i)\le \eta}
    }{
        \sum_{u=1}^N\,
        e^{-\gamma\mathcal{W}_1(\hat{\mu}_1,\hat{\mu}_u)
        -\frac{\gamma}{|D_u|^{1/(d+D)}}
        }\,
            I_{\mathcal{W}_1(\hat{\mu}_1,\hat{\mu}_u)\le \eta},
    }
\end{equation}
where $\mathcal{W}_1$ is the $1$-Wasserstein distance and $
\hat{\mu}_i\eqdef \frac{1}{|\mathcal{D}_i|}\,\sum_{(x,y)\in \mathcal{D}_i}\,\delta_{(x,y)}$is the empirical distribution associated to the $i^{th}$ dataset, and $\eta\ge 0$ \textit{sparse thresholding} hyperparameter controlling how similar a dataset must be to the focal task's dataset $\mathcal{D}_1$ to be considered.
Moreover, the weighted empirical risk is a consistent estimator of the true risk, converging at the non-parametric rate of $\mathcal{O}\big(N^{-1/(d+1)}\big)$.
}


{
Given the optimal mixture weights, defining the optimal weighted empirical risk for the general transfer learning problem, one may ask: 
\textit{How does one select optimal parameters $\theta$ given $w^{\star}$ as efficiently as possible?}
We will answer this \textit{meta-optimization} question in the specialized case where $f_{\theta}$ are kernel ridge regressors, the loss function $\ell$, and all the data generating distributions are compactly supported.  
}
Our \textit{meta-optimization} problem is thus to select, amongst a class of deterministic iterative optimization algorithms, those algorithms which minimize a fixed performance metric along their path.  We tailor our analysis to the following transfer learning problem.  

We identify deterministic algorithms defining and jointly optimizing parameters  $ \Theta(t) \eqdef (\theta_1^\top(t),\dots,\theta_N^\top(t))^\top$ associated with each task $\mathcal{D}_i$, focusing on the focal task $\mathcal{D}_1$. We define the parameter trajectories $\Theta^{0\cdots T}\eqdef \left\{\Theta(t)\right\}_{t=0}^T$ in $(\mathbb{R}^{Np})^T$, and $\theta_i^{\star}$  to be the greedy parameter choices optimizing the empirical risk \textit{calculated only over the $i^{th}$ dataset}.  All algorithms will be initialized at $\Theta(0)\eqdef \Theta^\star \eqdef (\theta_1^{\star},\dots,\theta_N^{\star})\in \mathbb{R}^{Np}$ and are terminated after $T\in \mathbb{N}_+$ iterations.
Building on~\cite{casgrain2019latent,casgrain2021optimizing}, given a $w\in \Delta_N$, we use the following \textit{regret}-based criterion to quantify the performance of an algorithm 
\allowdisplaybreaks 
\begin{align}
\label{eq:penalized_regret_functional__REGRETFORM}
    \mathcal{R}(\Theta^{0\cdots T})
\eqdef &
        \underbrace{
                \sum_{i=1}^N 
                w_i
                    \sum_{j=1}^{|\mathcal{D}_i|} 
                        (
                        f_{\theta^w(T)}(x_j^i)
                        - y^i_j  
                        )^2
        }_{\text{Performance: Weighted Empirical Risk}}
    -
                  \,\,  l^{\star}  
 \notag   \\
&     +     \sum_{t=0}^{T-1}  
        \,
             \big(
                    \underbrace{
                        \lambda \left\| \Theta(t+1) - \Theta^\star \right\|_2^2    
                    }_{\text{Transfer Learning Level}}
                + 
                    \underbrace{
                        \beta  \left\| \Delta \Theta(t) \right\|_2^2 
                    }_{\text{Stability}}
                \big) , 
\end{align}  
where the hyperparameter $\lambda\ge 0$ controls how far the algorithm's iterates can deviate from the pre-trained model parameters $\Theta^{\star}$, the hyperparameter $\beta\ge 0$ controls the stability of the training algorithm by penalizing rapidly changing increments $\Delta \Theta(t)\eqdef \Theta(t)-\Theta(t-1)$, where $t\ge 1$, and where $l^{\star}\eqdef \min_{\Theta \in \mathbb{R}^{Np}}\, \sum_{i=1}^N 
w_i
    \sum_{j=1}^{|\mathcal{D}_i|} 
        (
        f_{\theta^w}(x_j^i)
        - y^i_j  
        )^2
$ captures the achievable error across all tasks.  
In this paper, we select the optimal weights $w$ with Algorithm~\ref{alg:RegretOptimizationHeuristic__WarmStart}, using a procedure similar to the PAC-Bayesian approach of~\cite{alquier2021user}, often used in meta-learning~\citep{PACOH_rothfuss_2021,pavasovic2022mars,rothfuss2023scalable}.

{Our main motivation for this setting stems from the now typical setting, where one has access to a pre-trained foundation model, whose final linear layer is to be fine-tuned on several different datasets.  This leads to pre-trained deep learning models $f_{\theta_1}^{\star},\dots,f_{\theta_N}^{\star}$ whose hidden layers all coincide}.  In this case, $f_{\theta}$ factors as a kernel regressor where its \textit{frozen} hidden layers define a feature map, and the final linear layer is the only-trainable parameter.  For analytical tractability, we require that each neural network has the same frozen hidden layers.  
Thus, all our models are finite-rank kernel ridge regressors of the form
\begin{equation}
\label{eq:FRKR}
f_{\theta}(x)=\phi(x)^\top\theta
.
\end{equation}
In reservoir computing \citep{lukovsevivcius2009reservoir,grigoryeva2018echo} and randomized neural networks \citep{gonon2022approximation}, this assumption is standard since random reservoirs/feature maps are used. 
Moreover, in settings where each neural network has a different set of hidden layers, one can achieve the representation~\eqref{eq:FRKR} simply by concatenating each of the frozen hidden layers into a joint feature map.  Thus, the structural condition~\eqref{eq:FRKR} is made without any loss of generality. 

\subsection{Contributions}
\label{s:Introduction_Contrib}
{In Theorem~\ref{thrm:transfer_bound} we obtain a guarantee that the weighting scheme in~\eqref{eq:Optimal_Weights} (and implemented by Algorithm~\ref{alg:RegretOptimizationHeuristic__WarmStart}) minimizes an upper-bound on the \textit{worst-case} gap between the empirical risk for the focal dataset $\mathcal{D}_1$ and the (optimal) \textit{weighted} empirical risk over all datasets.  We emphasize that the first result is fully-general and holds for arbitrary deep learning models optimizing a general Lipschitz loss, just as much as it does for specialized kernel ridge regressors optimizing the MSE loss.}  

We present a \textit{regret optimal} algorithm, by which we mean an optimizer of~\eqref{eq:penalized_regret_functional__REGRETFORM}.  Aligned with the recent trends in optimization \cite{casgrain2019latent,JMLR:v18:17-653}, we leverage techniques from optimal control theory to characterize the regret-optimal algorithm.  We show that this is possible due to the observation that minimizers of~\eqref{eq:penalized_regret_functional__REGRETFORM} coincide with the minimizers of the energy functional 
\allowdisplaybreaks\begin{align} 
\label{eq:penalized_regret_functional}
    {\cal L}(\Theta^{0\cdots T})
\eqdef 
&        \sum_{t=0}^{T-1}  
        \, 
    \underbrace{
             \left[  
                    \lambda \left\| \Theta(t+1) - \Theta^\star \right\|_2^2    
                + 
                    \beta \left\| \Delta \Theta(t) \right\|_2^2 
            \right] 
    }_{\text{
        Running-Cost
    }} 
 + 
    \underbrace{
        l (\Theta(T), w;\,\mathcal{D} ) 
    }_{\text{
    Terminal Cost
    }} ,  
\end{align} 
where we abbreviate $
l(\Theta, w;\,\mathcal{D} )  
\eqdef 
\sum_{i=1}^N 
w_i
    \sum_{j=1}^{|\mathcal{D}_i|} 
        (
        f_{\theta^w}(x_j^i)
        - y^i_j  
        )^2$.
From this control-theoretic perspective, every algorithm can be regarded as driven by a ``control'', and the functional ${\cal L}$ has the form of a ``cost function'' of an optimal control problem,  decomposable into the sum of a ``running/operational cost'' and a ``terminal/objective cost''.  This allows us to leverage tools from optimal control to derive a closed-form expression of the unique regret-optimal algorithm driven by the unique ``optimal control'' (Algorithm~\ref{alg:RegretOptimization}).  

We prove that this procedure is adversarially robust, meaning that the value of~\eqref{eq:penalized_regret_functional} varies by $\mathcal{O}(\varepsilon \sqrt{\bar{N}}q)$ if a malicious adversary can perturb at most $1/q$ percent of all the data in each of the dataset $\mathcal{D}_i$ by at-most $\varepsilon\ge 0$, where $\bar{N}=\sum_{i=1}^N |\mathcal{D}_i|$.  
Furthermore, we show that our regret-optimal algorithm has a computational complexity of $\mathcal{O}(N^2p^3 + T\,(Np)^{2.373})$.  
Moreover, we characterize the optimal choice of the weights $w$ 
which the central planner must use to maximize the selected model's performance on the primary/focal dataset $\mathcal{D}_1$.  

\subsection*{Outline of Paper}   
In Section~\ref{s:Main}, we introduce the regret-optimal algorithm and demonstrate the optimality, computational complexity, and adversarial robustness of the algorithm. We  further introduce an accelerated algorithm of lower computational complexity that can achieve near-regret optimality. 
In Section~\ref{s:Experiments_n_Validation}, we conduct a convergence analysis of the regret-optimal algorithm and compare it to the standard gradient descent approach for American option pricing in quantitative finance.   

A game theoretic interpretation of $\mathcal{R}$ is given in Appendix~\ref{s:GameTheoreticInterpretation}. Appendix~\ref{s:Proofs_TechnicalLemmata} is devoted to proving the optimality, adversarial robustness, and computational complexity of the regret-optimal algorithm. The near-regret optimality and computational complexity of the accelerated algorithm are also proven. Appendix~\ref{a:ExperimentDetails} contains additional details on all experimental hyperparameters and longer tables for our ablation study.  Appendix~\ref{a:Recap:AOP_OS} includes background on American option pricing in quantitative finance and on finite-rank kernel ridge regressors.   

\vspace{-.5em}
\subsection{Related Work}
\label{s:Introduction__sss:RelatedWork}

Examples of $f_\theta$ of the form \eqref{eq:FRKR} include extreme learning machine \citep{gonon2020risk,gonon2022approximation,ghorbani2022linearized,MeiMontanari_CommPaAMath_2022__DoubleDescent} or any finite-rank kernel ridge regressor (fKRR) \citep{Amini_2021_EJS_SpectrallyTruncatedKernel,amini2022target}.  Random feature models, and finite-rank ridge regressors, are typical in contemporary quantitative finance \citep{gonon2021random,herrera2023optimal} or reservoir computing \citep{cuchiero2021discrete}, where it is favourable to rapidly deploy a deep learning model due to time or computing-power limitations/constraints~\citep{compagnoni2023effectiveness}.  Finite-rank kernel ridge regressors are also typical in the setting of standard transfer \citep{howard2018universal} and multi-task learning \citep{ren2018cross,jia2019leveraging} pipelines where one often has a pre-trained deep neural network model and, for any novel task, the user freezes the hidden layers in the deep neural network model and only fine-tunes/trains the deep neural network's final linear layer.  
This is useful since the hidden layers of most deep neural network architectures are known to process general, cruder features, and the final layer of any such model decodes these features in a task-specific manner~\citep{zeiler2014visualizing,yosinski2014transferable}.

The optimization of a model trained from multiple data sources lies at the heart of multi-task learning.  In  \cite{SenerKoltun_MultiTaskLEarningMultiObjectiveOptimization__NeurIPS2018}, the authors cast the task of optimizing~\eqref{eq:penalized_regret_functional__REGRETFORM} with $T=0=\lambda=\beta$ as a multi-objective optimization problem.  They propose to solve it via a multi-gradient descent algorithm which, under mild conditions, converges to a Pareto stationary point; i.e.\ which is essentially a critical point of each of the $i$-th player's penalized SSE.  
Though iterative schemes for multi-objective optimization can efficiently reach critical points \citep{fliege2019complexity}, this problem's critical point is not necessarily a minimum since the problem~\eqref{eq:penalized_regret_functional__REGRETFORM} is not convex when $w$ and $\Theta$ are jointly optimizable.   
The regularization hyperparameter $\lambda$ in 
\eqref{eq:penalized_regret_functional__REGRETFORM} controls the level of leveraging the information from the source tasks and has been studied in~\citep{pmlr-v28-kuzborskij13, 
IJCAI16-WangOliviaSchneiderPoczos, 
tian2023learningsimilarlinearrepresentations, 
lin2024smoothnessadaptivehypothesistransfer}.

This proposed algorithm reflects the ideas of the federated stochastic gradient descent (FedSGD) algorithm of \cite{SINGHZHU_FedSGD_2018} and of its online counterpart FedOGD which is an online federated version of online gradient descent\footnote{See \cite[Chapter 3]{hazan2016introduction} for details on online gradient descent} and more efficient online versions thereof such as \cite{kwon2022tighter}.  From the federated learning perspective, the problem of training individual fKRR on each dataset and then centralizing them by optimizing the mixing parameter $w$ in~\eqref{eq:penalized_regret_functional__REGRETFORM} and its origins date back to local gradient descent introduced in the optimization and control literature in \cite{Mangasarian_1995_localGD}.  Though several authors (e.g. \cite{TighterTheoryLocalSGD}) have proposed various progressive improvements of federated gradient-descent-type algorithms for training a learner from several data sources, the problem of studying \textit{the} most efficient iterative federated learning algorithm for regret minimization has not yet been tackled.  

We note that (stochastic) optimal control tools have revealed new insights into machine learning.  Examples include an optimal step-size control \citep{li2017stochastic} and batch-size control \citep{zhao2022batch}, developing online subgradient methods that adapt to the data's observed geometry \citep{DuchiHazanSinger_2011_JMLR_adaptiveSGOnlineOptim}, developing new maximum principle-based training algorithms for deep neural networks \citep{JMLR:v18:17-653}, or unifying (stochastic) optimization frameworks \citep{Casgrain_LatentVarStochOptim_2019}.

Our experiments focus on quantitative finance problems.   We thus add to the emerging literature, e.g.~\cite{kraus2017decision,jeong2019improving,cao2023transfer}, recognizing the potential of transfer learning in mathematical finance, especially in situations where data is scarce.

\vspace{-.5em}
\subsection*{Notation}
We introduce some additional notation.
Each dataset has a finite number of samples which we write as $\mathcal{D}_i = \{ (x_j^i, y_j^i) \, | \, 1 \leq j \leq |\mathcal{D}_i| \}$. We assume that some feature map $\phi$ is fixed and denote the features of each sample of $\mathcal{D}_i$ as 
$u_j^i \eqdef \phi(x_j^i).$
We denote the joint inputs and outputs for each dataset as $
    X^i \eqdef ( 
        x^i_1, \cdots , x^i_{|\mathcal{D}_i|}
    )^\top \in \mathbb{R}^{|\mathcal{D}_i| \times d} 
$, $
    Y^i \eqdef (y^i_1, \cdots, y^i_{|\mathcal{D}_i|})^\top \in \R^{|\mathcal{D}_i|}
$, and the joint features as $
U^i \eqdef \Phi(X^i) \eqdef 
( 
    \phi(x^i_1), \cdots , \ 
    \phi(x^i_{|\mathcal{D}_i|}) 
 )^\top \in \mathbb{R}^{|\mathcal{D}_i| \times p}. $
We use $I_p$ to denote a $p\times p$ identity matrix. 

\vspace{-.5em}
\section{Algorithms}
\label{s:Main_ss:Algorithm}

We begin by describing our procedure for determining the optimal mixture weights used in ~\eqref{eq:penalized_regret_functional__REGRETFORM}.  This procedure is detailed in Algorithm~\ref{alg:RegretOptimizationHeuristic__WarmStart}.

\begin{algorithm}[ht!]
\caption{{Optimal Information Sharing - Initialization to Algorithm~\ref{alg:RegretOptimization}}}
\label{alg:RegretOptimizationHeuristic__WarmStart}
\begin{algorithmic}
\SetAlgoLined
\Require Datasets $\mathcal{D}_1,\dots,\mathcal{D}_N$, finite-rank kernel $\phi$, hyperparameters\footnote{Different ridge hyperparameters $\kappa$ can be used to define each regressor in Algorithm ~\ref{alg:RegretOptimizationHeuristic__WarmStart}} $\kappa,\eta>0$.
\DontPrintSemicolon
    \State \tcp{Initialize Locally-Optimal Learners and Record Scores on Main Dataset}  
    \State \SetKwBlock{ForParallel}{For $i:1,\dots,N$ in parallel}{end}
        \ForParallel{
        \State $s^i
            \leftarrow  
            \gamma\,\mathcal{W}_1(\hat{\mu}_1,\hat{\mu}_i)
            -
            \frac{\gamma}{|\mathcal{D}_i|^{1/(d+1)}}
        $
        \tcp*{Score $i^{th}$ dataset}
        \State 
        $T^i
        \leftarrow
        I_{s^i\le \eta}
        $
        \tcp*{Test: Is Dataset $i$ Similar to Focal Dataset?}
        } 
\State 
$
w^{\star}\leftarrow \operatorname{Softmin}\big(
s
\big)^{\top}\,
T
$

    \State \Return Optimal Weights $w^\star$
    .
\end{algorithmic}
\end{algorithm}

Suppose now that we have used Algorithm \ref{alg:RegretOptimizationHeuristic__WarmStart} to compute weights $w^\star=(w^\star_1, \cdots, w^\star_N)$ and the locally optimal fKRR parameters $\theta^\star_1$, ..., $\theta^\star_N$, and $w^{\star}$. The key novelty in the next algorithm follows a similar spirit to \cite{JMLR_MaximumBasedDL_LiChenTaiCheng} and \cite{casgrain2021optimizing}, by casting the problem of constructing a regret-optimal algorithm as an optimal control problem. 
We denote 
\allowdisplaybreaks\begin{align} 
 & \Theta(t)\eqdef [ \theta_1^{\top}(t), \cdots, \theta_N^{\top}(t) ]^{\top}, \quad 
 \Theta^\star \eqdef [ \theta^{\star \top}_1, \cdots, \theta^{\star \top}_N ]^{\top} , 
 \notag \\
 & \boldsymbol\alpha(t) \eqdef [\alpha_1^{\top}(t), \cdots, \alpha_N^{\top}(t)]^{\top}, \notag   
\end{align} 
which are all elements of $\R^{Np}$, and introduce the dynamics that $\Theta$ follows    \allowdisplaybreaks\begin{align} 
& \Theta(0)=\Theta^\star, 
\quad 
\Theta(t+1) = \Theta(t) + \boldsymbol\alpha(t) , 
\quad 0\leq t \leq T-1 ,   
\label{Theta-simple}  
\end{align} 
which is equivalently written as 
$
 \Theta(t+1) = \Theta^\star + \sum_{u=0}^{t} \boldsymbol\alpha(u) 
$ for $t=0,\dots,T-1$.

Then, since the optimizers of the energy~\eqref{eq:penalized_regret_functional} coincide with the optimizers of the systemic regret functional~\eqref{eq:penalized_regret_functional__REGRETFORM}, searching for a regret-optimal algorithm reduces to solving for an optimal control $\mathbf{\alpha}$ to minimize the cost 
\eqref{eq:penalized_regret_functional} subject to \eqref{Theta-simple}.  
The optimal control problem 
\eqref{eq:penalized_regret_functional} and \eqref{Theta-simple} is a discrete-time linear quadratic optimal control problem, and can be solved by dynamic programming methods that are standard in the literature. For instance, by \cite[Proposition 5.1]{BO1998}, we obtain the solution for the regret-optimal algorithm in closed-form as solutions of a system of Riccati equations \eqref{eqnP(t) 0} and \eqref{eqnS(t) 0}, given in
Theorem~\ref{thm:RegretOptimal_Dynamics__MainTextFormulation}. 

Algorithm~\ref{alg:RegretOptimization} below implements the optimal control approach in two steps. In Step 1, we solve the equation system \eqref{eqnP(t) 0} and \eqref{eqnS(t) 0} backwards for $t=T$, $T-1$, ..., $1$. Then in Step 2, we obtain the optimal control $\boldsymbol\alpha(t)$ and update the parameter $\Theta(t)$ for $t=0$, $1$, ... $T$. 

\begin{algorithm}[ht]
\caption{Regret-Optimal Optimization Algorithm}
\label{alg:RegretOptimization}
\begin{algorithmic}
\SetAlgoLined
\Require Datasets $\mathcal{D}_1,\dots,\mathcal{D}_N$, $N$.\ Iterations $T\in \mathbb{N}_+$, finite-rank kernel $\phi$ \\ and hyperparameters $\lambda, \beta, \kappa, \eta >0$.
\DontPrintSemicolon
    \State 
    \tcp{Get Initialize Weights and Locally-Optimal fKRR Parameters}
        \State $\theta^\star_1,\dots,\theta^\star_N,w^{\star}$ $\leftarrow$ Run: Algorithm~\ref{alg:RegretOptimizationHeuristic__WarmStart} with $\mathcal{D}_1,\dots,\mathcal{D}_N$, $\phi$, and $\kappa,\eta$.
   
    \State 
    \tcp{Initialize Updates}
    \State $P(T) 
    = [w_1^\star I_p , \cdots, w_N^\star I_p ]^\top \big(\sum_{i=1}^N w^\star_i \sum_{j=1}^{|\mathcal{D}_i|}  u^i_j u_j^{i\top}
    \big) [w_1^\star I_p, \cdots, w_N^\star I_p ] $ 

\State $S(T) = -[w_1^\star I_p , \cdots, w_N^\star I_p ]^\top \sum_{i=1}^N w^\star_i \sum_{j=1}^{|\mathcal{D}_i|}  u^i_j y^i_j$ 

\State $\Theta(0) = \Theta^\star \eqdef (\theta_1^{\star \top}, \cdots, \theta_N^{\star \top} )^\top$ 

    \State \tcp{Generate Iterates}
    \State \For{$t=T-1,\dots,1$}{
        \State \tcp{Update Driving Parameters} 

\State $P(t)= \beta I_{Np} - \beta^2 [(\lambda+\beta)I_{Np} + P(t+1)]^{-1}$

\State $S(t) = \beta [(\lambda + \beta )I_{Np} + P(t+1)]^{-1} (S(t+1) - \lambda \Theta^\star)$
}

\State\For{$t=0, ..., T-1$}{
  \State\tcp{Update Control} 

\State 
$\boldsymbol{\alpha}(t) = -[(\lambda+\beta )I_{Np} + P(t+1)]^{-1} 
[(\lambda I_{Np} + P(t+1))\Theta(t) - \lambda \Theta^\star + S(t+1) ]$

\State 
$\Theta(t+1)=\Theta(t)+\boldsymbol{\alpha}(t)$
}

\State \Return Return the Optimized fKRR $f_{\theta^{w^\star}}$
\end{algorithmic}
\end{algorithm} 

\begin{remark}
By decoupling the optimization of the ``dataset-relevant weights'' $w$ in Algorithm~\ref{alg:RegretOptimizationHeuristic__WarmStart} from the optimization of the parameter $\theta$ in Algorithm ~\ref{alg:RegretOptimization}, we avoid open-loop forward-backward iterations, which iterate between optimizing $w$ and $\theta$. 
\end{remark}

In the next section, we state our primary results, whose proofs are presented in Appendix \ref{s:Proofs_TechnicalLemmata}. We first quantify the optimality of Algorithm~\ref{alg:RegretOptimizationHeuristic__WarmStart}. Next, we establish the optimality, complexity, and adversarial robustness of Algorithm~\ref{alg:RegretOptimization}.  

\section{Main Guarantees}
\label{s:Main}

We now present our theoretical guarantees for Algorithms~\ref{alg:RegretOptimizationHeuristic__WarmStart} and~\ref{alg:RegretOptimization}.

\subsection{{Statistical Guarantees - Algorithm~\ref{alg:RegretOptimizationHeuristic__WarmStart}}}

Assume that either $d$ or $D$ are greater than $1$.  
Let $\mathcal{Z}\subset \mathbb{R}^d\times \mathbb{R}^D$ be compact of diameter $1$ (with respect to the Euclidean metric).
Let $\mu_1,\dots,\mu_N\in \mathcal{P}_1(\mathcal{Z})$, $\mathcal{D}_1,\dots,\mathcal{D}_N\in \mathbb{N}_+$, let $\mathcal{D}_i\eqdef \{(X^i_j,Y^i_j)\}_{j=1}^{|\mathcal{D}_i|}$ and 
$\mathcal{D}\eqdef \{\mathcal{D}_i\}_{i=1}^{N}$ be independent random variables where, for $i=1,\dots,N$, $
(X^i_j,Y^i_j)\sim \mu_i$, and consider the empirical measure
\[
        \hat{\mu}_i
    \eqdef 
        \frac1{|\mathcal{D}_i|}\, \sum_{j=1}^{|\mathcal{D}_i|}\, \delta_{(X^i_j,Y^i_j)}
.
\]
Let $\ell:\mathbb{R}^D\times \mathbb{R}^D\to \mathbb{R}$ be a $L_{\ell}$-Lipschitz loss function.
For every measurable $f:\mathbb{R}^d\to \mathbb{R}^D$ and every $w\in \Delta_N$, we consider its associated \textit{weighted} empirical $\hat{\mathcal{R}}_{w}^{\mathcal{D}}(f)$ and true risk $\mathcal{R}(f)$
\[
        \mathcal{R}(f)
    \eqdef 
        \mathbb{E}_{(X,Y)\sim \mu_1}\big[
            \ell(f(X),Y)
        \big] 
\,\,\mbox{ and }\,\,
        \hat{\mathcal{R}}_{w}^{\mathcal{D}}(f)
    \eqdef 
        \sum_{i=1}^N\,
            \frac{w_i}{|\mathcal{D}_i|}
            \,
            \sum_{j=1}^{|\mathcal{D}_i|}
                \ell(f(X_j^i),Y_j^i)
.
\]
Note that, if $w=e_1\eqdef (1,0,\dots,0)$, then $\hat{\mathcal{R}}_{w}^{\mathcal{D}}(f)$ is precisely the definition of the usual empirical risk of $f$ computed \textit{only} on the focal dataset $\mathcal{D}_1$. 

For $i=1,\dots,N$, define the \textit{true similarity score} of the $i^{th}$ dataset  
\begin{equation}
\label{eq:true_scored_defined}
        s_i
    \eqdef 
        \mathcal{W}_1(\mu_1,\mu_k).
\end{equation}
\begin{theorem}[Non-Asymptotic Transfer Learning Guarantee (General Lipschitz Learners)]
\label{thrm:transfer_bound}
Fix $L\ge 0$, let $\mathcal{F}$ be a non-empty family of $L$-Lipschitz functions mapping $\mathbb{R}^d$ to $\mathbb{R}^D$.  There exists a constant $C\ge 1$ (depending only on $d+D$ and on $\mathcal{Z}$) such that: for every $0< \delta \le 1$, each $\eta\ge 0$, $\gamma>0$ and every $w\in \Delta_N$, the following holds with probability at-least $1-\delta$
\begin{equation}
\label{eq:Bound_defining_Algo_1_loss}
\resizebox{0.91\hsize}{!}{$%
\sup_{f\in \mathcal{F}_L}
\,
    \big|
            \mathcal{R}(f)
        -
            \hat{\mathcal{R}}_{w}^{\mathcal{D}}(f)
    \big|
\le 
        \bar{L}C\,
        \underbrace{
            \overbrace{
                \sum_{i=1}^N
                    w_i
                    \,
                    \biggl(
                           s_i
                        +
                        \frac{1}{|\mathcal{D}_i|^{1/(d+D)}}
                    \biggr)
            }^{
            \text{Power of i$^{th}$ Dataset}
            }
            +
            \frac{\bar{L}C\,}{\gamma}
            \overbrace{
            \sum_{i=1}^N\,
                w_i \,
                \log\Big(
                    C_{\eta}\, 
                    \frac{w_i}{
                        I_{\mathcal{W}_1(\hat{\mu}_1,\hat{\mu}_i)\le \eta}
                    }
                \Big)
            }^{\text{Entropic Penalty:} 
                \operatorname{KL}\big(
                    \mathbb{P}_w
                \|
                    \bar{\mathbb{P}}^{\eta}
                \big)
            }
        }_{\text{Bayesian Objective } \term{t:objective}}
    +
        \underbrace{
            \frac{
                \sqrt{
                    \bar{L}\,
                    \ln(2 N /\delta)
                }
            }{
                \sqrt{2N^{\star}}
            }
        }_{\text{PAC Term } \term{t:high_prob_convergence_term}}
$}%
\end{equation}
where $\bar{L}\eqdef 
L_{\ell} \max\{1,L\}$,  
$
\operatorname{KL}(\mathbb{P}_w\|\bar{\mathbb{P}}^{\eta})
=
\sum_{i=1}^N\,
            \frac{
                I_{\mathcal{W}_1(\hat{\mu}_1,\hat{\mu}_i)\le \eta}
            }{C_{\eta}}
        \,
        \delta_i
$, and $C_{\eta}\eqdef \sum_{u=1}^N\,
                    I_{\mathcal{W}_1(\hat{\mu}_1,\hat{\mu}_u)\le \eta}
$.
\\
If $w$ and $\eta$ are chosen such that $\mathbb{P}_w\ll \bar{\mathbb{P}}^{\eta}$ and $
|w-e_1|,
\eta
\operatorname{KL}\big(
    \mathbb{P}_w
\|
    \bar{\mathbb{P}}^{\eta}
\big)
\in
\mathcal{O}\big(\frac{1}{
    {N^{\star}}^{1/(d+D)}
}\big)
$ then 
\begin{equation*}
\label{eq:Bound_defining_Algo_1_loss}
\sup_{f\in \mathcal{F}_L}
\,
    \big|
            \mathcal{R}(f)
        -
            \hat{\mathcal{R}}_{w}^{\mathcal{D}}(f)
    \big|
\in
\mathcal{O}\big(1/
    {N^{\star}}^{1/(d+D)
}\big) .
\end{equation*}
\end{theorem}

In Theorem~\ref{thrm:transfer_bound} term~\eqref{t:objective} could be optimized whereas~\eqref{t:high_prob_convergence_term} did not depend on the weights $w\in \Delta_N$; thus it could not.  Now, even if the scores $\{s_i\}_{i=1}^N$ are not directly observable, thus term~\eqref{t:objective} is not directly computable from the data, the main result of~\cite{MR4700254} shows that in our setting the \textit{empirical scores} $\{\hat{s}_i\}_{i=1}^N$ converge to the actual scores as the number of data points in each dataset becomes arbitrarily large; where for each $i\in [N]$
\[
        \hat{s}_i
    \eqdef 
        \mathcal{W}_1\big(
            \hat{\mu}_1,\hat{\mu}_i
        \big) . 
\]
Moreover, the main result of~\cite{MR4700254} further guarantees that each $\hat{s}_i$ converges to $s_i$ at-least at the optimal rate of $\mathcal{O}(1/\max\{|\mathcal{D}_1|,|\mathcal{D}_i|\}^{1/(d+D)})$.  Consequentially, minimizing the following objective will minimize~\eqref{t:objective} and thus, it will minimize the upper-bound in Theorem~\ref{thrm:transfer_bound} for the gap between the true risk of any $L$-Lipschitz model (such as any aggregated model) and the weighted empirical risk.  This motivates our performance objective
\begin{equation}
\begin{aligned}
\label{eq:performance_functional}
 &  \min_{w\in \Delta_N}
    \,
    \bigg\{ 
    \underbrace{ 
            \sum_{i=1}^N
            w_i
            \,
            \Big(
                \mathcal{W}_1\big(
                    \hat{\mu}_1,\hat{\mu}_i
                \big)
            +
                \frac{1}{|\mathcal{D}_i|^{1/(d+D)}}
            \Big) 
        }_{\term{t:Objective_Terminal}}      
  +
        \underbrace{ 
            \frac{1}{\gamma}\,
            \operatorname{KL}\big(
                \mathbb{P}_w
            \|
                \bar{\mathbb{P}}^{\eta}
            \big)
        }_{\term{t:EntropicRegularizer_Terminal}} 
        \bigg\} 
\end{aligned} 
\end{equation} 
where we treat $C,\eta>0$ as a hyperparameters.

Let $\mathcal{P}([N])$ consist of all probability measures on the set $[N]\eqdef \{1,\dots,N\}$.  Any probability measure $\mathbb{P}_w$ in $\mathcal{P}([N])$ is uniquely determined by a weight $w$ in the $N$-simplex; consisting of all $w\in [0,1]^N$ whose entries sum to $1$.  Let $I_w$ be a random dataset index in $[N]$ distributed according to $\mathbb{P}_w\eqdef \sum_{i=1}^N\, w_i \delta_i$.  The objective function in~\eqref{eq:performance_functional} is the following \textit{Bayesian optimization problem}
\begin{equation}
\begin{aligned} 
\label{eq:optimizingQ}
  &   \hspace{-0.1cm}   \min_{w\in \Delta_N}
    \,
      \bigg\{  \underbrace{
            \mathbb{E}_{I_w\sim \mathbb{P}_w}
            \Big[
                \Big(
                    \mathcal{W}_1\big(
                        \hat{\mu}_1,\hat{\mu}_{I_w}
                    \big)
                +
                    \frac{1}{|\mathcal{D}_{I_w}|^{1/(d+D)}}
                \Big)
            \Big]
        }_{\term{t:Objective_Terminal}} 
   +
        \underbrace{
            \frac{1}{\gamma}
            \,
            \operatorname{KL}\big(
                \mathbb{P}_w
            \|
                \bar{\mathbb{P}}^{\eta}
            \big)
        }_{\term{t:EntropicRegularizer_Terminal}} 
        \bigg\} .
\end{aligned}
\end{equation}
A direct application of~\citep[Proposition 1]{WangHyndmanKratsios_EMT_2020} shows that there exists a unique optimizer $w^{\star}\in \Delta_N$ of the strictly convex problem~\eqref{eq:optimizingQ} and for each $i=1,\dots,N$ it is given by
\[
    w^{\star}_i
    =
    \frac{
        e^{
        -
        \gamma\,\mathcal{W}_1(\hat{\mu}_1,\hat{\mu}_i)
        -
        \frac{\gamma}{|\mathcal{D}_i|^{1/(d+D)}}
        }\,
        I_{\mathcal{W}_1(\hat{\mu}_1,\hat{\mu}_i)\le \eta}
    }{
        \sum_{u=1}^N\,
        e^{-\gamma\mathcal{W}_1(\hat{\mu}_1,\hat{\mu}_u)
        -\frac{\gamma}{N_u^{1/(d+D)}}
        }\,
            I_{\mathcal{W}_1(\hat{\mu}_1,\hat{\mu}_u)\le \eta}
    }
.
\]
We emphasize that Theorem~\ref{thrm:transfer_bound} is fully general and equally holds for complicated learners such as deep neural networks optimizing arbitrary loss functions, just as much as it holds for our specialized case of kernel ridge regressor case optimizing the MSE loss.
\subsection{Discussion: Optimal Weighting}
Notice, that, as $\eta\to 0$ then $w_i^{\star}=I_{1=i}$ and as $\eta\to \infty$ then $w_i$ tends to the ``unthresheld soft\textit{min} function'' applied to the vector of ``dataset relevances'' $\big(
\mathcal{W}_1(\hat{\mu}_1,\hat{\mu}_i) + \mathcal{D}_i^{-1/(d+D)}
\big)_{i=1}^N$ with temperature parameter $\gamma$.
Let us note that term~\eqref{t:Objective_Terminal} in the Bayesian objective~\eqref{eq:optimizingQ} improves by a factor of 
\allowdisplaybreaks
\begin{align}
 & \hspace{-0.1cm}   \operatorname{T-Gain}_{\eta,\gamma}
    \eqdef  
    \sum_{i=1}^N\,
        \Biggl[
                \frac{
                    e^{
                    -
                    \gamma\,\mathcal{W}_1(\hat{\mu}_1,\hat{\mu}_i)
                    -
                    \frac{\gamma}{|\mathcal{D}_i|^{1/(d+D)}}
                    }\,
                    I_{\mathcal{W}_1(\hat{\mu}_1,\hat{\mu}_i)\le \eta}
                }{
                    \sum_{u=1}^N\,
                    e^{-\gamma\mathcal{W}_1(\hat{\mu}_1,\hat{\mu}_u)
                    -\frac{\gamma}{N_u^{1/(d+D)}}
                    }\,
                        I_{\mathcal{W}_1(\hat{\mu}_1,\hat{\mu}_u)\le \eta}
                }
        -
            I_{i=1}
        \Biggr]
        \,
        \Big(
            \mathcal{W}_1(\hat{\mu_1},\hat{\mu}_i)
            +
            \frac{1}{|\mathcal{D}_i|^{1/(d+D)}}
        \Big)
\end{align} 
when performing transfer learning using the optimal weights $w^{\star}$ rather than performing no transfer learning (i.e.\ weights $w=e_1$).  
The quantity $\operatorname{T-Gain}_{\eta,\gamma}$ in the above expression, thus, quantifies an upper bound on the gain from performing transfer learning using our optimal weighting scheme versus not performing any transfer learning.  

{
Setting $\gamma=0$ yields the equal weighting scheme $w_i^{\star}=1/N$ for each $i$ \textit{included in the transfer learning problem}.  In this case, the gain reduces to
\begin{equation} 
\begin{aligned} 
\label{eq:Weighting_equal}
&    \operatorname{T-Gain}_{\eta,0}
    =
    \sum_{i=1}^N\,
        \Biggl(
                \frac{
                    I_{\mathcal{W}_1(\hat{\mu}_1,\hat{\mu}_i)\le \eta}
                }{
                    \sum_{u=1}^N\,
                        I_{\mathcal{W}_1(\hat{\mu}_1,\hat{\mu}_u)\le \eta}
                }
            -
            I_{i=1}
        \Biggr) 
        \Big(
            \mathcal{W}_1(\hat{\mu_1},\hat{\mu}_i)
            +
            \frac{1}{|\mathcal{D}_i|^{1/(d+D)}}
        \Big) 
\end{aligned} 
\end{equation} 
and we see that $\eta$ controls how many datasets are included based on their similarity to the focal dataset $\mathcal{D}_1$ (encoded into the empirical measure $\hat{\mu}_1$).  Thus, any dataset $\mathcal{D}_i$ is deemed to be too dissimilar to the focal dataset $\mathcal{D}_1$ if its associated empirical measure $\hat{\mu}_i$ is at a distance of $\eta$ or greater from $\hat{\mu}_1$.  In this case, it is \textit{not included} into the transfer learning problem.
\hfill\\
For instance, in the extreme case where $\eta=0$, then~\eqref{eq:Weighting_equal} reduces to $0$ and no gain is made.  Conversely, if $\eta$ is large ($\eta\ge \eta^{\star}\eqdef \max_{i=1,\dots,N}\, \mathcal{W}_1(\hat{\mu}_1,\hat{\mu}_i)$ then,~\eqref{eq:Weighting_equal} simplifies to
\[ 
    \operatorname{T-Gain}_{\eta^{\star},0}
    =
    \sum_{i=1}^N\,
        \Big(
                \frac{
                    1
                }{
                    N
                }
            -
            I_{i=1}
        \Big)
        \,
        \Big(
            \mathcal{W}_1(\hat{\mu_1},\hat{\mu}_i)
            +
            \frac{1}{|\mathcal{D}_i|^{1/(d+D)}}
        \Big)
.
\] 
Finally, we remark that sending $\gamma$ to infinity in~\eqref{eq:optimizingQ}, implies that $w^{\star}$ tends to $e_1$ and $\lim_{\gamma \to 0} w^*_i = \frac{I_{\mathcal{W}_1(\hat{\mu}_1,\hat{\mu}_i)\le \eta}}{\sum_u I_{\mathcal{W}_1(\hat{\mu}_1,\hat{\mu}_u)\le \eta} }$.  In this case, no transfer can happen and thus $
    \lim\limits_{\gamma \to \infty}\, \operatorname{T-Gain}_{\eta,\gamma}=0
$.
}

\subsection{{Optimization Guarantees - Algorithm~\ref{alg:RegretOptimizationHeuristic__WarmStart}}and~\ref{alg:RegretOptimization}}
This section contains the description of the \textit{unique} regret-optimal algorithm and an analysis of its theoretical properties. Throughout our manuscript, we maintain the following two assumptions.  
\begin{assumption}[$\mathcal{D}$-Boundedness]
\label{assm:bddxy} 
There are constants $K_x,K_y>0$ such that all 
$(x^i_j, y^i_j) \in \mathcal{D}$ satisfy $\|u^i_j\|_2 \leq K_x$ and $|y^i_j|^2 \leq K_y$, where $u^i_j = \phi (x^i_j)$. 
\end{assumption}
\vspace{-.5em}
With the notation introduced before, the 
object $[w_1^\star I_p, \cdots, w_N^\star I_p ]$ above is a $p \times Np$ matrix.
Our next result shows that Algorithm~\ref{alg:RegretOptimization}, and its iterates, are regret-optimal. 

\begin{theorem}[Algorithm~\ref{alg:RegretOptimization} Computes the Unique Regret-Optimal Algorithm]
\label{thm:RegretOptimal_Dynamics__MainTextFormulation}
Suppose that Assumption~\ref{assm:bddxy} holds and fix $\Theta^{\star}\in \mathbb{R}^{N\,p}$ as in \eqref{Theta_star}.  Then, the systemic regret functional $\mathcal{R}$ defined in~\eqref{eq:penalized_regret_functional__REGRETFORM} admits a unique minimizer $\hat\Theta_{\cdot}\eqdef (\hat\Theta(t))_{t=0}^T$ of the form~\eqref{Theta-simple__nocontrol} with $\hat\Theta(0) = \Theta^{\star}$ and increments
\allowdisplaybreaks
\begin{equation} 
\hspace{-0.3cm}
 \Delta \hat\Theta(t)  
 =  -[(\lambda + \beta )I_{Np} + P(t+1)]^{-1} 
  \big[ (\lambda I_{Np} + P(t+1)) \hat\Theta(t) 
 - \lambda  \Theta^\star + S(t+1)  \big] ,   \label{optimala 0} 
\end{equation}
where, for $t=1,\dots,T$,  the matrices $P(t)$ and $S(t)$ in \eqref{optimala 0} are determined by
\allowdisplaybreaks\begin{align} 
& \begin{cases}
 P(T) =  [w^\star_1 I_p ,\cdots, w^\star_N I_p]^{\top} 
 \big( \sum_{i=1}^N w^\star_i \sum_{j=1}^{|\mathcal{D}_i|}   u^i_j u^{i\top}_j \big)  
    [ w^\star_1 I_p, \cdots, w^\star_N I_p] ,\\
   P(t)  =  \beta I_{Np} - \beta^2 [(\lambda+\beta) I_{Np} + P(t+1) ]^{-1}  \label{eqnP(t) 0} ,
\end{cases}   \\
& \begin{cases}
 S(T) =  - [ w^\star_1 I_p, \cdots, w^\star_N I_p]^{\top} \sum_{i=1}^N w^\star_i \sum_{j=1}^{|\mathcal{D}_i|}  {u^i_j} y^i_j ,\\
  S(t) =  \beta \big[ (\lambda+\beta)I_{Np} + P(t+1) \big]^{-1} (S(t+1) - \lambda \Theta^\star ) .
  \label{eqnS(t) 0}    
\end{cases} 
\end{align} 
In particular, for each $t=0,\dots,T-1$ we have $
        \Delta \hat\Theta(t) 
    = 
        \boldsymbol{\alpha}(t)
$, where $\boldsymbol{\alpha}$ is as in Algorithm~\ref{alg:RegretOptimization}. 
\end{theorem}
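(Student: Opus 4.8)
The plan is to recast the minimization of the systemic regret functional $\mathcal{R}$ from~\eqref{eq:penalized_regret_functional__REGRETFORM} as a discrete-time linear--quadratic optimal control problem and solve it by backward dynamic programming. First I would note that $\mathcal{R}$ and the energy functional $\mathcal{L}$ in~\eqref{eq:penalized_regret_functional} differ only by the additive constant $l^\star$, which does not depend on the trajectory $\Theta^{0\cdots T}$; hence they share the same minimizers and it suffices to minimize $\mathcal{L}$ subject to the dynamics~\eqref{Theta-simple}. Identifying the state with $\Theta(t)$ and the control with $\boldsymbol\alpha(t)$, so that $\Theta(t+1)=\Theta(t)+\boldsymbol\alpha(t)$ and $\Delta\Theta(t)=\boldsymbol\alpha(t)$, the running cost $\lambda\|\Theta(t)+\boldsymbol\alpha(t)-\Theta^\star\|_2^2+\beta\|\boldsymbol\alpha(t)\|_2^2$ is jointly quadratic in $(\Theta(t),\boldsymbol\alpha(t))$.

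The key preparatory step is to expand the terminal cost $l(\Theta(T),w^\star;\mathcal{D})$ into an explicit quadratic form in $\Theta(T)$. Using the fKRR structure $f_\theta(x)=\phi(x)^\top\theta$ together with the identity $\theta^{w^\star}=[w_1^\star I_p,\dots,w_N^\star I_p]\,\Theta$, expanding $\sum_{i,j} w_i^\star\big((u_j^i)^\top\theta^{w^\star}-y_j^i\big)^2$ shows that, up to an additive constant,
\[
l(\Theta(T),w^\star;\mathcal{D})=\Theta(T)^\top P(T)\,\Theta(T)+2\,S(T)^\top\Theta(T),
\]
where $P(T)$ and $S(T)$ are exactly the matrices prescribed in~\eqref{eqnP(t) 0} and~\eqref{eqnS(t) 0}. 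Assumption~\ref{assm:PD} guarantees $P(T)\succeq 0$, so this terminal cost is convex.

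Next I would run backward dynamic programming with the quadratic value-function ansatz $V_t(\Theta)=\Theta^\top P(t)\Theta+2\,S(t)^\top\Theta+c(t)$. Substituting into the Bellman equation
\[
V_t(\Theta)=\min_{\boldsymbol\alpha}\Big[\lambda\|\Theta+\boldsymbol\alpha-\Theta^\star\|_2^2+\beta\|\boldsymbol\alpha\|_2^2+V_{t+1}(\Theta+\boldsymbol\alpha)\Big],
\]
the bracketed expression is strictly convex in $\boldsymbol\alpha$ (because $\beta>0$ and $P(t+1)\succeq 0$), so its first-order condition has the unique solution
\[
\boldsymbol\alpha(t)=-[(\lambda+\beta)I_{Np}+P(t+1)]^{-1}\big[(\lambda I_{Np}+P(t+1))\Theta-\lambda\Theta^\star+S(t+1)\big],
\]
which is precisely the increment~\eqref{optimala 0}. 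Reinserting this optimizer and collecting the quadratic and linear terms in $\Theta$ then yields the backward recursions for $P(t)$ and $S(t)$; this is an application of \cite[Proposition~5.1]{BO1998}, which simultaneously certifies that the dynamic-programming value function coincides with the true optimal cost.

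The main obstacle will be the matrix algebra in the induction step: verifying that reinserting the optimal control collapses the quadratic coefficient to the clean Riccati form $P(t)=\beta I_{Np}-\beta^2[(\lambda+\beta)I_{Np}+P(t+1)]^{-1}$ and the linear coefficient to $S(t)=\beta[(\lambda+\beta)I_{Np}+P(t+1)]^{-1}(S(t+1)-\lambda\Theta^\star)$. This requires a completion-of-squares (Woodbury-type) manipulation, carried out in tandem with an inductive positive-semidefiniteness claim: $P(T)\succeq 0$ by Assumption~\ref{assm:PD}, and the recursion preserves $P(t)\succeq 0$, which ensures every inverted matrix $(\lambda+\beta)I_{Np}+P(t+1)$ is nonsingular so that the scheme is well posed. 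Finally, uniqueness of the minimizer follows from strict convexity of $\mathcal{L}$ in the control sequence $(\boldsymbol\alpha(t))_{t=0}^{T-1}$ forced by $\beta>0$, combined with the convex terminal cost; together with the fixed initialization $\hat\Theta(0)=\Theta^\star$ this delivers $\Delta\hat\Theta(t)=\boldsymbol\alpha(t)$ for every $t$.
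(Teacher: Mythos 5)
Your proposal is correct and follows essentially the same route as the paper's own proof: both reduce $\mathcal{R}$ to the energy $\mathcal{L}$ (they differ by the constant $l^\star$), expand the terminal loss into the quadratic form giving $P(T)$ and $S(T)$, posit the quadratic value-function ansatz $V(t,\Theta)=\Theta^\top P(t)\Theta+2S^\top(t)\Theta+r(t)$, solve the Bellman first-order condition in $\boldsymbol\alpha$ (justified by positive semi-definiteness propagated backwards from Assumption~\ref{assm:PD}), and match coefficients to obtain the Riccati recursions \eqref{eqnP(t) 0}--\eqref{eqnS(t) 0}. The only cosmetic difference is that you invoke strict convexity of $\mathcal{L}$ in the control sequence for uniqueness, whereas the paper gets uniqueness from the strictly convex per-stage minimization within the backward induction; these are equivalent.
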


Next, we evaluate the computational complexity of Algorithm~\ref{alg:RegretOptimization} in the regime where the number of datasets is much larger than the typical number of instances/data per dataset.
\begin{theorem}[Computational Complexity]
\label{thm:ComplexityAlgorithm}
If $\frac1{N}\sum_{i=1}^N\,|\mathcal{D}_i| \le N$, then
the computational complexity
of computing the sequence $(\hat \Theta(t))_{t=0}^T$ in the regret-optimal algorithm is $
    \mathcal{O}\big( N^2 p^3 +
        T
        \, 
        (Np)^{2.373}
    \big)
.
$
\end{theorem}



\paragraph{Adversarial Robustness}

We quantify the sensitivity of Algorithm~\ref{alg:RegretOptimization} to adversarial attacks.  We study the robustness of Algorithm~\ref{alg:RegretOptimization} to a family of perturbations that switch a given percentage $0\le q\le 1$ of all training data by selecting fake data points at a distance of at-most $\varepsilon\ge 0$ for any perturbed datapoint.  Thus, for a fixed set of datasets 
$\mathcal{D} = \{ \mathcal{D}_i \}_{i=1}^N$, and $q,\varepsilon$ as above, 
we define the class $\mathbb{D}^{q,\varepsilon}$ of \textit{adversarially generated datasets} of ``magnitude'' $(q,\varepsilon)$ as all sets of datasets 
$\{ \widetilde{\mathcal{D}}_i \}_{i=1}^N$, $\widetilde{\mathcal{D}}_i=\{ \{(\tilde{x}^i_j,\tilde{y}^i_j) \, | \, 1 \leq j \leq |\mathcal{D}_i| \}\subseteq \mathbb{R}^{d}\times \mathbb{R}$ satisfying Assumption~\ref{assm:bddxy}, as well as
\[
        \underbrace{
                    \frac{
                        \#\{
                            (i,j):\, u^i_j\neq \tilde{u}^i_j \mbox{ or } y^i_j\neq \tilde{y}^i_j
                        \}
                    }{
                        \sum_{i=1}^N\,|\mathcal{D}_i|
                    }
                \le 
                    q
        }_{\mbox{Attack Persistence}} 
\hspace{0.2cm} 
\mbox{and} 
\hspace{0.2cm}       \underbrace{
                    \max_{i,j} \left\{ \,
                        \max\left(
                            \|u^i_j-\tilde{u}^i_j\|_2 
                        ,
                            |y^i_j-\tilde{y}^i_j|^2
                        \right) \right\}
                \le 
                    \varepsilon
        }_{\mbox{Attack Severity}}
,
\]
where the indices $(i,j)$ run over $\{(i,j) \, | \, 1 \leq i \leq N,\,1 \leq j \leq |\mathcal{D}_i|\}$ and $\tilde{u}^i_j\eqdef \phi(\tilde{x}^i_j)$.

The updates of the unique regret-optimal algorithm $\hat \Theta^{0\cdots T}$ are determined by the sequence $\boldsymbol{\alpha}_{\cdot}\eqdef (\boldsymbol{\alpha}(t))_{t=0}^T$ computed by Algorithm~\ref{alg:RegretOptimization}.  It is convenient to emphasize this connection on $\boldsymbol{\alpha}_{\cdot}$ and the connection with the dataset $\mathcal{D}$ when expressing the following theorem, with the energy~\eqref{eq:penalized_regret_functional} determining the value of the systematic regret functional $\mathcal{R}$.  We therefore write $
    {\cal L}(\boldsymbol{\alpha};\mathcal{D})
        \eqdef 
    {\cal L}(\hat\Theta^{0\cdots T})
.$
\begin{theorem}[Adversarial Robustness]
\label{thrm:adv_robust}
Fix a $\mathcal{D}$ satisfying Assumption~\ref{assm:bddxy}.  For any level of ``attack persistence'' $0\le q\le 1$ and any degree of ``attack severity'' $\varepsilon\ge 0$ we have that
\[
        \big| {\cal L} (  \boldsymbol\alpha ; \mathcal{D} ) 
        -  
        {\cal L}(  \widetilde{\boldsymbol\alpha} ;  \widetilde{\mathcal{D}} )  \big|
    \le 
        \mathcal{O}\big(
            \varepsilon
            \sqrt{\bar{N}\,q}
        \big)
,
\]
where 
$\widetilde{\boldsymbol\alpha}$ is the output of Algorithm~\ref{alg:RegretOptimization} for any given adversarially-generated set of datasets 
$\{ \widetilde{\mathcal{D}}_i \}_{i=1}^N\in \mathbb{D}^{q,\varepsilon}$ and $\bar{N}=\sum_{i=1}^N |\mathcal{D}_i|$.  Here $\mathcal{O}$ hides a constant depending only on Assumption~\ref{assm:bddxy} and on $\sqrt{p}$.  
\end{theorem}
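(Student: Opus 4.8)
The plan is to exploit that, by Theorem~\ref{thm:RegretOptimal_Dynamics__MainTextFormulation}, both $\mathcal{L}(\boldsymbol{\alpha};\mathcal{D})$ and $\mathcal{L}(\widetilde{\boldsymbol{\alpha}};\widetilde{\mathcal{D}})$ are the \emph{optimal values} of the same discrete-time linear--quadratic control problem, differing only in the data. First I would recast the claim as a sensitivity-of-the-value estimate and reduce it to a fixed-control comparison via the standard four-point (envelope) inequality: since $\boldsymbol{\alpha}$ minimises $\mathcal{L}(\,\cdot\,;\mathcal{D})$ and $\widetilde{\boldsymbol{\alpha}}$ minimises $\mathcal{L}(\,\cdot\,;\widetilde{\mathcal{D}})$, one has $\mathcal{L}(\boldsymbol{\alpha};\mathcal{D})-\mathcal{L}(\widetilde{\boldsymbol{\alpha}};\widetilde{\mathcal{D}})\le \mathcal{L}(\widetilde{\boldsymbol{\alpha}};\mathcal{D})-\mathcal{L}(\widetilde{\boldsymbol{\alpha}};\widetilde{\mathcal{D}})$, and symmetrically with the roles of $\mathcal{D}$ and $\widetilde{\mathcal{D}}$ reversed. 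Taking absolute values, the quantity to control becomes, for a \emph{single fixed} admissible control $\boldsymbol\beta\in\{\boldsymbol\alpha,\widetilde{\boldsymbol\alpha}\}$, the difference $|\mathcal{L}(\boldsymbol\beta;\mathcal{D})-\mathcal{L}(\boldsymbol\beta;\widetilde{\mathcal{D}})|$, i.e.\ the change in the cost of one trajectory when only the data and the data-derived objects $\Theta^\star$, $w^\star$, and the terminal-cost statistics are perturbed.

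Second, I would establish uniform a priori bounds. Using Assumptions~\ref{assm:bddxy}--\ref{assm:PD} together with the closed forms of Theorem~\ref{thm:RegretOptimal_Dynamics__MainTextFormulation}, every object entering $\mathcal{L}$ --- the locally optimal parameters $\Theta^\star$, the weights $w^\star$ produced by Algorithm~\ref{alg:RegretOptimizationHeuristic__WarmStart}, the Riccati iterates $P(t),S(t)$, the optimal controls $\boldsymbol\alpha,\widetilde{\boldsymbol\alpha}$, and the induced trajectories --- is bounded by a constant depending only on $K_x,K_y$ and the fixed hyperparameters $\lambda,\beta,\kappa,\eta,T$, \emph{uniformly} over all $\{\widetilde{\mathcal{D}}_i\}\in\mathbb{D}^{q,\varepsilon}$. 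The positive semidefiniteness in Assumption~\ref{assm:PD} guarantees $(\lambda+\beta)I_{Np}+P(t+1)\succeq(\lambda+\beta)I_{Np}\succ 0$, so every matrix inverse appearing in the recursion is well conditioned and the backward map is a contraction; this is exactly what prevents the constants from degenerating.

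Third, I would encode the entire dataset as a single vector $z$ collecting all pairs $(u^i_j,y^i_j)$ and bound its perturbation in the natural Euclidean sense: because at most $q\bar{N}$ of the $\bar{N}$ blocks of $z$ are altered, and each by at most $\varepsilon$ (attack severity), one obtains $\|z-\tilde z\|_2\le \varepsilon\sqrt{q\bar{N}}$ --- this sparse-perturbation estimate is the sole source of the $\sqrt{\bar{N}q}$ factor. It then remains to show that, with the fixed bounded control from the first step, the map $z\mapsto \mathcal{L}(\boldsymbol\beta;z)$ is Lipschitz with an $\mathcal{O}(1)$ constant. Here the composition is tracked step by step: $\theta^\star_i=({U^i}^\top U^i+\kappa I_p)^{-1}{U^i}^\top Y^i$ is Lipschitz in $z$ on the bounded set (the inverse being $\tfrac{1}{\kappa}$-bounded); the scores $s^i$ are quadratic in bounded quantities hence Lipschitz; the weight map of Algorithm~\ref{alg:RegretOptimizationHeuristic__WarmStart} is Lipschitz because the ramp $[\,\cdot\,]_+$ is $1$-Lipschitz, the exponentials have bounded arguments, and the denominator is bounded below by $\eta>0$; and $P(t),S(t)$ depend Lipschitz-continuously on the terminal data through the $T$-fold contraction. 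Composing these maps and substituting into the quadratic, bounded-coefficient energy yields $|\mathcal{L}(\boldsymbol\beta;\mathcal{D})-\mathcal{L}(\boldsymbol\beta;\widetilde{\mathcal{D}})|\le C\,\|z-\tilde z\|_2$.

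Combining the three steps gives $|\mathcal{L}(\boldsymbol\alpha;\mathcal{D})-\mathcal{L}(\widetilde{\boldsymbol\alpha};\widetilde{\mathcal{D}})|\le C\,\varepsilon\sqrt{\bar{N}q}$. I expect the main obstacle to be the third step: proving that the Lipschitz constant of the composed, genuinely nonlinear data-to-value map (ridge regression, then the ramp/softmax weighting, then the backward Riccati recursion, then the quadratic energy) is uniform in the adversary and does not secretly absorb a hidden factor of $\bar{N}$ --- so that all the size dependence is isolated in the clean estimate $\|z-\tilde z\|_2\le \varepsilon\sqrt{\bar{N}q}$. A secondary subtlety is the kink of $[\,\cdot\,]_+$, which is handled by invoking its global $1$-Lipschitz property rather than differentiability.
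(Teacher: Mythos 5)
Your route is genuinely different from the paper's, so it is worth saying what each does. You reduce via the four-point/envelope inequality to a \emph{fixed-control} sensitivity estimate $|\mathcal{L}(\boldsymbol\beta;\mathcal{D})-\mathcal{L}(\boldsymbol\beta;\widetilde{\mathcal{D}})|$; the paper never does this. Instead it uses the dynamic-programming identity \eqref{optimalL}, $\mathcal{L}(\boldsymbol\alpha;\mathcal{D})=\Theta^{\star\top}P(0)\Theta^{\star}+2S^{\top}(0)\Theta^{\star}+r(0)$, for \emph{both} optimal costs, and bounds the difference of the two value functions through backward-induction stability estimates for the Riccati data $P(t)$, $S(t)$, $r(t)$ (Lemma~\ref{lem:stability_wrtData}, Proposition~\ref{prop:stabilityControl}, Corollary~\ref{cor:stabilityLast}); the $\sqrt{\bar N q}$ factor then arises, as in your step three, by summing per-point severity bounds over the at most $q\bar N$ perturbed indices. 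Your envelope reduction and your a priori bounds (the analogue of Corollary~\ref{cor:M(P)invbdd}) are correct, and in one respect you are more faithful to the theorem than the paper: the paper's lemmas tacitly keep $\Theta^{\star}$ and $w^{\star}$ identical for the two datasets (the perturbation enters only through the terminal conditions $P(T),S(T),r(T)$), whereas Algorithm~\ref{alg:RegretOptimization} run on $\widetilde{\mathcal{D}}$ recomputes $\widetilde\Theta^{\star}$ and $\widetilde w^{\star}$, which you explicitly track.

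The genuine gap is exactly the step you flagged and did not resolve: the claim that $z\mapsto\mathcal{L}(\boldsymbol\beta;z)$ is Lipschitz with an $\mathcal{O}(1)$ constant in $\|\cdot\|_2$. This is false, not merely unproven. The terminal cost $l(\Theta(T),w;\mathcal{D})$ is an \emph{unnormalized} sum of $\bar N$ squared residuals: perturbing every label by $\delta$ gives $\|z-\tilde z\|_2=\delta\sqrt{\bar N}$ but changes the cost by order $\delta\sum_i w_i|\mathcal{D}_i|$, which is of order $\delta\bar N$ when one dataset carries most of the mass, so the Lipschitz constant is at least of order $\sqrt{\bar N}$. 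The same size-dependence re-enters through your composition: the ridge map $(U^i,Y^i)\mapsto({U^i}^{\top}U^i+\kappa I_p)^{-1}{U^i}^{\top}Y^i$ has perturbation terms like $({U^i}^{\top}U^i+\kappa I_p)^{-1}\,\Delta U^{i\top}Y^i$ whose norm carries $\|Y^i\|_2\le\sqrt{|\mathcal{D}_i|K_y}$ (the $\tfrac1\kappa$ bound on the inverse does not remove this), and any change $\Delta\Theta^{\star}$ propagates into all $\bar N$ residuals of the terminal sum. Hence the size dependence cannot all be isolated in $\|z-\tilde z\|_2\le\varepsilon\sqrt{q\bar N}$, and your three steps combined do not yield the stated bound with a constant depending only on Assumption~\ref{assm:bddxy}. (To be fair, the paper has the same tension: the constants in Lemma~\ref{lem:stability_wrtData} and Corollary~\ref{cor:stabilityLast} depend on $N$ and $\bar N$, which the final $\mathcal{O}$ silently absorbs.) Your skeleton can be repaired to reach what the paper actually proves, namely $\varepsilon\sqrt{q\bar N}$ up to constants depending also on $N,\bar N,T,\lambda,\beta$, but only by abandoning the $\mathcal{O}(1)$-Lipschitz claim that your argument rests on, or by switching to the paper's Riccati-stability route.
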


\section{Ablation Study}
\label{s:Experiments_n_Validation}
We experimentally ablate our theory, first studying the convergence of the regret-optimal algorithm compared to standard gradient descent. Then we focus on \textit{American option pricing} in quantitative finance, although our method could equivalently be used for any other transfer learning problem; see, for example, all the works surveyed in \citet{weiss2016survey}. We consider random kernels whose feature map $\phi$ in~\eqref{eq:FRKR} is a randomized neural network with $\operatorname{ReLU}$ activation%
%
%
; details are in Appendix~\ref{a:ExperimentDetails}.
Since there are no available benchmarks for the regret-optimal meta-optimization problem, we only benchmark against the standard gradient descent in our first set of experiments.

\subsection{Convergence of the Regret-Optimal Algorithm}
\label{sec:Convergence of the Regret Optimal Algorithm}
We first study the evolution of the energy functional \eqref{eq:penalized_regret_functional} and loss achieved in each iteration by our regret-optimal Algorithm~\ref{alg:RegretOptimization} (RO) with weights initialized by Algorithm~\ref{alg:RegretOptimizationHeuristic__WarmStart} by computing ${\cal L}(\Theta^{0\cdots t})$ as defined in \eqref{eq:penalized_regret_functional} but for intermediate iterations $t$, as well as the loss $l(\Theta(t), w;\,\mathcal{D})$. 
We compare the regret-optimal algorithm with the results when using gradient descent (GD) to optimize the loss directly with the version of the RO algorithm where each $w_i$ is fixed to $1/N$ (ARO).

Figure~\ref{fig:convergence analysis} shows the averaged results for $10$ runs using $\lambda=0$ each with a different set of randomly generated datasets and a different randomized neural network as fKRR. For the regret-optimal algorithm and for its accelerated version we use $T=10^3$ iterations, while we use $10^5$ steps for gradient descent with learning rate $7 \cdot 10^{-5}$. We note that for learning rates $\geq 8 \cdot 10^{-5}$ the training becomes unstable such that the loss explodes. 
The gradient descent method gradually, but slowly, reduces the loss.
Since the gradient descent method makes small parameter updates (and since $\lambda=0)$, the regret is nearly $0$ and the energy ${\cal L}$ nearly equal to the loss. Even though it runs $100$ times the number of iterations of our algorithms, it still needs much more training to achieve the same loss.

\begin{figure}[!ht]
        \centering            
        \includegraphics[width=0.2\textwidth]{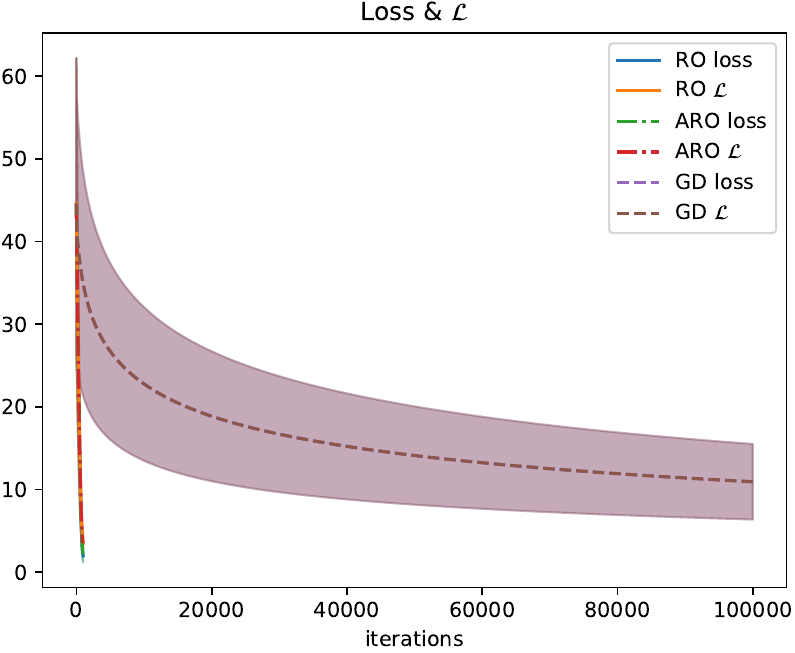}
        \includegraphics[width=0.2\textwidth]{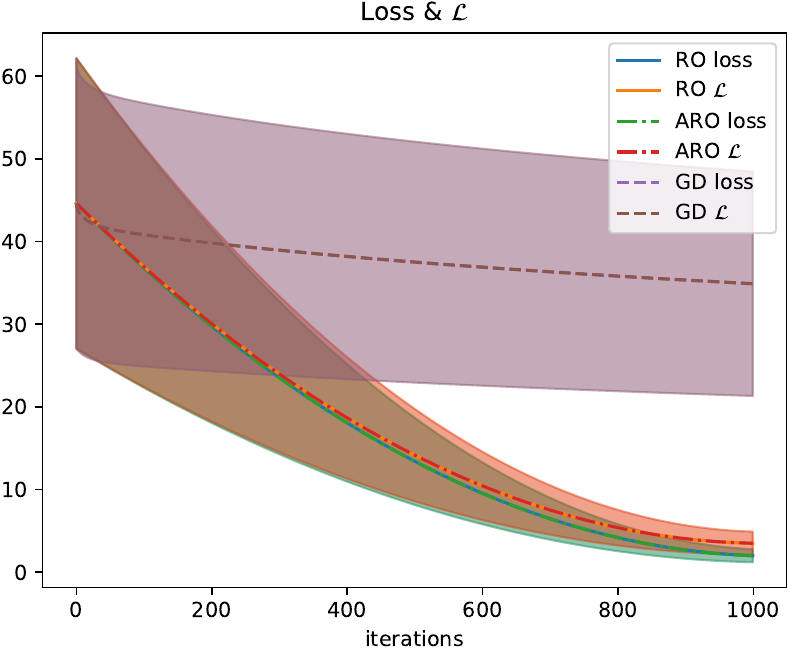}
        \includegraphics[width=0.2\textwidth]{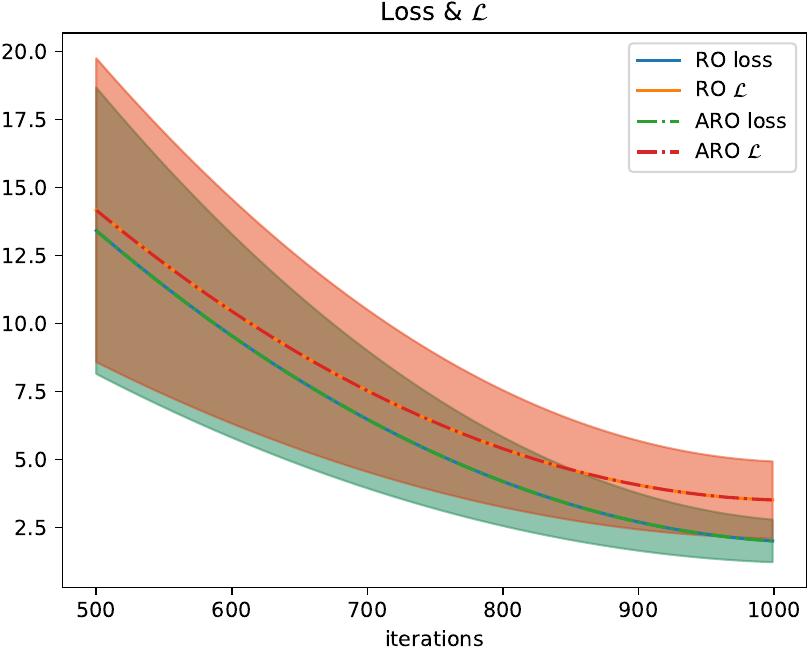}
        \caption{Loss and energy ${\cal L}$ for gradient descent (GD), the regret-optimal algorithm (RO) and the accelerated regret-optimal algorithm (ARO). Means and standard deviations over 10 runs are shown. Left: all iterations; middle: first 1000 iterations; right: iterations 500 to 1000 for our algorithms.}
        \label{fig:convergence analysis}
\end{figure}

\begin{figure}[!ht]
        \centering            
        \includegraphics[width=0.2\textwidth]{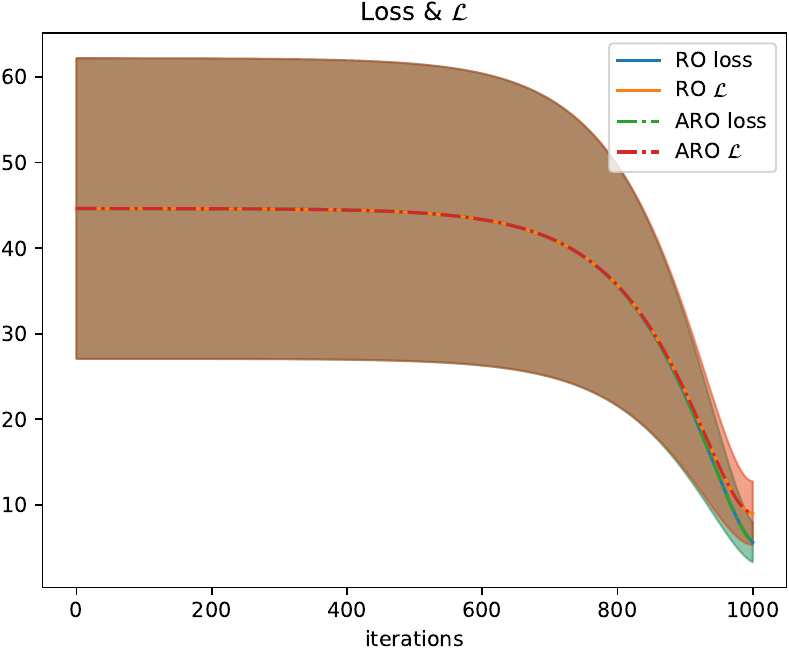}
        \includegraphics[width=0.2\textwidth]{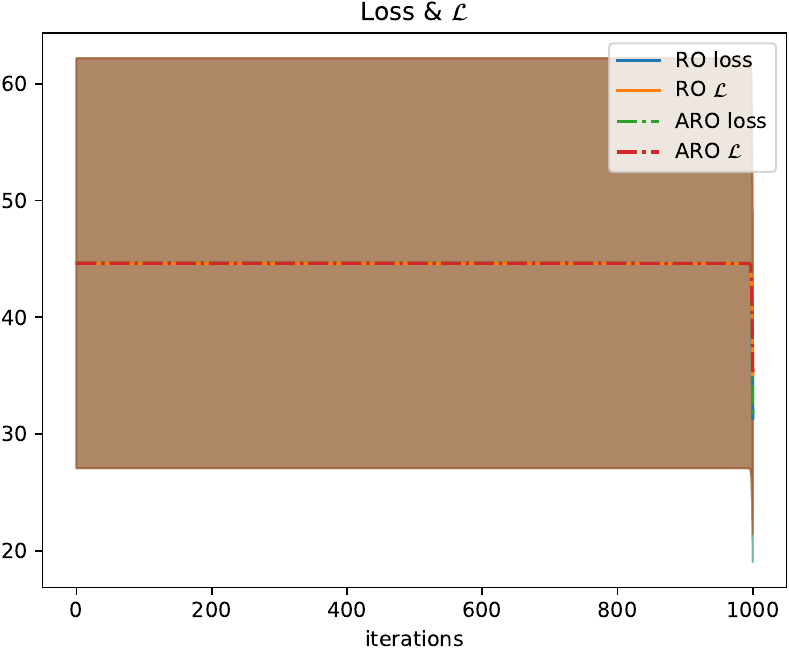}
        \caption{Loss and energy ${\cal L}$ for the regret-optimal algorithm (RO) and the accelerated regret-optimal algorithm (ARO) with means and standard deviations over $10$ runs. Left: $\lambda = 10^{-4}$; right: $\lambda=2$.}
        \label{fig:convergence analysis 2}
\end{figure}

In the regime where $\lambda > 0$ the behaviour of the algorithm changes drastically. In particular, since deviating from $\Theta(0) = \Theta^\star$ is penalised, the algorithm remains at the starting point in the initial steps and only starts to converge when approaching $T$. The exact behaviour depends on the choice of $\lambda$ (and is additionally influenced by the choice of $\beta$). In Figure~\ref{fig:convergence analysis 2} we compare the behaviour for $\lambda=10^{-4}$ and $\lambda = 2$ while otherwise using the same setup as before.  For the smaller value of $\lambda$, the algorithm starts the decent earlier and decreases the loss by more compared to the larger value for $\lambda$, where the algorithm deviates from $\Theta^\star$ only in the very last steps. 

\vspace{-1em}
\subsection{Transfer Learning in American Option Pricing}
\label{sec:American Option Pricing}
We test our algorithm in the context of the optimal stopping problem for American option pricing, a computationally highly challenging problem which has recently become tractable also in higher dimensional settings through deep learning \citep{PatrickDOSJMLR_2019,Ruimeng_QF_OptimStopping,Timo_DOS_2021,herrera2023optimal}. This is a useful example since American option pricing (see Appendix~\ref{a:Recap:AOP_OS}) is computationally expensive, and one often does not have access to large datasets but does have access to several small and similar datasets arising from comparable market regimes.

We generate our synthetic ``true market dynamics'' $\mathcal{D}_1$ and several more or less similar datasets $\mathcal{D}_2, \dotsc, \mathcal{D}_N$. We always use the randomized least squares Monte Carlo (RLSM) algorithm of~\cite{herrera2023optimal} (see Appendix~\ref{a:Recap:AOP_OS}) for American option pricing, and we only vary the optimization method used to train the randomized neural networks in the RLSM.
We compare RO against the following baselines. \textit{Local optimizer (LO-$i$):} The local optimizers $\theta^\star_i$ solves the individual optimisation problem on the datasets $\mathcal{D}_i$ without considering the other datasets. \textit{Mean Local optimizer (MLO):} The mean of the local optimizers $ \bar{\theta}^\star = \frac{1}{N} \sum_{i=1}^N \theta^\star_i$ is the bagging aggregation. \textit{Joint optimizer (JO):} The joint optimizer ${\theta}^\star_{\mathcal{N}}$ with $\mathcal{N}=\{ 1, \dotsc, N\}$ solves the optimisation problem of the pooled regime $\mathcal{D} = \cup_{i=1}^N \mathcal{D}_i$. \textit{Joint Subset optimizer (JSO-$\mathcal{I}$):} For any subset $\mathcal{I} \subset \mathcal{N}$ this is the joint optimizer of the dataset $\mathcal{D}_{\mathcal{I}} = \cup_{i \in \mathcal{I}} \mathcal{D}_i$.

\begin{experiment}\label{exp:1}{\rm 
We study the ability of our method to transfer knowledge of a large number of small equally-sized similar and dissimilar datasets  to the main task.  We generate $13$ different datasets (the main one, 6 similar and 6 dissimilar ones; see \ref{sec:Experiment 1}) with $\nu_1 = 100$ training samples each,  all from the well-studied \textit{Heston model}, defined by
    \begin{equation}\label{equ:heston model}
    \begin{split}
    d X_t &= (r - \delta) X_t dt + \sqrt{v_t} X_t d W_t, \\
    d v_t &= -k (v_t - v_{\infty}) dt + \sigma \sqrt{v_t}d B_t.
    \end{split}
    \end{equation}

    Results are reported in Table~\ref{table:results exp 1 short} and more detailed in \ref{sec:Experiment 1}.
    For our regret-optimal method (RO) we use information sharing levels $\eta \in \{  10, 100, 500\}$, where $\eta=100$ leads to the best result outperforming the local optimizer of the main dataset (LO-1) by $9\%$. It also outperforms all other local optimizers (LO-$i$), the mean local optimizer (MLO) and the joint optimizer (JO).
    For $\eta=10$ we have similar performance as LO-1 and for $\eta=500$ we also have significant outperformance, though smaller than for $\eta=100$ (see \ref{sec:Experiment 1}). 
    
    The local optimizers on the 6 datasets similar to the main one (LO-2, $\dotsc$, LO-7) have similar performance as LO-1, while the other 6 (LO-8, $\dotsc$, LO-13) are significantly worse.
    Therefore, we claim that our algorithm should mainly transfer knowledge from those 6 similar datasets, while avoiding to learn from the others.
    In line with this we provide results of the joint optimizer on the similar datasets $\mathcal{D}_1 \cup \dotsb \cup \mathcal{D}_7$, which performs better than the other baselines and outperforms LO-1 by $6.2\%$. In particular, this confirms that knowledge can be transferred from the datasets $\mathcal{D}_2 , \dotsc , \mathcal{D}_7$. This joint subset optimizer is significantly outperformed by our regret-optimal algorithm RO ($\eta=100$) by about $3\%$, which shows that the transfer learning abilities of our method are better than simply selecting the datasets most similar to $\mathcal{D}_1$.
    
    We additionally compare our method to the local optimizer for the main dataset which uses either $700$ (the number of training samples in the 7 similar datasets) or $50,000$ training samples. Importantly, both are just references that would not be available in a real world setting where the datasets are limited.
    The first one only leads to a slight increase in $RP$ of about $1\%$, which shows that our method extracts nearly as much knowledge out of the (same amount of) samples of the similar (but not equally distributed) datasets. 
    The second one (which is considered a good proxy for the true American option price on the main dataset) has about $10\%$ better performance. This also shows the limitations of our method, in particular, it can only transfer as much knowledge as available in the other datasets.}
\end{experiment}

\vspace{-1em}
\begin{table}[!htb]
    \caption{\textbf{Ablation Results: American Option Pricing.}
    Relative performance (RP) and $95\%$-confidence-intervals for different optimization methods.  
    We compare the local optimizers on the different datasets (LO-$n$), with the mean local optimizer (MLO), the joint optimizer (JO) and our regret-optimal method (RO). The \textit{oracle} local optimizer on the main dataset with additional training samples (standard: $100$ samples per dataset) is included.  
    See Table~\ref{table:results exp 1} for additional details.
    }
    \begin{subtable}{.5\linewidth}
      \centering
            \caption{\textbf{Experiment $1$} (compressed)}
            \resizebox{\linewidth}{!}{%
            \begin{tabular}{@{}ccc@{}}
            \cmidrule[0.3ex](){1-3}
            method &  $RP$ &  $95\%$-CI \\
            \midrule
            LO-$1$ & 1.000 & $[0.991; 1.009]$ \\
            LO-2, $\dotsc$, LO-7 & $\geq 0.966$ & $[\geq 0.957; \leq 0.994]$ \\
            LO-8, $\dotsc$, LO-13 & $\leq 0.725$ & $[\geq 0.696; \leq 0.736]$ \\
            MLO & 0.823 & $[0.813; 0.833]$ \\
            JO & 0.886 & $[0.878; 0.894]$ \\
            JSO-$\{1,\dotsc,7\}$ & 1.062 & $[1.057; 1.067]$ \\
            \textbf{RO ($\eta=100$)} &  \textbf{1.090} & \textbf{[1.086; 1.095]} \\
            \midrule
            \csvreader[head to column names, late after line=\\]
            {Tables/training_overview-1-2.csv}{}%
            {  \name & \RP & \CI }%
            \bottomrule
            \end{tabular}
            }
            \label{table:results exp 1 short}
    \end{subtable}%
    \begin{subtable}{.5\linewidth}
      \centering
        \caption{\textbf{Experiment $2$}}
        \resizebox{0.825\linewidth}{!}{%
        \begin{tabular}{@{}ccc@{}}
        \cmidrule[0.3ex](){1-3}
        method &  $RP$ &  $95\%$-CI \\
        \midrule
        \csvreader[head to column names, late after line=\\]
        {Tables/training_overview-2.csv}{}%
        {  \name & \RP & \CI }%
        \bottomrule
        \end{tabular}
        }
        \label{table:results exp 2}
    \end{subtable} 
\end{table}

\begin{experiment}\label{exp:2}{\rm 
    We study the adversarial robustness of our model when having a dominating dataset that should be avoided due to its dissimilarity.
    In particular, we generate 3 datasets, a small main dataset, another small and similar dataset to transfer knowledge from and a dominating much larger dissimilar datasets.
    The main dataset is generated from a rough Heston model 
    \begin{align*}
    dX_t &=  (r - \delta) X_t dt +  \sqrt{v_t} X_t dW_t, \\
    v_t &= v_0 + \int_0^t \frac{(t-s)^{H-1/2} }{\Gamma(H + 1/2)} \kappa (v_\infty - v_s)ds  
     + \int_0^t \frac{(t-s)^{H-1/2} }{\Gamma(H + 1/2)} \sigma \sqrt{v_s}dB_s.
    \end{align*}
    which is similar to the Heston model except that the volatility is a rough process with Hurst parameter $H\in(0,1/2]$ instead of $H=1/2$.
    Datasets $\mathcal{D}_2, \mathcal{D}_3$ are from a standard Heston model, where $\mathcal{D}_2$ is dissimilar from $\mathcal{D}_1$ and $\mathcal{D}_3$ is similar to it.
    For $\mathcal{D}_1$ and $\mathcal{D}_3$ we use $\nu_1=100$ training samples, while $\mathcal{D}_2$ has $\nu_1=50,000$ samples, such that it dominates the others in its size.
    All the other choices are the same as in Experiment~\ref{exp:1}.

    The results are reported in Table~\ref{table:results exp 2}. The baselines MLO and  JO perform significantly worse than the LO on the main dataset. As expected, the JO suffers much more from the dominating $\mathcal{D}_2$ than the MLO, since JO weights all samples equally. Indeed, the MLO would suffer more from a large number of datasets that are quite different from the main dataset, no matter their sample size.
    In contrast to this, our regret optimal method with $\eta \in\{50,100\}$ outperforms the LO-1 by $4\%$, showing that also a quite different dataset dominating in sample size is not a problem.}
\end{experiment}


Both of these experiments show that our regret optimal method is well suited for automatic (up to the hyper-parameter $\eta$) dataset selection in transfer learning tasks that have potentially large numbers of datasets and do not allow a (manual) pre-selection. 
In particular, neither a large number of ``bad'' datasets (of which the MLO suffers) nor some ``bad'' datasets that dominate in sample size (of which the JO suffers) constitute a problem for our regret-optimal method.
Moreover, the sample efficiency when transferring knowledge from similar datasets to the main task is very high (outperforming the JO on the respective subset of similar datasets and nearly being on par with the LO on the main dataset using a larger training sample), as we saw in Experiment~\ref{exp:1}.

\vspace{-1em}
\section{Conclusion}
\vspace{-0.5em}
In this work we presented a regret-optimal algorithm for federated transfer learning based on pre-trained models where only the last linear layer is fine-tuned. 
Besides the theoretical properties of this algorithm, we also provided experiments demonstrating the transfer learning capabilities of our method in the context of American option pricing.
From the theoretical side, we would like to extend our method in future work to regression models that do not necessarily permit a closed-form solution.

\vspace{-1em}
\section{Limitations and Future Work}
\vspace{-0.5em}
Our results specialized on kernel ridge regressors.  In future work, we would like to extend our pipeline to deep learning models.

%


\bibliography{Formatting/References}
\bibliographystyle{icml2021}

\newpage
\appendix


\section{Game Theoretic Interpretation of Regret-Optimal Algorithm}
\label{s:GameTheoreticInterpretation}

We now offer a game-theoretic interpretation of the regret functional, as defined in~\eqref{eq:penalized_regret_functional}.  This interpretation views the user as a central planner whose objective is to organize a system of individual agents, representing the pretrained KRR models, to maximize the singular goal of identifying a parameter optimizing~\eqref{eq:penalized_regret_functional__REGRETFORM}.  
We now interpret the roles and interactions of the central planner and each agent as defined by our regret-optimization problem.
\paragraph{The Central Planner:}
The purpose of the central planner is to avoid the situation whereby the structure of the different datasets is ignored in the model selection problem by merging them into a single dataset $\mathcal{D}\eqdef \cup_{i=1}^N\,\mathcal{D}_i$ and thereby reducing the problem to standard kernel ride regression. Though intuitively simple, such merging  approach can be particularly disadvantageous in the presence of heterogeneity among the datasets, since a dataset that is potentially very different from the focal dataset $\mathcal{D}_1$ would be equally influential in the regression problem~.  
Instead, the user, acting as a central planner, organizes the model selection problem through a cooperative objective
\allowdisplaybreaks\begin{align} 
\label{eq:terminal_time_objective appendix}
        l(\Theta, w;\,\mathcal{D} )  
    \eqdef & 
        \sum_{i=1}^N 
        w_i
            \sum_{j=1}^{|\mathcal{D}_i|} 
                (
                f_{\theta^w}(x_j^i)
                - y^i_j  
                )^2
    \,
    \mbox{ with }
    \,
    \theta^w\eqdef  \sum_{i=1}^N 
        w_i\,\theta_i ,
\end{align} 
where $\Theta\eqdef (\theta_1^{\top}, \cdots, \theta_N^{\top})^\top\in \mathbb{R}^{Np \times 1}$. Here, the central planner \textit{ascribes} the influence of each agent on the model selection problem through the weight vector $w=(w_1, \cdots, w_N)\in [0,1]^N$ with $\sum_{i=1}^N\,w_i=1$.  
As can be seen in \eqref{eq:terminal_time_objective appendix}, this influence is two-fold: on the one hand, it impacts the cooperative objective $l$ in the aggregation of the SSE of each player; on the other hand, it affects the proportion of each player's preferred parameter choice entering into the collectively selected parameter $\theta^{w}$.
The choice of $w$ is approached in different ways in the literature, e.g.\ by bagging 
reminiscent of mean-field games \citep{carmona2018probabilistic}, by data-driven weighting procedures \citep{Baxter_2000_JAIR__AmodelIndutvieBiasLearning}, Bayesian aggregation \citep{MR2280214,PACOH_rothfuss_2021,pavasovic2022mars}, or with dictionaries \citep{argyriou2006multi}.  

\paragraph{The Agents:}
When acting in isolation, each agent's preferred model is determined by optimizing a the kernel ridge regression objective using only their dataset.  Under mild conditions\footnote{E.g.\ coercivity and lower semi-continuity in $\theta$.} on $f$, this corresponds to an {\em individual} parameter selection
\begin{align}
    \theta^{\star}_i \in 
    \operatorname{argmin}_{\theta \in \Theta}
        \, 
            \sum_{(x,y)\in \mathcal{D}_i}\,
                (f_\theta(x)-y)^2
        +
            \kappa \,
                \|\theta\|_2, 
\label{eq:Regression_Klassische1}
\end{align}
for $i=1,\dots,N$, which we assume to be fixed and known to all agents prior to their {\em collective} parameter selection. Taking into account the other agents, each agent \textit{acts} by specifying an iterative deterministic algorithm of the form with the intent of maximizing their influence on the joint model selection problem~\eqref{eq:terminal_time_objective appendix}.  Thus, the $i^{th}$ player wants the jointly selected model, specified by $\theta^{w}(T)=\sum_{n=1}^N\, w_i\,\theta_i(T)$, to be as close to $\theta_i^{\star}$ as possible, thereby maximally encoding the characteristics of the $i^{th}$ dataset into the selected model $f_{\theta^w(T)}$. The accumulated \textit{regret} which the $i^{th}$ player incurs by deploying an algorithm $\theta^{0\cdots T}_{i}$ that deviates from their preferred selection $\theta^*_i$ is measured by
\begin{equation}
\label{eq:penalized_regret_functional__REGRETFORM___Individual}
\begin{aligned}
            \sum_{t=0}^{T-1}  
            \,
                \underbrace{
                    \lambda \,
                    \left\| \theta_i(t+1) - \theta^\star_{i} \right\|_2^2
                }_{\text{Preference Strength}}
            + 
                \underbrace{
                      \beta \, \left\| \Delta \theta_i(t) \right\|_2^2 
                }_{\text{Algorithm Stability}}
,
\end{aligned}
\end{equation}
where $\lambda>0$ is a hyperparameter quantifying the player's \textit{attachment} towards their preferred model choice $\theta^{\star}_i$ and the hyperparameter $\beta > 0$ quantifies the the stability of the algorithm during the iterations, where  $\Delta \theta_i(t)\eqdef \theta_i(t+1)-\theta_i(t)$.  

The central planner then organizes the action $\Theta^{0\cdots T}\eqdef 
\left\{\Theta(t)\right\}_{t=0}^{T}$ of the system of $N$ players to reach the objective~\eqref{eq:terminal_time_objective appendix} while encoding their individual regrets~\eqref{eq:penalized_regret_functional__REGRETFORM___Individual}, where
\begin{equation}
\Theta (t) = (\theta_1(t)^\top,\ldots,\theta_N(t)^\top)^\top \in \mathbb{R}^{Np \times 1}. \notag 
\end{equation}

This is achieved by coupling the individual agent's regret functionals~\eqref{eq:penalized_regret_functional__REGRETFORM___Individual} through the \textit{systemic regret functional}
\begin{equation}
\label{eq:penalized_regret_functional__REGRETFORM appendix}
\begin{aligned}
    \mathcal{R}(\Theta^{0\cdots T})
\eqdef &
        \underbrace{
        l (\Theta(T), w;\,\mathcal{D} ) 
        }_{\text{Cooperative Objective}}
    -
                  \,\,  l^{\star} 
   +
        \sum_{t=0}^{T-1}  
        \,
             \big(
                    \underbrace{
                        \lambda \left\| \Theta(t+1) - \Theta^\star \right\|_2^2    
                    }_{\text{Preference Strength}}
                + 
                    \underbrace{
                        \beta  \left\| \Delta \Theta(t) \right\|_2^2 
                    }_{\text{Algo. Stability}}
                \big) 
    ,
\end{aligned}
\end{equation} 
where
\begin{equation}
    \label{Theta_star}
    \Theta^\star\eqdef (\theta_1^{\star \top}, \cdots, \theta_N^{\star\top})^\top
    \in \mathbb{R}^{Np \times 1}
\end{equation}
encodes the individual preferences of the players,
\begin{equation}
    \Delta \Theta(t)\eqdef \Theta(t+1)-\Theta(t) 
    \label{Theta-simple__nocontrol}
\end{equation}
quantifies the integrative updates of any candidate optimizing sequence $\Theta^{0\cdots T}$, and the \textit{ideal terminal loss} is 
\[
        l^{\star}
    \eqdef 
        \underbrace{
            \min_{\Xi \in \mathbb{R}^{Np\times 1}}\, 
                l(\Xi,w; \mathcal{D}) 
        }_{\text{Cooperative Sub-optimality}}
    .
\]
The term $(l(\cdot,\cdot;\cdot)-l^{\star})$ measures the central planner's regret in failing to optimize~\eqref{eq:terminal_time_objective appendix}.  The algorithm stability term in~\eqref{eq:penalized_regret_functional__REGRETFORM appendix} appears in forward-backward proximal splitting algorithms with quadratic objectives (e.g. \cite{MR3619041,MR4490614}) and several standard proximal algorithms (see \cite[Section 5.1]{bertsekas2015convex}).

The systemic regret functional~\eqref{eq:penalized_regret_functional__REGRETFORM appendix} acts as a performance criterion for any optimization algorithm, played by $N$ players, initialized at an arbitrary $\Theta(0)$, and running for $T$ iterations.  
In what follows, we take the initial condition of the algorithm as $\Theta(0)=\Theta^\star$ for simplicity, but other choices are possible. In other words, the regret of each player is initialized at zero, as they all start from their individually preferred parameter $\theta^\star_i$, and starts to accumulate as the system moves away from $\Theta^\star$ towards the optimizer of  \eqref{eq:penalized_regret_functional__REGRETFORM appendix}.

\section{Proofs and Technical Results}
\label{s:Proofs_TechnicalLemmata}

\begin{proof}[{Proof of Theorem~\ref{thrm:transfer_bound}}]
For every $L(>0)$-Lipschitz $f\in \mathcal{F}$ and each $w\in \Delta_N$, we have that
\allowdisplaybreaks
\begin{align*}
\numberthis
\label{eq:begin_ineq_empiricialpreocessbound}
   \big|
            \mathcal{R}(f)
        -
            \hat{\mathcal{R}}_{w}^{\mathcal{D}}(f)
    \big|
= &
    \biggl|
            \mathbb{E}_{(X,Y)\sim \mu_1}\big[
            \ell(f(X),Y) \big] 
        -
            \sum_{i=1}^N\,
                w_i
            \mathbb{E}_{(X,Y)\sim \hat{\mu}_i}\big[
                \ell(f(X),Y)
            \big]
    \biggr|
\\
\numberthis
\label{eq:simplexbelonging}
= &
    \biggl|
            \sum_{i=1}^N
                w_i\,
                \mathbb{E}_{(X,Y)\sim \mu_1}\big[
                \ell(f(X),Y) 
                \big] 
        -
            \sum_{i=1}^N\,
                w_i
            \mathbb{E}_{(X,Y)\sim \hat{\mu}_i}\big[
                \ell(f(X),Y)
            \big]
    \biggr|
\\
= &
    \biggl|
            \sum_{i=1}^N
                w_i\,
                \biggl(
                    \mathbb{E}_{(X,Y)\sim \mu_1}\big[
                    \ell(f(X),Y) 
                    \big] 
                -
                \mathbb{E}_{(X,Y)\sim \hat{\mu}_i}\big[
                    \ell(f(X),Y)
                \big]
            \biggr)
    \biggr|
\\
\le &
    \sum_{i=1}^N
        w_i\,
    \biggl|
        \mathbb{E}_{(X,Y)\sim \mu_1}\big[
            \ell(f(X),Y) 
            \big] 
        -
        \mathbb{E}_{(X,Y)\sim \hat{\mu}_i}\big[
            \ell(f(X),Y)
        \big]
    \biggr|
\\
\numberthis
\label{eq:KRDuality}
\le &
    \sum_{i=1}^N
        w_i\,
        \bar{L}
        \,
        \mathcal{W}_1(\mu_1,\hat{\mu}_i)
\\
\le &
    \sum_{i=1}^N
        w_i\,
        \bar{L}
        \,
        \big(
            \mathcal{W}_1(\mu_1,\mu_i)
            +
            \mathcal{W}_1(\mu_i,\hat{\mu}_i)
        \big)
\\
\numberthis
\label{eq:scores_definition_used}
= &
    \sum_{i=1}^N
        w_i\,
        \bar{L}
        \,
        \big(
            s_i
            +
            \mathcal{W}_1(\mu_i,\hat{\mu}_i)
        \big)
\\
\numberthis
\label{eq:truescores_definition_used}
= &
        \bar{L}
        \sum_{i=1}^N
            w_i\,
               s_i
    +
        \bar{L}
        \sum_{i=1}^N
            w_i\,
            \mathcal{W}_1(\mu_i,\hat{\mu}_i)
.
\end{align*}
where~\eqref{eq:simplexbelonging} held since $w$ belongs to the $N$-simplex $\Delta_N$,
~\eqref{eq:KRDuality} held by Kantorovich duality (since $\mathcal{Z}$ is compact every probability thereon belongs to the $1$-Wasserstein space over $(\mathcal{Z},\|\cdot\|_2)$) and since the map $\mathcal{Z}\in (x,y)\mapsto \ell(f(x),y)\in \mathbb{R}$ is $
\bar{L}
\eqdef \max\{1,L\}$-Lipschitz, see e.g.~\cite[Theorem 5.10 as simplified in Equation (5.11)]{VillaniOTBook_2009},
~\eqref{eq:scores_definition_used} held by definition of the scores $\{s_i\}_{i=1}^N$ (see~\eqref{eq:true_scored_defined}).
Since the upper bound in~\eqref{eq:truescores_definition_used} held independently of the choice of $L$-Lipschitz $f\in \mathcal{F}$, then taking the supremum over all $L$-Lipschitz function $f\in \mathcal{F}$ across
\eqref{eq:begin_ineq_empiricialpreocessbound}-\eqref{eq:truescores_definition_used} yields
\begin{equation}
\label{eq:sup_learning_bound}
    \sup_{f\in \mathcal{F}_L}\,
    \big|
            \mathcal{R}(f)
        -
            \hat{\mathcal{R}}_{w}^{\mathcal{D}}(f)
    \big|
\le 
        \bar{L}
        \sum_{i=1}^N
            w_i\,
               s_i
    +
        \bar{L}
        \sum_{i=1}^N
            w_i\,
            \mathcal{W}_1(\mu_i,\hat{\mu}_i)
\end{equation}
where $\mathcal{F}_L\eqdef \{f\in \mathcal{F}:\, \operatorname{Lip}(f)\le L\}$ (note that we can include the case where $\operatorname{Lip}(f)=0$ trivially).

Let $[N]\eqdef \{1,\dots,N\}$.
Next, for each $t>0$, we bound the following probability beginning with a simple union bound and then applying~\cite[Lemma 16]{hou2023instance} $N$ times (once for each $\mathcal{W}_1(\mu_i,\hat{\mu}_i)$)
\allowdisplaybreaks
\begin{align*}
\numberthis
\label{eq:UnionBound_BEGIN}
& \mathbb{P}\big(
        (\forall i \in [N])\,
            \big|
                \mathcal{W}_1(\mu_i,\hat{\mu}_i)
                -
                \mathbb{E}\big[
                    \mathcal{W}_1(\mu_i,\hat{\mu}_i)
                \big]
            \big|
        < t
    \big)
\\
& \ge 
    1 - 
        \sum_{i=1}^N
        \,
        \mathbb{P}\big(
            \big|
                \mathcal{W}_1(\mu_i,\hat{\mu}_i)
                -
                \mathbb{E}\big[
                    \mathcal{W}_1(\mu_i,\hat{\mu}_i)
                \big]
            \big|
        \ge 
        t
    \big)
\\
& \ge 
    1- 
    \sum_{i=1}^N\,
       2
       e^{-2\mathcal{D}_i\,t^2}
\\
\numberthis
\label{eq:UnionBound_END}
& \ge
        1
    - 
       2
       N
       e^{-2N^{\star}\,t^2}
\end{align*}
(we have used the fact that $\operatorname{diam}(\mathcal{X})=1$)
where $N^{\star}\eqdef \max_{i=1,\dots,N}\, \mathcal{D}_i$.
Let $\delta \in (0,1]$ and solve the following for $t$: 
\[
    \delta 
    =
    2
       N
       e^{-2N^{\star}\,t^2}
\]
yields
\begin{equation}
\label{eq:desired_delta_confidenceLevel}
        t 
    = 
        \biggl(
            \frac{\ln(2 N /\delta)}{2N^{\star}}
        \biggr)^{1/2}
.
\end{equation}
Thus,~\eqref{eq:desired_delta_confidenceLevel} and our union bound in~\eqref{eq:UnionBound_BEGIN}-\eqref{eq:UnionBound_END} imply that: with probability at-least $1-\delta$ the following holds 
\begin{equation}
\label{eq:measure_bound__noExpect}
    \max_{i\in [N]}
        \mathcal{W}_1(\mu_i,\hat{\mu}_i)
    -
        \mathbb{E}\big[
            \mathcal{W}_1(\mu_i,\hat{\mu}_i)
        \big]
    \le
        \biggl(
            \frac{\ln(2 N /\delta)}{2N^{\star}}
        \biggr)^{1/2}
.
\end{equation}
Applying the bound on the expected $1$-Wasserstein distance between the empirical measure $\hat{\mu}_i$ and the true data-generating measure for the $i^{th}$ dataset once for each $i\in [N]$,~\cite[Lemma 16 and Table 2]{hou2023instance}, to~\eqref{eq:measure_bound__noExpect} implies that there exists some constant $C\ge 1$ (depending only on $d+D$ and on $\mathcal{Z}$) such that: the following holds with probability at-least $1-\delta$
\begin{equation}
\label{eq:measure_bound}
 \hspace{-0.2cm}
    \max_{i\in [N]}
        \mathcal{W}_1(\mu_i,\hat{\mu}_i)
    -
        \frac{C}{|\mathcal{D}_i|^{1/(d+D)}}
    \le
        \biggl(
            \frac{\ln(2 N /\delta)}{2N^{\star}}
        \biggr)^{1/2}
,
\end{equation}
where we note that we have used the fact that $d+D\ge 3$ since either $d>1$ or $D>1$ (to avoid the critical regime where $d=D$ which gives a an additional $\log(N)$ factor to the second term in~\eqref{eq:measure_bound}).
Merging~\eqref{eq:measure_bound} with the bounds in~\eqref{eq:sup_learning_bound} implies that: with probability at-least $1-\delta$ we have
\begin{align*}
\numberthis
\label{eq:nearly_completed_bound}
& \sup_{f\in \mathcal{F}_L}
\,
    \big|
            \mathcal{R}(f)
        -
            \hat{\mathcal{R}}_{w}^{\mathcal{D}}(f)
    \big|
\\ 
\le &
        \bar{L}
        \sum_{i=1}^N
            w_i\,
               s_i
    +
        \bar{L}
        \sum_{i=1}^N
            w_i\,
        \biggl(
                \frac{C}{|\mathcal{D}_i|^{1/(d+D)}}
            +
                \biggl(
                    \frac{\ln(2 N /\delta)}{2N^{\star}}
                \biggr)^{1/2}
        \biggr)
\\
= &
        \bar{L}
        \sum_{i=1}^N
            w_i\,
               s_i
    +
        \bar{L}
        \sum_{i=1}^N
            w_i\,
            \frac{C}{|\mathcal{D}_i|^{1/(d+D)}}
    +
        \bar{L}\,
        \sum_{i=1}^N
            w_i\,
                \biggl(
                    \frac{\ln(2 N /\delta)}{2N^{\star}}
                \biggr)^{1/2}
\\
\numberthis
\label{eq:simplex_again}
= &
        \bar{L}
        \sum_{i=1}^N
            w_i\,
               s_i
    +
        C\,
        \bar{L}
        \sum_{i=1}^N
            \,
            \frac{w_i}{|\mathcal{D}_i|^{1/(d+D)}}
    +
        \bar{L}\,
        \biggl(
            \frac{\ln(2 N /\delta)}{2N^{\star}}
        \biggr)^{1/2}
\\
= &
        \bar{L}
        \sum_{i=1}^N
            w_i
            \,
            \Big(
               s_i
            +
            \frac{C}{|\mathcal{D}_i|^{1/(d+D)}}
            \Big)
    +
        \bar{L}\,
        \frac{
            \sqrt{
                \ln(2 N /\delta)
            }
        }{
            \sqrt{2N^{\star}}
        }
\\
\numberthis
\label{eq:general_bound__RHS}
\le &
        \bar{L}\,C
        \sum_{i=1}^N
            w_i
            \,
            \Big(
               s_i
            +
            \frac{1}{|\mathcal{D}_i|^{1/(d+D)}}
            \Big)
    +
        \bar{L}\,
        \frac{
            \sqrt{
                \ln(2 N /\delta)
            }
        }{
            \sqrt{2N^{\star}}
        }
\end{align*}
where~\eqref{eq:simplex_again} again follows from the fact that $w\in \Delta_N$.  
In particular, if $
|w-e_1|\in
\mathcal{O}\big(\frac{1}{
    {N^{\star}}^{1/(d+D)}
}\big)
$ then: for every $\delta>0$, the right-hand side of~\eqref{eq:general_bound__RHS} converges to $0$ and is of the order of $\mathcal{O}\big(\frac{1}{
    {N^{\star}}^{1/(d+D)}
}\big)$.

Finally, for each $\eta\ge 0$ define the probability measure $\bar{\mathbb{P}}^{\eta}$ on $[N]$ by
\[
        \bar{\mathbb{P}}^{\eta}
    \eqdef 
        \sum_{i=1}^N\,
            \frac{
                I_{\mathcal{W}_1(\hat{\mu}_1,\hat{\mu}_i)\le \eta}
            }{
                \sum_{u=1}^N\,
                    I_{\mathcal{W}_1(\hat{\mu}_1,\hat{\mu}_u)\le \eta}
            }
        \,
        \delta_i
    \eqdef 
    \sum_{i=1}^N\,
            C_{\eta}^{-1}
                I_{\mathcal{W}_1(\hat{\mu}_1,\hat{\mu}_i)\le \eta}
        \,
        \delta_i
,
\]
where $C_{\eta}\eqdef \sum_{u=1}^N\,
                    I_{\mathcal{W}_1(\hat{\mu}_1,\hat{\mu}_u)\le \eta}$.
Then, for each $w\in \Delta_N$, consider the probability measure $\mathbb{P}_w\eqdef \sum_{i=1}^N\, w_i\delta_i$ on $[N]$.  The KL-divergence between $\mathbb{P}_w$ and $\bar{\mathbb{P}}^{\eta}$ is non-negative and given by
\[
    \operatorname{KL}(\mathbb{P}_w\|\bar{\mathbb{P}}^{\eta})
=
    \sum_{i=1}^N\,
        w_i \log\Big(
            C_{\eta}\, 
            \frac{w_i}{
                I_{\mathcal{W}_1(\hat{\mu}_1,\hat{\mu}_i)\le \eta}
            }
        \Big)
.
\]
Let $w\in \Delta_N$ be such that $\mathbb{P}_w\ll \bar{\mathbb{P}}^{\eta}$; so that $\operatorname{KL}(\mathbb{P}_w\|\bar{\mathbb{P}}^{\eta})$ is finite.  Then, for every $\eta\ge 0$ and $\gamma>0$ the right-hand side of~\eqref{eq:general_bound__RHS} can be further bounded above by the following \textit{finite} value: for every $0<\delta \le 1$ the following holds with probability at-least $1-\delta$
\begin{align}
\label{eq:completed_bound}
\sup_{f\in \mathcal{F}_L}
\,
    \big|
            \mathcal{R}(f)
        -
            \hat{\mathcal{R}}_{w}^{\mathcal{D}}(f)
    \big|
& \le 
        \bar{L}
        C\,
        \sum_{i=1}^N
            w_i
            \,
            \Big(
               s_i
            +
            \frac{1}{|\mathcal{D}_i|^{1/(d+D)}}
            \Big)
\\
\nonumber
&
    +
        \frac{\bar{L}C\,}{\gamma}
        \sum_{i=1}^N\,
        w_i \log\Big(
            C_{\eta}\, 
            \frac{w_i}{
                I_{\mathcal{W}_1(\hat{\mu}_1,\hat{\mu}_i)\le \eta}
            }
        \Big)
\\
\nonumber
    &
    +
        \bar{L}\,
        \frac{
            \sqrt{
                \ln(2 N /\delta)
            }
        }{
            \sqrt{2N^{\star}}
        }
.
\end{align}
If, $\mathbb{P}_w\ll \bar{\mathbb{P}}^{\eta}$ and, further, $w$ and $\eta$ are chosen such that $
|w-e_1|,
\eta\,
        \operatorname{KL}\big(
            \mathbb{P}_w
        \|
            \bar{\mathbb{P}}^{\eta}
        \big)
\in
\mathcal{O}\big(\frac{1}{
    {N^{\star}}^{1/(d+D)}
}\big)
$ then the right-hand side of~\eqref{eq:completed_bound} is of the order of $\mathcal{O}(1/N^{1/(d+D)})$.
\end{proof}

\begin{lemma}[$(w^{\star},\mathcal{D})$-Compatibility]
\label{lm:PD}
The data ${\cal D} = \cup_{i=1}^N\,\mathcal{D}_i$ and the weights $w^{\star}$ satisfy 
\allowdisplaybreaks
\begin{align} 
&
\sum_{i=1}^N \sum_{j=1}^{|\mathcal{D}_i|} u^i_j u^{i\top}_j \geq 0 , 
 \label{Sum(uij)Tuij>=0}\\  
& 
[w^\star_1 I_p ,\cdots, w^\star_N I_p]^{\top} 
 \bigg( \sum_{i=1}^N
 w^\star_i \,
 \sum_{j=1}^{|\mathcal{D}_i|}  u^i_j u^{i\top}_j \bigg) 
  [ w^\star_1 I_p, \cdots, w^\star_N I_p ] \geq 0 
 .
 \label{WeightedSum(uij)Tuij>=0}
 \end{align} 
\end{lemma} 
\begin{remark}
\label{rem:Notation_InnerProduct}
We keep our notation light and use $u^{\top}v$ to implement the Euclidean inner-product between any two vectors $u,v\in \mathbb{R}^k$ for any $k\in \mathbb{N}_+$. 
We use $| \cdot |$ to denote the Euclidean norm of a vector or Fr\"{o}benius norm of a matrix, whichever is applicable. 
For two square matrices $A$ and $B$, $A\leq B$ ($A<B$, resp.) means that $B-A$ is positive semi-definite (positive definite, resp.); see \citep[page 146 - Equation (7)]{LaxBookLinearAlgebra_2007} for details.
\end{remark} 

Lemmas~\ref{lm:|AB|<=|A||B|} and \ref{lm:|A1+An|2<=(|A1|2+|An|2)n} give some elementary properties of matrix algebra, which will be used repeatedly in the subsequent proofs. 
\begin{lemma} 
\label{lm:|AB|<=|A||B|}
The Euclidean norm $|\cdot|$ of matrices has the following properties: 

\noindent\emph{i)}. For any matrices $A\in \mathbb{R}^{n_1 \times n_2}$ and $B\in\mathbb{R}^{n_2\times n_3}$, we have that
$|AB| \leq |A|\cdot |B|$;   

\noindent\emph{ii)}. For any matrices $A$, $B\in \mathbb{R}^{n_1 \times n_2}$, we have $|A+B| \leq |A|+|B|$.  
\end{lemma} 
\begin{proof}  
Denote $A=(A_{ij})_{1\leq i\leq n_1 , 1\leq j \leq n_2}$ and $B=(B_{jk})_{1\leq j \leq n_2, 1\leq k \leq n_3}$. 
By the Cauchy-Schwarz inequality, we have for each column vector $B_{\cdot k}$ of $B$ that 
\begin{align} 
 & |A B_{\cdot k}|^2 = \sum_{i=1}^{n_1} \big( \sum_{j=1}^{n_2} A_{ij} B_{jk} \big)^2 
  \leq \sum_{i=1}^{n_1} \big( \sum_{j=1}^{n_2} A_{ij}^2 \sum_{j=1}^{n_2} B_{jk}^2 \big) 
  = \big( \sum_{i=1}^{n_1} \sum_{j=1}^{n_2} A_{ij}^2 \big) \sum_{j=1}^{n_2} B_{jk}^2 
  = |A|^2 \sum_{j=1}^{n_2} B_{jk}^2 . 
  \notag 
\end{align} 
It then follows that 
\begin{align} 
 |AB|^2 = & |(A B_{\cdot 1}, A B_{\cdot 2}, ..., A B_{\cdot n_3} ) |^2 
 =  |A B_{\cdot 1}|^2 + |A B_{\cdot 2}|^2 + \cdots + |A B_{\cdot n_3}|^2 
 \leq  |A|^2 \sum_{j=1}^{n_2} \sum_{k=1}^{n_3} B_{jk}^2 = |A|^2 |B|^2 , \notag 
\end{align} 
and assertion i) is proved. 

For $A$, $B\in \mathbb{R}^{n_1\times n_1}$, we have 
\begin{align} 
|A+B|^2 = & \sum_{i=1}^{n_1} \sum_{j=1}^{n_2} (A_{ij} + B_{ij})^2  \notag \\ 
 = &  \sum_{i=1}^{n_1} \sum_{j=1}^{n_2} A_{ij}^2 + \sum_{i=1}^{n_1} \sum_{j=1}^{n_2} B_{ij}^2 
 + 2 \sum_{i=1}^{n_1} \sum_{j=1}^{n_2} A_{ij} B_{ij} \notag \\ 
 \leq & |A|^2 + |B|^2 + 2 |A|^2 |B|^2 
  = (|A|+|B|)^2 , \notag 
\end{align} 
which proves assertion ii). 
\end{proof}

\begin{lemma} 
\label{lm:|A1+An|2<=(|A1|2+|An|2)n} 
Let $w_1$, $w_2$, ..., $w_n$ be positive numbers such that $\sum_{k=1}^n w_k = 1$. 
For any $n$ matrices $A_1$, ..., $A_n\in \mathbb{R}^{n_1 \times n_2}$, it satisfies that 
$\big| \sum_{k=1}^n w_k A_k \big|^2 \leq \sum_{k=1}^n w_k |A_k|^2$; 
particularly, when $w_1=w_2=\cdots = w_n=1/n$, we have 
$\big|\sum_{k=1}^n  A_k \big|^2 \leq n \sum_{k=1}^n |A_k|^2$ . 
\end{lemma}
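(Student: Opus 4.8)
The plan is to recognize the first inequality as a coordinate-wise instance of Jensen's inequality for the convex function $t \mapsto t^2$, and then to recover the stated special case simply by specializing the weights. Writing $A_k = ((A_k)_{ij})$ and using the definition of the Frobenius norm from Remark~\ref{rem:Notation_InnerProduct}, the left-hand side expands as
\[
\Big| \sum_{k=1}^n w_k A_k \Big|^2 = \sum_{i,j} \Big( \sum_{k=1}^n w_k (A_k)_{ij} \Big)^2 .
\]

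First I would fix a single entry index $(i,j)$ and invoke the scalar inequality $\big(\sum_{k=1}^n w_k a_k\big)^2 \leq \sum_{k=1}^n w_k a_k^2$, which holds for arbitrary reals $a_1, \dots, a_n$ whenever the $w_k$ are nonnegative and sum to one. This is precisely Jensen's inequality for the convex map $t \mapsto t^2$, and it can alternatively be derived in one line from the Cauchy--Schwarz inequality (the same tool already used for Lemma~\ref{lm:|AB|<=|A||B|}). Applying it with $a_k = (A_k)_{ij}$ gives $\big(\sum_k w_k (A_k)_{ij}\big)^2 \leq \sum_k w_k (A_k)_{ij}^2$ for every pair $(i,j)$.

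Next I would sum this bound over all entries $(i,j)$ and interchange the two finite summations, which is harmless since everything is finite. Because $\sum_{i,j} (A_k)_{ij}^2 = |A_k|^2$, the right-hand side collapses to $\sum_{k=1}^n w_k |A_k|^2$, establishing the general claim $\big| \sum_{k=1}^n w_k A_k \big|^2 \leq \sum_{k=1}^n w_k |A_k|^2$.

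Finally, the special case follows by setting $w_1 = \dots = w_n = 1/n$: the general inequality then reads $\big| \tfrac{1}{n}\sum_{k=1}^n A_k \big|^2 \leq \tfrac{1}{n}\sum_{k=1}^n |A_k|^2$, and multiplying through by $n^2$ yields $\big| \sum_{k=1}^n A_k \big|^2 \leq n \sum_{k=1}^n |A_k|^2$. There is essentially no serious obstacle in this argument; the only points meriting a word of care are the correct invocation of the entrywise convexity (Jensen) step and the reordering of the finite double sum, both of which are routine.
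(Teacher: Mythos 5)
Your proof is correct and follows essentially the same route as the paper's: an entrywise application of Jensen's inequality for $t \mapsto t^2$ with the weights $w_k$, followed by summation over all matrix entries, with the special case recovered by setting $w_k = 1/n$ and scaling. No gaps.
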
 
\begin{proof} 
Denote $A_k = ( A_{k, ij})_{1\leq i\leq n_1,  1\leq j \leq n_2}$, $k=1$, ..., $n$. 
By the Jensen's inequality, we have 
\begin{align} 
 ( w_1 A_{1, ij} + \cdots + w_n A_{n, ij})^2 \leq  w_1 A_{1, ij}^2 + \cdots + w_n A_{n, ij}^2  . \notag 
\end{align}  
It then follows that 
\begin{align} 
  |w_1 A_1 + \cdots + w_n A_n|^2 
 = & \sum_{i=1}^{n_1} \sum_{j=1}^{n_2} (w_1 A_{1, ij} + \cdots + w_n A_{n, ij})^2   \notag \\ 
 \leq &  \sum_{i=1}^{n_1} \sum_{j=1}^{n_2} w_1 A_{1, ij}^2 + \cdots + w_n A_{n, ij}^2   \notag \\ 
  = &   w_1 \sum_{i=1}^{n_1} \sum_{j=1}^{n_2} A_{1, ij}^2 + \cdots + w_n \sum_{i=1}^{n_1} \sum_{j=1}^{n_2} A_{n, ij}^2  \notag \\ 
  = &   w_1 |A_1|^2 + \cdots + w_n |A_n|^2  .   
  \notag 
\end{align} 
\end{proof} 

Next, prove our main result, namely Theorem~\ref{thm:RegretOptimal_Dynamics__MainTextFormulation}. The result is first rewritten, in the optimal control-theoretic notation introduced in Section~\ref{s:Main_ss:Algorithm}.

\begin{theorem}[The Regret-Optimal Algorithm - Control Theoretic Form]
\label{thm:RegretOptimal_Dynamics}
Suppose that 
Assumption~\ref{assm:bddxy} holds.  Then, the optimal control problem 
\eqref{eq:penalized_regret_functional} and \eqref{Theta-simple} admits a unique solution for $t=0, 1, ..., T-1$ such that 
\allowdisplaybreaks
\begin{equation}
\hspace{-0.3cm}
\begin{aligned} 
 \boldsymbol\alpha(t)  
 = & -[(\lambda + \beta )I_{Np} + P(t+1)]^{-1}  
 \big[ (\lambda I_{Np} + P(t+1)) \Theta(t) 
 - \lambda  \Theta^\star + S(t+1)  \big] ,   \label{optimala} 
\end{aligned}
\end{equation}
and the resulting cost is 
\begin{align} 
L^\star(\boldsymbol\alpha; \mathcal{D}) 
= \Theta^{\star \top} P(0) 
\Theta^\star + 2 S^\top(0) \Theta^\star + r(0) .
\label{optimalL} 
\end{align} 
The $P(t)$, $S(t)$, and $r(t)$ in \eqref{optimala} and \eqref{optimalL} are determined by the backward equation system on $t=1,\dots,T$:  
\allowdisplaybreaks\begin{align} 
& \begin{cases}
   P(t)  =  \beta I_{Np} - \beta^2 [(\lambda+\beta) I_{Np} + P(t+1) ]^{-1} , \label{eqnP(t)} \\ 
   P(T) =  [w^\star_1 I_p ,\cdots, w^\star_N I_p]^{\top} 
 \big( \sum_{i=1}^N w^\star_i \sum_{j=1}^{|\mathcal{D}_i|}   u^i_j u^{i\top}_j \big)  
  [ w^\star_1 I_p, \cdots, w^\star_N I_p] ,  
\end{cases}   \\
& \begin{cases}
  S(t) =  \beta \big[ (\lambda+\beta)I_{Np} + P(t+1) \big]^{-1} (S(t+1) - \lambda \Theta^\star )
  \label{eqnS(t)} ,     \\ 
   S(T) =  - [ w^\star_1 I_p, \cdots, w^\star_N I_p]^{\top} \sum_{i=1}^N w^\star_i \sum_{j=1}^{|\mathcal{D}_i|}  {u^i_j} y^i_j ,\end{cases} \\
&\begin{cases} 
 r(t) =  - ( S(t+1) - \lambda \Theta^\star )^\top
 [(\lambda+\beta)I_{Np} + P(t+1)]^{-1} \cdot \\ 
 \hspace{1.1cm} ( S(t+1) - \lambda \Theta^\star )     
 + \lambda \| \Theta^\star \|_2^2 + r(t+1), 
 \label{eqnr(t)} \\ r(T) = \sum_{i=1}^N w^\star_i \sum_{j=1}^{|\mathcal{D}_i|} |y^i_j|^2 . 
 \end{cases}  
\end{align} 
\end{theorem}

\begin{proof}[Proof of Theorem~\ref{thm:RegretOptimal_Dynamics} (and therefore of Theorem~\ref{thm:RegretOptimal_Dynamics__MainTextFormulation})]
Let $V(t, \Theta)$ be the value function associated with the optimal control problem
\eqref{eq:penalized_regret_functional} and \eqref{Theta-simple}. 
By the dynamic programming principle, $V(t, \Theta)$  satisfies the equation 
\allowdisplaybreaks
\begin{align} 
& V (t, \Theta) = \min_{\boldsymbol\alpha} 
 \big[   \lambda \left| \Theta + \boldsymbol\alpha - \Theta^\star \right|^2 
 + \beta \left| \boldsymbol\alpha \right|^2 
 + V (t+1, \Theta + \boldsymbol\alpha ) \big] , \quad t=0, 1, ..., T-1 ,   \label{dpVw} \\
 & V(T, \Theta ) 
= l (\theta^{w^\star}; \mathcal{D} ) .   \label{V(T,Theta)} 
\end{align}

The loss function can be written as a linear quadratic function of 
$\Theta=[\theta_1^\top, \cdots, \theta_N^\top]^\top$ such that 
\allowdisplaybreaks\begin{align} 
   l (\theta^{w^\star}; \mathcal{D} ) 
 = & 
 \sum_{i=1}^N w^\star_i \sum_{j=1}^{|\mathcal{D}_i|}  \Big[ u^{i\top}_j \sum_{k=1}^N w^\star_k \theta_k - y^i_j  \Big]^2 \notag \\ 
 = &  \sum_{i=1}^N w^\star_i \sum_{j=1}^{|\mathcal{D}_i|} 
 \Big[ \sum_{k=1}^N w_k^{\star 2} \theta_k^\top u^i_j u^{i\top}_j \theta_k 
   -2 \sum_{k=1}^N w_k^\star y^i_j u^{i \top}_j \theta_k + |y^i_j|^2 \Big]
 \notag \\ 
= &  \Big(\sum_{k=1}^N w^\star_k \theta_k^\top \Big) \Big( \sum_{i=1}^N w^\star_i \sum_{j=1}^{|\mathcal{D}_i|}  u^i_j u^{i\top}_j  \Big) \Big(\sum_{k=1}^N w^\star_k  \theta_k\Big) 
\notag \\ 
& - 2 \sum_{i=1}^N w^\star_i \sum_{j=1}^{|\mathcal{D}_i|} y^i_j u^{i\top}_j \sum_{k=1}^N w^\star_k \theta_k  + \sum_{i=1}^N w^\star_i \sum_{j=1}^{|\mathcal{D}_i|} | y^i_j |^2 \notag \\ 
= & \Theta^\top [w^\star_1 I_p ,\cdots, w^\star_N I_p]^\top 
 \Big( \sum_{i=1}^N w^\star_i \sum_{j=1}^{|\mathcal{D}_i|}  u^i_j u^{i\top}_j \Big)  
 [ w^\star_1 I_p, \cdots, w^\star_N I_p ] \Theta \notag \\ 
 & + \sum_{i=1}^N w^\star_i \sum_{j=1}^{|\mathcal{D}_i|} | y^i_j |^2  
 - 2 \sum_{i=1}^N w^\star_i \sum_{j=1}^{|\mathcal{D}_i|} y^i_j u^{i \top}_j   
  [ w^\star_1 I_p, \cdots, w^\star_N I_p ] \Theta . \label{lwT}  
\end{align}

We claim that $V(t, \Theta)$ takes the linear quadratic form 
\allowdisplaybreaks\begin{align} 
 V (t, \Theta) 
=  \Theta^\top P(t) \Theta + 2 S^\top (t) \Theta + r(t) . \label{ansatzVw}
\end{align} 
To see this, note from \eqref{V(T,Theta)} and \eqref{lwT} that 
$V(T, \Theta)$ takes the linear quadratic form of \eqref{ansatzVw}. By solving the dynamic programming equation \eqref{dpVw} at $t=T-1$ for the optimal $\boldsymbol\alpha$ and substituting it back into the equation, we obtain $V(T-1, \Theta)$ also taking the linear quadratic form of \eqref{ansatzVw}. By backward induction we have that $V(t,\Theta)$ takes the form of \eqref{ansatzVw} for all $t=0,...,T$.

\begin{remark} Here $r(t)$ is a constant term independent of the ``state variable'' $\Theta$, which we note exists because the cost $L^\star$ defined by \eqref{eq:penalized_regret_functional} contains terms independent of $\Theta(t)$.
\end{remark}

Combining \eqref{V(T,Theta)}, 
\eqref{lwT}, and \eqref{ansatzVw}  verifies that $P(T)$ and $S(T)$ satisfy the terminal condition as given by \eqref{eqnP(t)} and \eqref{eqnS(t)}. 
We substitute \eqref{ansatzVw} into \eqref{dpVw} and reorganize the terms to get 
\begin{align}
  & \Theta^\top P(t) \Theta + 2 S^\top(t) \Theta + r(t)  \notag \\
   = &  \min_{\boldsymbol\alpha} \Big\{  \boldsymbol\alpha^\top [(\lambda + \beta ) I_{Np} + P(t+1)] \boldsymbol\alpha  
   + 2 [ \Theta^\top P(t+1)  
 + S^\top (t+1) +\lambda (\Theta^\top - \Theta^{\star \top})]  \boldsymbol\alpha 
 \notag \\ 
 & \hspace{2cm} + \Theta^\top ( \lambda I_{Np} + P(t+1) )  \Theta 
 + 2 ( S^\top(t+1) - \lambda \Theta^\star ) \Theta 
+ \lambda |\Theta^\star|^2 + r(t+1) \Big\} . 
\label{DPVansatz}
\end{align} 
By the first order condition\footnote{NB, we can argue this way, since the problem is convexified upon fixing $w^{\star}$.  In fact, this is the control-theoretic motivation for using Algorithm~\ref{alg:RegretOptimizationHeuristic__WarmStart} to decouple the optimization of $w$ and of $\theta_{\cdot}$ from one another, and not treating them as a single (non-convex if left coupled) control problem.} for the right-hand side of \eqref{DPVansatz} with respect to $\boldsymbol\alpha$, we have  
\allowdisplaybreaks\begin{align} 
 0 = &   [(\lambda+\beta) I_{Np} + P(t+1)] \boldsymbol\alpha  
  +  [ ( \lambda I_{Np} + P(t+1) ) \Theta  
 + S(t+1) - \lambda  \Theta^\star ]  . 
 \label{1stOrdera}
\end{align}
By Lemma~\ref{lm:PD}, we have 
$P(T)+(\lambda+1)I_{Np} >0$, and therefore from \eqref{1stOrdera} we obtain the optimal $\boldsymbol{\alpha}$ taking the form of \eqref{optimala}.

We further substitute $\boldsymbol{\alpha}$ of the form \eqref{optimala} into the right side of \eqref{DPVansatz} to obtain 
\allowdisplaybreaks\begin{align} 
 & \Theta^\top  P(t) \Theta  + 2 S^\top(t) \Theta  + r(t) \notag \\ 
 = & - \big[ \Theta^\top (\lambda I_{Np} + P(t+1)) 
 + S^\top(t+1) - \lambda \Theta^{\star \top}  
 \big] \cdot \big[(\lambda + \beta )I_{Np} + P(t+1)\big]^{-1} \cdot \notag \\ 
 & \big[  (\lambda I_{Np} + P(t+1)) \Theta  + S(t+1) 
  -  \lambda  \Theta^\star  \big]  
  + \Theta^\top (\lambda I_{Np} + P(t+1)) 
  \Theta \notag \\ 
 &  + 2 [ S^\top(t+1) - \lambda \Theta^{\star \top}]\Theta + \lambda |\Theta^\star|^2 + r(t+1) . 
  \label{eqn:Vansatz}
\end{align} 
Reorganizing the terms on right side of \eqref{eqn:Vansatz}, we have 
\begin{align} 
 & \Theta^\top  P(t) \Theta  + 2 S^\top(t) \Theta  + r(t) \notag \\ 
 = & - \Theta^\top   (\lambda I_{Np} + P(t+1) ) [(\lambda + \beta )I_{Np} + P(t+1)]^{-1} 
 (\lambda I_{Np} + P(t+1))   \Theta \notag \\ 
 & + \Theta^\top ( \lambda I_{Np} + P(t+1) ) \Theta + 2 [ S^\top(t+1) - \lambda \Theta^{\star \top}]\Theta  \notag \\  
 & - 2 [S^\top(t+1) - \lambda \Theta^{\star \top}] 
 [(\lambda +\beta )I_{Np} + P(t+1) ]^{-1} (\lambda I_{Np} + P(t+1)) \Theta \notag \\  
 & - (S^\top(t+1) - \lambda \Theta^{\star\top}) [(\lambda + \beta )I_{Np} + P(t+1)]^{-1} (S(t+1) - \lambda \Theta^\star)  
 \notag \\
 & + \lambda |\Theta^\star|^2 + r(t+1) . 
  \label{eqn:Vansatz-simplified} 
\end{align} 
Matching the coefficients of the quadratic terms of $\Theta$ on both sides of \eqref{eqn:Vansatz-simplified}, we obtain 
\begin{align} 
 P(t)  
  = &  -  (\lambda I_{Np} + P(t+1) ) [(\lambda + \beta )I_{Np} + P(t+1)]^{-1} 
 (\lambda I_{Np} + P(t+1))   \notag \\ 
 & +   \lambda I_{Np} + P(t+1) .  
 \label{eqn:P-unsimplified}
\end{align} 
The right side of 
\eqref{eqn:P-unsimplified} further simplifies to  
\begin{align} 
& -  (\lambda I_{Np} + P(t+1) ) [(\lambda + \beta )I_{Np} + P(t+1)]^{-1} 
 (\lambda I_{Np} + P(t+1))   
  +  \lambda I_{Np} + P(t+1)   \notag \\ 
 = &   -  [ ( \lambda + \beta ) I_{Np} + P(t+1) - \beta I_{Np} ]  [(\lambda + \beta )I_{Np} + P(t+1)]^{-1} 
 (\lambda I_{Np} + P(t+1)) \notag \\
 & +  \lambda I_{Np} + P(t+1)  \notag \\ 
 = & -   ( \lambda  I_{Np} + P(t+1) )  + \beta [(\lambda + \beta )I_{Np} + P(t+1)]^{-1} 
 (\lambda I_{Np} + P(t+1))  
  +  \lambda I_{Np} + P(t+1) \notag \\ 
  = & 
  \beta [(\lambda + \beta )I_{Np} + P(t+1)]^{-1} 
 (\lambda I_{Np} + P(t+1))  \notag \\ 
 = & 
  \beta [(\lambda + \beta )I_{Np} + P(t+1)]^{-1} 
 [ ( \lambda + \beta ) I_{Np} + P(t+1) - \beta I_{Np} ] \notag  \\ 
 = & \beta I_{Np} - \beta^2 [(\lambda + 1)I_{Np} + P(t+1)]^{-1} . 
 \label{eqn:P-simplified}
\end{align} 
Combining \eqref{eqn:P-unsimplified}
and \eqref{eqn:P-simplified} gives \eqref{eqnP(t)} for $t=T-1$. 

Matching the coefficients of the linear terms of $\Theta$ on both sides of \eqref{eqn:Vansatz-simplified} and taking the transpose, we obtain  
\begin{align} 
 S(t) = & - (\lambda I_{Np} + P(t+1) ) [(\lambda + \beta )I_{Np} + P(t+1) ]^{-1} ( S(t+1) - \lambda \Theta^\star ) \notag \\ 
& +  S(t+1) - \lambda \Theta^\star .  
 \label{eqn:S-unsimplified} 
\end{align} 
The right side of \eqref{eqn:S-unsimplified} further simplifies to 
\begin{align} 
& - (\lambda I_{Np} + P(t+1) ) [(\lambda +  \beta )I_{Np} + P(t+1) ]^{-1} ( S(t+1) - \lambda \Theta^\star ) 
 + S(t+1) - \lambda \Theta^\star 
  \notag \\ 
= &   - [ (\lambda + \beta ) I_{Np} + P(t+1) - \beta I_{Np} ] [(\lambda + \beta )I_{Np} + P(t+1) ]^{-1} ( S(t+1) - \lambda \Theta^\star ) \notag \\ 
& + S(t+1) - \lambda \Theta^\star 
\notag \\ 
 = & - ( S(t+1) - \lambda \Theta^\star ) 
  + \beta [(\lambda +\beta )I_{Np} + P(t+1) ]^{-1} ( S(t+1) - \lambda \Theta^\star ) 
  + S(t+1) - \lambda \Theta^\star 
  \notag \\ 
 = &  \beta [(\lambda + \beta )I_{Np} + P(t+1) ]^{-1} ( S(t+1) - \lambda \Theta^\star ) . \label{eqn:S-simplified}
\end{align}
Combining \eqref{eqn:S-unsimplified} and \eqref{eqn:S-simplified} gives \eqref{eqnS(t)} for $t=T-1$. 

Matching the terms independent of $\Theta$ on both sides of \eqref{eqn:Vansatz-simplified} gives 
\begin{align} 
 r(t) = &  - ( S(t+1) - \lambda \Theta^\star )^\top
 [(\lambda+ \beta )I_{Np} + P(t+1)]^{-1} \cdot ( S(t+1) - \lambda \Theta^\star ) \notag    \\  
 &  + \lambda |\Theta^\star|^2 + r(t+1) , \notag 
\end{align} 
which is \eqref{eqnr(t)} for $t=T-1$.

Since 
$P(T) + (\lambda + \beta )I_{Np} > \beta I_{Np}$, 
we have 
\begin{align} 
P(T-1) = \beta I_{Np} 
 - \beta^2 [(\lambda + \beta )I_{Np} + P(T)]^{-1} > 0 . \notag 
\end{align} 
By backward induction on $t$, we can show that 
$\boldsymbol{\alpha}$ takes the form of \eqref{optimala} for all $t=0, 1, ..., T-1$, and $P$ and $S$ satisfy the system \eqref{eqnP(t)} and \eqref{eqnS(t)}.  
\end{proof} 

\begin{corollary} 
\label{cor:M(P)invbdd}
Under Assumption~\ref{assm:bddxy}, the solution $P$ and $S$ of the system \eqref{eqnP(t)}-\eqref{eqnS(t)} satisfies that for all $t=1$, $2$, ..., $T$, 
\begin{align} 
& | [(\lambda + \beta) I_{Np} + P(t)]^{-1} |\leq  (\lambda + \beta)^{-1},  \label{M(P)invbd}  \\ 
 & |S(t)| \leq C \cdot N 
 \label{|S|bd} , 
\end{align} 
where $C> 0$ is a constant depending on $\lambda$, $\beta$, $(w_1^\star, ..., w_N^\star)$, $\Theta^\star$, $K_x$, $K_y$ and $T$. 
\end{corollary}

\begin{proof}[Proof of Corollary~\ref{cor:M(P)invbdd}] 
Let $\| \cdot \|$ be the spectral norm of a matrix such that $\|A\|\eqdef \sup_{x\neq 0} |Ax|/|x| $. 
Since $P(t)$ is positive semi-definite, we have $(\lambda + \beta) I_{Np} + P(t) \geq (\lambda + \beta) I_{Np}$ and  
\begin{align} 
0\leq [(\lambda + \beta) I_{Np} + P(t) ]^{-1} \leq (\lambda + \beta)^{-1} I_{Np}   ,  \notag 
\end{align} 
which, by the Courant-Fisher Theorem (see \cite{hornjohnson2012}), implies that $\|  [(\lambda + \beta) I_{Np} + P(t) ]^{-1} \| \leq \| (\lambda+\beta)^{-1} I_{Np} \| = (\lambda + \beta)^{-1}$. 
Since all norms of finite dimensional normed spaces are equivalent, and in particular, $|\cdot|\le \|\cdot\|$ on the space of $N_p\times N_p$-matrices, then we find that
\begin{align} 
  \big| [(\lambda + \beta) I_{Np} + P(t) ]^{-1} \big| \leq \|  [(\lambda + \beta) I_{Np} + P(t) ]^{-1} \| , \notag 
\end{align}  
and thus $\big| [(\lambda + \beta) I_{Np} + P(t) ]^{-1} \big| \leq (\lambda + \beta)^{-1}$,  
which proves \eqref{M(P)invbd}. 

Next, we prove \eqref{|S|bd}. 
By Lemmas~\ref{lm:|AB|<=|A||B|} and \ref{lm:|A1+An|2<=(|A1|2+|An|2)n}, and by assumption that
\begin{equation}
\label{eq:moredatasets_than_dataperdataset}
        \frac1{N}\sum_{i=1}^N\,|\mathcal{D}_i| 
    \le 
        N
.
\end{equation}
we obtain 
\allowdisplaybreaks
\begin{align} 
 | S(T) |^2  
 = & \Big( \sum_{i=1}^N w^{\star 2}_i \Big) 
 \Big| \sum_{i=1}^N w^\star_i   \sum_{j=1}^{|\mathcal{D}_i|}  u^i_j y^i_j  \Big|^2  
 \leq  \Big( \sum_{i=1}^N w^{\star 2}_i \Big) 
 \sum_{i=1}^N w_i^\star  \Big| \sum_{j=1}^{|\mathcal{D}_i|}   u^i_j y^i_j \Big|^2  \notag  \\ 
 \leq &  \Big( \sum_{i=1}^N w^{\star 2}_i \Big) 
 \sum_{i=1}^N w_i^\star |\mathcal{D}_i|  \sum_{j=1}^{|\mathcal{D}_i|}   | u^i_j y^i_j |^2  
 \leq  \Big( \sum_{i=1}^N w^{\star 2}_i \Big) \sum_{i=1}^N w_i^\star |\mathcal{D}_i|^2  K_x^2 K_y^2  \notag \\ 
  \leq &  \Big( \sum_{i=1}^N w^{\star 2}_i \Big) \sum_{i=1}^N w_i^\star N^2  K_x^2 K_y^2 ,  
  \label{eq:TBBounded} 
\end{align} 
and therefore the estimate \eqref{|S|bd} holds for $t=T$. 

Assume by induction that \eqref{|S|bd} holds for $t=u$.  
By \eqref{eqnS(t)}, Lemma~\ref{lm:|AB|<=|A||B|}, and the triangle inequality, we have 
\begin{align} 
 |S(u-1)| \leq &  \beta \big| \big[ (\lambda+\beta)I_{Np} + P(u) \big]^{-1} \big| \cdot \big| S(u) - \lambda \Theta^\star  \big| \notag \\ 
  \leq & \beta \big| \big[ (\lambda+\beta)I_{Np} + P(u) \big]^{-1} \big| \cdot ( |S(u)| + \lambda | \Theta^\star | ) . 
  \notag 
\end{align} 
The estimate \eqref{|S|bd} for $t=u-1$ then follows from \eqref{M(P)invbd} and the induction hypothesis. 
Thus, by induction we have shown that \eqref{M(P)invbd} holds for all $t=1$, $2$, ..., $T$. 
\end{proof}

We proceed to prove Theorem~\ref{thm:ComplexityAlgorithm} about the complexity of the regret-optimal algorithm. 
To facilitate the proof, we introduce the following Lemma~\ref{lm:matCompComplexity} that summarizes complexities of matrix operations. 
\begin{lemma} 
\label{lm:matCompComplexity} 
Suppose all elementary arithmetic operations (e.g.\ addition, subtraction, multiplication) of two real numbers are of constant (i.e.\ $\mathcal{O}(1)$) complexity. Then elementary matrix computations have the following complexities: 

\begin{enumerate}  
\item[(i)] Adding two $n_1 \times n_2$ matrices has a complexity of $O(n_1 n_2)$. 

\item[(ii)] Multiplying an $n_1 \times n_2$ matrices by a scalar has a complexity of $O(n_1 n_2)$. 

\item[(iii)] Multiplying an $n_1 \times n_2$ matrix with an $n_2 \times n_3$ matrix has a complexity of $O(n_1 n_2 n_3)$. 
In particular, multiplying two $n\times n$ matrices has a complexity of $\mathcal{O}(n^{2.373})$.  
\end{enumerate} 
\end{lemma}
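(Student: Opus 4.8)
The plan is to treat each of the three assertions by a direct operation-counting argument, reducing everything to the stated $\mathcal{O}(1)$ cost of scalar arithmetic, and to defer only the final square-matrix bound to the existing fast-matrix-multiplication literature. No deep machinery is needed for the first two parts nor for the naive form of the third.

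For (i), I would observe that the sum $A+B$ of two $n_1 \times n_2$ matrices is computed entrywise: the $(i,j)$ entry is $A_{ij}+B_{ij}$, a single scalar addition of cost $\mathcal{O}(1)$. Since there are exactly $n_1 n_2$ entries and they are computed independently, the total cost is $n_1 n_2 \cdot \mathcal{O}(1) = \mathcal{O}(n_1 n_2)$. The argument for (ii) is identical, replacing each addition by a single scalar multiplication $c\,A_{ij}$; again $n_1 n_2$ operations of unit cost give $\mathcal{O}(n_1 n_2)$.

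For (iii), I would use the schoolbook definition of the product $C = AB \in \mathbb{R}^{n_1 \times n_3}$, whose entry is $C_{ik} = \sum_{j=1}^{n_2} A_{ij} B_{jk}$. Computing a single $C_{ik}$ costs $n_2$ scalar multiplications and $n_2 - 1$ scalar additions, hence $\mathcal{O}(n_2)$ elementary operations. There are $n_1 n_3$ such entries, so the naive algorithm costs $n_1 n_3 \cdot \mathcal{O}(n_2) = \mathcal{O}(n_1 n_2 n_3)$, establishing the first claim; specializing to $n_1 = n_2 = n_3 = n$ yields the elementary bound $\mathcal{O}(n^3)$.

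The only nontrivial point is the sharper exponent for square matrices, which is where the main ``obstacle'' lies, though it is bibliographic rather than mathematical. The bound $\mathcal{O}(n^{2.373})$ cannot be obtained from the naive algorithm; instead it follows from the state-of-the-art bounds on the matrix-multiplication exponent $\omega$, with $\omega < 2.373$ established by fast matrix multiplication algorithms of Coppersmith--Winograd type and their subsequent refinements. I would therefore simply cite the relevant result for this final bound. Thus every elementary claim reduces to a one-line counting argument, and the square-matrix improvement is an imported result whose proof lies outside the scope of this lemma.
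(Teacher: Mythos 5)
Your proposal is correct and follows essentially the same route as the paper's proof: entrywise counting for addition and scalar multiplication, the schoolbook inner-product count of $n_1 n_3$ entries at $\mathcal{O}(n_2)$ each for the general product, and an appeal to Coppersmith--Winograd-type fast matrix multiplication for the $\mathcal{O}(n^{2.373})$ square-matrix bound, which the paper likewise imports by citation rather than proving. No gaps.
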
 

\begin{proof} 
Adding two $n_1\times n_2$ matrices involves adding each entry of one matrix with the corresponding entry of the other matrix, and each entry addition has a complexity of $\mathcal{O}(1)$. Hence the total complexity is $\mathcal{O}(n_1 n_2)$. 

Multiplying an $n_1 \times n_2$ matrix by a scalar involves multiplying each entry by the scalar, and the scalar multiplication of each entry has a complexity of $\mathcal{O}(1)$. Hence the total complexity is $O(n_1 n_2)$. 

Multiplication of an $n_1 \times n_2$ matrix and an $n_2 \times n_3$ matrix requires multiplying the $n_1$ row vectors of the first matrix by the $n_3$ column vectors of the second matrix, which needs a total of $n_1 n_3$ multiplications. Multiplication of two $n_2$-dimensional vectors has a complexity of $O(n_2)$. Therefore the total complexity is $O(n_1n_2n_3)$.  
When multiplying two matrices of $n\times n$-dimension with an optimized CW-like algorithm the complexity is $\mathcal{O}(n^{2.373})$. 
\end{proof}

\begin{proof}[{Proof of Theorem~\ref{thm:ComplexityAlgorithm}}] 
\hfill\\
\noindent\textbf{Complexity of Computing $P(T)$ is $\mathcal{O}\big(N^2p^3\big)$}\hfill\\
Since $u^i_j \in \mathbb{R}^{p\times 1}$, computing each 
$u^i_j u^{i\top}_j$ has $\mathcal{O}(p^2)$ complexity, by Lemma \ref{lm:matCompComplexity}-(iii). 
Multiplying the $p\times p$ matrix $u^i_j u^{i\top}_j$ by the scalar $w^\star_i$ has a complexity of $\mathcal{O}(p^2)$, by Lemma~\ref{lm:matCompComplexity}-(iii).   
Then computing the $p\times p$ matrix $\sum_{i=1}^N w^\star_i \sum_{j=1}^{S_i}\, u^i_j u^{i\top}_j$ by adding the $\bar{N}$ matrices of dimension $p\times p$ has complexity $\mathcal{O}(\bar{N}\,p^2)$, by Lemma \ref{lm:matCompComplexity}-(i).  
Since $[w_1^\star I_p ,\cdots, w_N^\star I_p]^{\top}$ is $Np\times p$-dimensional, then computing the $np\times p$-dimensional product $[w_1^\star I_p ,\cdots, w_N^\star I_p]^{\top}\sum_{i=1}^N w^\star_i \sum_{j=1}^{S_i}\, u^i_j u^{i \top}_j$ has $\mathcal{O}(Np^3)$ complexity, by Lemma~\ref{lm:matCompComplexity}-(iii). 
Likewise, the product 
\[ 
    \Big([w_1^\star I_p ,\cdots, w_N^\star I_p]^{\top}\sum_{i=1}^N  w^\star_i \sum_{j=1}^{S_i}\, u^i_j u^{i\top}_j \Big)[w_1^\star I_p ,\cdots, w_N^\star I_p]
\]
has a complexity of $\mathcal{O}(N^2p^3)$.  Therefore, computing $P(T)$ itself is of $\mathcal{O}\big(\bar{N}p^2+N^2p^3\big)$ complexity.  
Assumption~\eqref{eq:moredatasets_than_dataperdataset} implies that $\bar{N}\leq N^2$; hence, the complexity of computing $P(T)$ is $\mathcal{O}\big(N^2p^3\big)$.

\noindent
\textbf{Complexity of Computing $S(T)$ is $\mathcal{O}\big(Np\max\{N,p\}\big)$}
\hfill\\ 
Since $u^i_j\in\mathbb{R}^{p\times 1}$ and $y^i_j\in \mathbb{R}$, 
computing the product $u^i_j y_j^i$ has complexity $\mathcal{O}(p)$, by Lemma \ref{lm:matCompComplexity}-(iii). 
And multiplying the $p\times 1$ dimensional vector $u^i_j y^i_j$ by the scalar $w^\star_i$ has complexity $\mathcal{O}(p)$, by Lemma \ref{lm:matCompComplexity}-(ii). Then the complexity of computing $\sum_{i=1}^N w^\star_i \sum_{j=1}^{S_i}\,u^i_j y_j^i$ by adding $\bar{N}$ vectors of dimension $p$ is $\mathcal{O}\big(\bar{N}p\big)$, by Lemma~\ref{lm:matCompComplexity}-(i). 
Since $- [ w_1^\star I_p, \cdots, w_N^\star I_p]^{\top}$ is an $Np\times p$-dimensional matrix and $\sum_{i=1}^N w^\star_i \sum_{j=1}^{S_i}\, u^i_j y_j^i$ is a $p\times 1$-dimensional vector, then computing their $Np\times 1$ dimensional product has $\mathcal{O}(Np^2)$ complexity, according to Lemma~\ref{lm:matCompComplexity}-(iii).  
Therefore, the computing $S(T)$ has a complexity of $\mathcal{O}(\bar{N}p + N P^2)$. 
Since~\eqref{eq:moredatasets_than_dataperdataset} implies that $\bar{N}\le N^2$ then, computing $S(T)$ had a computational cost of $\mathcal{O}\big(Np\max\{N,p\}\big)$.

\noindent
\textbf{Complexity of Computing $P(t)$ and $S(t)$ for $t=T-1,\dots, 1$ is $\mathcal{O}\big(T(Np)^{2.373}\big)$}
\hfill\\
The complexity of computing $(\lambda +\beta )I_{Np} + P(t+1)$ for each $t$ is $\mathcal{O}(Np)$, since we only need to add scalars on the matrix's \textit{diagonal}.  By \cite{LeGall_Complexity_CWLikeMatrixInversion_2014}, computing the matrix inverse $[(\lambda+\beta)I_{Np}+P(t+1)]^{-1}$ with a CW-like algorithm has a complexity of $\mathcal{O}\big((Np)^{2.373}\big)$.  
Since the addition $\beta I_{Np} - \beta^2 [(\lambda+ \beta )I_{Np} + P(t+1) ]^{-1}$ only requires us to multiply all entries of $ [(\lambda+\beta)I_{Np} + P(t+1) ]^{-1}$ by $-\beta^2$ and add along the diagonal, then it has a complexity of $\mathcal{O}\big((Np)^2 + Np\big)$, by Lemma~\ref{lm:matCompComplexity}-(i) and (ii). 
Performing the matrix addition $S(t+1)-\lambda \Theta^\star$ has complexity $\mathcal{O}(Np)$ by Lemma~\ref{lm:matCompComplexity}-(i).  
Computing the $Np\times 1$-dimensional product 
$[(\lambda+1)I_{Np} + P(t+1)]^{-1} (S(t+1) - \lambda \Theta^\star)$ 
by multiplying an $Np\times Np$ matrix by an $Np\times 1$ matrix has complexity $\mathcal{O}(N^2 p^2)$, according to Lemma~\ref{lm:matCompComplexity}-(iii). 
Therefore, computing each $P(t)$ and $S(t)$ for $t=T-1,\dots,1$, has complexity  $\mathcal{O}\big((Np)^{2.373}\big)$.  Since there are $T-1$ such matrices to compute, then the total cost of computing every $P(T-1),\dots,P(1)$ and $S(T-1)$, ..., $S(1)$ is $\mathcal{O}\big(T(Np)^{2.373}\big)$.

\noindent
\textbf{The Cost of Computing Each $\Theta(t)$ is $\mathcal{O}(T\,N^2p^2)$}
\hfill\\
To compute $\Theta(t)$ for all $t=1$, $\dots$, $T$, starting from $\Theta(0)=\Theta^\star$, 
we need to compute $\mathbf{a}(t)$ given by \eqref{optimala} for $t=0$, $1$, $\dots$, $T-1$.  
Hence we need to quantify the complexity of computing $\mathbf{a}(t)$. 
By \eqref{eqnP(t)}, we can obtain $-[(\lambda + \beta)I_{Np} + P(t+1)]^{-1}$ by computing 
$\beta^{-2}(P(t)- \beta I_{Np})$, which involves adding a scalar to the diagonal of an $Np\times Np$ matrix, and then multiplying the matrix by a scalar. 
The cost of computing the product
$(\lambda I_{Np} + P(t+1)) \Theta(t)$ is $\mathcal{O}(N^2 p^2)$ and the cost of computing all sums in $(\lambda I_{Np} + P(t+1)) \Theta(t) 
 - \lambda \Theta^\star + S(t+1)  $ are of lower order complexity.  Finally, the cost of computing the matrix-product between $-[(\lambda + \beta)I_{Np} + P(t+1)]^{-1} $ and $
 \big[ (\lambda I_{Np} + P(t+1)) \Theta(t) 
 - \lambda \Theta^\star + S(t+1)  \big]$ is $\mathcal{O}(N^2p^2)$, by Lemma~\ref{lm:matCompComplexity}-(iii). Therefore, the cost of computing each $\mathbf{a}(t)$ is $\mathcal{O}(N^2p^2)$, and therefore the cost of computing $\Theta(t)$ for all $t=1$, ..., $t=T$ from $\Theta(0)=\Theta^\star$ is $\mathcal{O}(T N^2 p^2)$.  

\noindent
\textbf{Tallying up Complexities: The Complexity of Computing $(\Theta(t))_{t=0}^T$ is 
$\mathcal{O}( N^2 p^3 + T\, (N p)^{2.373})$.}
\hfill\\
Looking over all computations involved, we deduce that the cost of computing the sequence $(\Theta(t))_{t=1}^T$ is (in order) dominated by the computational complexity of computing the sequence $(P(t))_{t=1}^{T-1}$ and $P(T)$; whence it is 
$\mathcal{O}( N^2 p^3 + T\, (N p)^{2.373})$. 
\end{proof}

The following Lemma~\ref{lem:stability_wrtData}, 
Proposition~\ref{prop:stabilityControl}, and Corollary~\ref{cor:stabilityLast} are devoted to proving Theorem~\ref{thrm:adv_robust}, the adversarial robustness of Algorithm~\ref{alg:RegretOptimization}.

We introduce the linear map $M(\cdot): \mathbb{R}^{Np\times Np} \to \mathbb{R}^{Np\times Np}$ such that $M(Q) = (\lambda + \beta) I_{Np} + Q$ for any $Q\in \mathbb{R}^{Np\times Np}$. 
\begin{lemma} 
\label{lem:stability_wrtData}
Given any two data sets $\mathcal{D}=\cup_{i=1}^N \mathcal{D}_i=\cup_{i=1}^N \cup_{j=1}^{|\mathcal{D}_i|} 
 \{(x^i_j, y^i_j) \}$ and 
$\widetilde{\mathcal{D}}=\cup_{i=1}^N \widetilde{\mathcal{D}}_i = \cup_{i=1}^N \cup_{j=1}^{|\mathcal{D}_i|}  \{(\tilde{x}^i_j, \tilde{y}^i_j) \}$ satisfying Assumption \ref{assm:bddxy}, let $(P, S)$ and $(\widetilde{P}, \widetilde{S})$ be the solutions of \eqref{eqnP(t)}-\eqref{eqnS(t)} corresponding to $\mathcal{D}$ and $\widetilde{\mathcal{D}}$, respectively. Then, for all $t=1, ..., T$, we have 
\allowdisplaybreaks\begin{align}  
    \big| P(t) - \widetilde{P}(t) \big| 
 \le &
     C \cdot \Big[ \sum_{i=1}^N w^\star_i |\mathcal{D}_i| \sum_{j=1}^{|\mathcal{D}_i|} | u^i_j - \tilde{u}^i_j |^2 \Big]^{1/2} , 
 \label{eqn:estP-tildeP} 	 \\ 
     \big| [ M(P(t)) ]^{-1} - [ M(\widetilde{P}(t)) ]^{-1} \big| 
 \le&  
     C \cdot \Big[ \sum_{i=1}^N w^\star_i |\mathcal{D}_i| \sum_{j=1}^{|\mathcal{D}_i|} | u^i_j - \tilde{u}^i_j |^2 \Big]^{1/2} ,  
    \label{eqn:estM-tildeM} \\ 
        |S(t) - \widetilde{S}(t)| 
    \le &
      C \cdot \Big[ \sum_{i=1}^N w^\star_i |\mathcal{D}_i|  \sum_{j=1}^{|\mathcal{D}_i|} | u^i_j - \tilde{u}^i_j |^2 
     + |y^i_j - \tilde{y}^i_j |^2 \Big]^{1/2} , 
\label{eqn:estS-tildeS} 
\end{align}
where $C> 0$ is a constant depending on $\lambda$, $\beta$, $N$, $\bar{N}$, $(w_1^\star, ..., w_N^\star)$, $\Theta^\star$, $K_x$, $K_y$ and $T$. 
\end{lemma}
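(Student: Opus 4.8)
The plan is to argue by backward induction on $t$, running from the terminal time $T$ down to $1$, proving the three estimates together within a single induction. The engine is the resolvent identity combined with the uniform bound $|[M(P(t))]^{-1}|\le (\lambda+\beta)^{-1}$ supplied by \eqref{M(P)invbd} in Corollary~\ref{cor:M(P)invbdd}, which applies to both datasets since $P(t)$ and $\widetilde P(t)$ are positive semidefinite (as established in the backward induction of the proof of Theorem~\ref{thm:RegretOptimal_Dynamics}).

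First I would handle the base case $t=T$. For \eqref{eqn:estP-tildeP}, I would write $P(T)-\widetilde P(T)$ as the fixed congruence $[w^\star_1 I_p,\dots,w^\star_N I_p]^\top(\cdot)[w^\star_1 I_p,\dots,w^\star_N I_p]$ applied to the difference $\sum_i w^\star_i\sum_j(u^i_j u^{i\top}_j-\tilde u^i_j\tilde u^{i\top}_j)$, and then use the telescoping identity $uu^\top-\tilde u\tilde u^\top=u(u-\tilde u)^\top+(u-\tilde u)\tilde u^\top$. Invoking Assumption~\ref{assm:bddxy} (so $|u^i_j|,|\tilde u^i_j|\le K_x$) together with Lemmas~\ref{lm:|AB|<=|A||B|} and~\ref{lm:|A1+An|2<=(|A1|2+|An|2)n} bounds $|P(T)-\widetilde P(T)|$ by a constant times the square root of $\sum_i w^\star_i|\mathcal{D}_i|\sum_j|u^i_j-\tilde u^i_j|^2$, the constant absorbing the weight factors. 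The base case of \eqref{eqn:estS-tildeS} is entirely analogous, using $uy-\tilde u\tilde y=u(y-\tilde y)+(u-\tilde u)\tilde y$ and the additional bound $|y^i_j|^2\le K_y$ to produce the $|y^i_j-\tilde y^i_j|^2$ contribution.

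For the inductive step, the key observation is that $M(\widetilde P)-M(P)=\widetilde P-P$, so the resolvent identity reads
\[
    [M(P(t))]^{-1}-[M(\widetilde P(t))]^{-1}=[M(P(t))]^{-1}\,(\widetilde P(t)-P(t))\,[M(\widetilde P(t))]^{-1}.
\]
Combined with Lemma~\ref{lm:|AB|<=|A||B|} and \eqref{M(P)invbd}, this yields \eqref{eqn:estM-tildeM} from \eqref{eqn:estP-tildeP} at the same index, via $|[M(P(t))]^{-1}-[M(\widetilde P(t))]^{-1}|\le(\lambda+\beta)^{-2}|P(t)-\widetilde P(t)|$. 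To propagate \eqref{eqn:estP-tildeP}, I would use the recursion $P(t)-\widetilde P(t)=-\beta^2([M(P(t+1))]^{-1}-[M(\widetilde P(t+1))]^{-1})$, giving the contraction $|P(t)-\widetilde P(t)|\le\beta^2(\lambda+\beta)^{-2}|P(t+1)-\widetilde P(t+1)|$; since there are at most $T$ backward steps, the accumulated factor is absorbed into the constant $C$, which is permitted to depend on $T$.

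Finally, for \eqref{eqn:estS-tildeS} I would telescope
\[
    S(t)-\widetilde S(t)=\beta[M(P(t+1))]^{-1}\big(S(t+1)-\widetilde S(t+1)\big)+\beta\big([M(P(t+1))]^{-1}-[M(\widetilde P(t+1))]^{-1}\big)\big(\widetilde S(t+1)-\lambda\Theta^\star\big).
\]
The first term is controlled by $\beta(\lambda+\beta)^{-1}|S(t+1)-\widetilde S(t+1)|$, feeding the induction hypothesis for $S$; the second is bounded using \eqref{eqn:estM-tildeM} together with the a priori estimate $|\widetilde S(t+1)|\le CN$ from \eqref{|S|bd} applied to $\widetilde{\mathcal{D}}$. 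The main obstacle is bookkeeping: one must check that each constant generated at a backward step (depending on $\lambda$, $\beta$, the weights, $K_x$, $K_y$, $\Theta^\star$, $N$, $\bar N$) stays uniformly bounded across the $T$ iterations, and, crucially, that the $P$-difference contribution entering the $S$-recursion—which is bounded only by the $u$-differences—is absorbed into the larger right-hand quantity of \eqref{eqn:estS-tildeS} containing both $|u^i_j-\tilde u^i_j|^2$ and $|y^i_j-\tilde y^i_j|^2$, rather than producing a spurious $u$-only term.
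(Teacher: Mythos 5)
Your proposal is correct and follows essentially the same route as the paper's proof: backward induction from $t=T$, with the base cases handled by the telescoping decompositions of $u^i_j u^{i\top}_j-\tilde u^i_j\tilde u^{i\top}_j$ and $u^i_j y^i_j-\tilde u^i_j\tilde y^i_j$ under Assumption~\ref{assm:bddxy}, the inductive steps driven by the resolvent identity together with the uniform bound \eqref{M(P)invbd} of Corollary~\ref{cor:M(P)invbdd}, and the $S$-recursion split into an $[M(\cdot)]^{-1}$-times-$S$-difference term plus a resolvent-difference term controlled via \eqref{eqn:estM-tildeM} and the a priori bound \eqref{|S|bd}. The only cosmetic differences are the grouping in the $S$-decomposition (you pair the resolvent difference with $\widetilde S-\lambda\Theta^\star$, the paper with $S-\lambda\Theta^\star$) and your explicit contraction factor $\beta^2(\lambda+\beta)^{-2}$ in the $P$-step, which the paper leaves implicit.
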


\begin{proof}
We let $C$ be a constant depending on $\lambda$, $\beta$, $N$, $\bar{N}$, $\Theta^\star$, $(w_1^\star, ..., w_N^\star)$, $K_x$, $K_y$, and $T$,  
and allow $C$ to vary from place to place throughout the proof. 
We employ backward induction starting from $t=T$ to prove \eqref{eqn:estM-tildeM}, \eqref{eqn:estP-tildeP}, and 
\eqref{eqn:estS-tildeS}. 

From \eqref{eqnP(t)}, the terminal condition $| P(T) - \widetilde{P}(T) |^2$ can be written as 
\allowdisplaybreaks\begin{align} 
 \big| P(T) - \widetilde{P}(T) \big|^2  
 = &   \Big| [w_1^\star I_p ,\cdots, w_N^\star I_p ]^\top
 \Big[ \sum_{i=1}^N w_i^\star  \sum_{j=1}^{|\mathcal{D}_i|}  ( u^i_j u^{i\top}_j  - \tilde{u}^i_j \tilde{u}^{i\top}_j  ) \Big]  
 [ w_1^\star I_p, \cdots, 
 w_N^\star I_p ]  \Big|^2 \notag \\ 
 = & \sum_{k, l=1}^N 
  w_k^{\star 2} 
  w_l^{\star 2} 
 \Big| \sum_{i=1}^N w^\star_i \sum_{j=1}^{|\mathcal{D}_i|}  ( u^i_j u^{i\top}_j  - \tilde{u}^i_j \tilde{u}^{i\top}_j  ) \Big|^2 . \notag 
 \end{align} 
By Lemmas~\ref{lm:|AB|<=|A||B|} and \ref{lm:|A1+An|2<=(|A1|2+|An|2)n}, we have 
 \begin{align} 
 \big| P(T) - \widetilde{P}(T)  \big|^2  
 \leq &   \sum_{k, l=1}^N w_k^{\star 2} w_l^{\star 2} 
  \sum_{i=1}^N w_i^\star   
\Big| \sum_{j=1}^{|\mathcal{D}_i|} 
    (  u^i_j u^{i\top}_j  - \tilde{u}^i_j \tilde{u}^{i\top}_j  ) \Big|^2 
\notag \\ 
 \leq &   \sum_{k, l=1}^N w_k^{\star 2} w_l^{\star 2} 
  \sum_{i=1}^N w_i^\star |\mathcal{D}_i|  \sum_{j=1}^{|\mathcal{D}_i|} 
    \big| u^i_j u^{i\top}_j  - 
\tilde{u}^i_j \tilde{u}^{i\top}_j  \big|^2 
\notag \\ 
 = &   \sum_{k, l=1}^N w_k^{\star 2} w_l^{\star 2} 
  \sum_{i=1}^N 
  w_i^\star |\mathcal{D}_i| \sum_{j=1}^{|\mathcal{D}_i|}  
\big| ( u^i_j - \tilde{u}^i_j  )u^{i\top}_j  
 + \tilde{u}^i_j (u^{i\top}_j - \tilde{u}^{i\top}_j)   \big|^2  \notag \\ 
\leq &   \sum_{k, l=1}^N w_k^{\star 2} w_l^{\star 2} 
    \sum_{i=1}^N w_i^\star |\mathcal{D}_i| \sum_{j=1}^{|\mathcal{D}_i|}  
 2 \big( \big|  u^i_j - \tilde{u}^i_j   \big|^2 \cdot \big| u^i_j \big|^2 
 + \big| \tilde{u}^i_j \big|^2 \cdot \big| u^i_j - \tilde{u}^i_j  \big|^2 \big) 
 \notag \\ 
 \leq &   \sum_{k, l=1}^N w_k^{\star 2} w_l^{\star 2} 
  \sum_{i=1}^N w_i^\star |\mathcal{D}_i|  \sum_{j=1}^{|\mathcal{D}_i|} 4 K_x^2 
\big| u^i_j - \tilde{u}^i_j   \big|^2 
. \label{estP(T)-tildeP(T)}   
\end{align}
Therefore \eqref{eqn:estP-tildeP} holds for $t=T$.

Assume by induction that 
\eqref{eqn:estP-tildeP} holds for $t=u$. We show that \eqref{eqn:estP-tildeP} also holds for $t=u-1$. 
From \eqref{eqnP(t)}, we have  
\allowdisplaybreaks\begin{align} 
 P(u-1) - \widetilde{P}(u-1)  
= & [ M( \widetilde{P}(u) ) ]^{-1} 
 - [ M(P(u)) ]^{-1}\notag  \\ 
 = & [ M( P(u) ) ]^{-1} ( P(u) - \widetilde{P}(u) )  
[ M( \widetilde{P}(u) ) ]^{-1} ,  \notag 
\end{align} 
and by Lemma~\ref{lm:|AB|<=|A||B|} we further have 
\allowdisplaybreaks  
\begin{align} 
 \big| P(u-1) - \widetilde{P}(u-1)  \big|  
 \leq & \big| [ M( P(u) ) ]^{-1} 
 \big| \cdot \big| P(u) - \widetilde{P}(u) \big| \cdot 
\big| [ M( \widetilde{P}(u) ) ]^{-1} \big| . \label{|tildeP-P|recursion} 
\end{align} 
By Corollary~\ref{cor:M(P)invbdd} and the induction hypothesis for $t=u$,  
\eqref{|tildeP-P|recursion} implies that \eqref{eqn:estP-tildeP} holds for $t=u-1$. 
We have shown by induction that \eqref{eqn:estP-tildeP} holds for all $t=1$, ..., $T$.

Since 
\begin{align} 
  [M(P(t))]^{-1} - [M(\widetilde{P}(t))]^{-1} 
  =  [ M( P(t) ) ]^{-1} (  \widetilde{P}(t) - P(t) )  
[ M( \widetilde{P}(t) ) ]^{-1} ,   
\notag 
\end{align} 
by Lemma~\ref{lm:|AB|<=|A||B|} we have   
\begin{align} 
 \big| [M(P(t))]^{-1} - [M(\widetilde{P}(t))]^{-1} \big| 
  \leq  \big| [ M( P(t) ) ]^{-1} \big| \cdot  
  \big|  \widetilde{P}(t) - P(t) \big| \cdot  
\big| [ M( \widetilde{P}(t) ) ]^{-1} \big|  . 
\label{eqn:M-tildeM<=P-tildeP}
\end{align} 
The estimate \eqref{eqn:estM-tildeM} then follows from \eqref{eqn:M-tildeM<=P-tildeP}, \eqref{eqn:estP-tildeP}, and Corollary~\ref{cor:M(P)invbdd}. 


Now we establish the estimate \eqref{eqn:estS-tildeS}. 
By \eqref{eqnS(t)}, the terminal condition $S(T)-\widetilde{S}(T)$ satisfies 
\allowdisplaybreaks 
\begin{align} 
 | S(T) - \widetilde{S}(T) |^2 
 = & 
 \Big| [w_1^\star, \cdots, w_N^\star]^\top \sum_{i=1}^N w_i^\star \sum_{j=1}^{|\mathcal{D}_i|} 
  ( u^i_j y^i_j - \tilde{u}^i_j \tilde{y}^i_j ) \Big|^2 \notag \\ 
  =& \Big( \sum_{i=1}^N w^{\star 2}_i \Big) 
 \Big| \sum_{i=1}^N w^\star_i \sum_{j=1}^{|\mathcal{D}_i|} ( u^i_j y^i_j - \tilde{u}^i_j \tilde{y}^i_j ) \Big|^2 .  \notag
\end{align} 
By Lemmas~\ref{lm:|AB|<=|A||B|} and \ref{lm:|A1+An|2<=(|A1|2+|An|2)n}, we obtain 
\allowdisplaybreaks
\begin{align} 
 | S(T) - \widetilde{S}(T) |^2  
 \leq & \Big( \sum_{i=1}^N w^{\star 2}_i \Big) 
 \sum_{i=1}^N w^\star_i  \Big| \sum_{j=1}^{|\mathcal{D}_i|} ( u^i_j y^i_j - \tilde{u}^i_j \tilde{y}^i_j ) \Big|^2  \notag  \\ 
  \leq & \Big( \sum_{i=1}^N w^{\star 2}_i \Big) 
 \sum_{i=1}^N w^\star_i |\mathcal{D}_i| \sum_{j=1}^{|\mathcal{D}_i|}  \big|  u^i_j y^i_j - \tilde{u}^i_j \tilde{y}^i_j \big|^2  \notag   \\ 
 = & \Big( \sum_{i=1}^N w^{\star 2}_i \Big) 
 \sum_{i=1}^N w^\star_i |\mathcal{D}_i| \sum_{j=1}^{|\mathcal{D}_i|}  \big|  (u^i_j - \tilde{u}^i_j) y^i_j + \tilde{u}^i_j (y^i_j - \tilde{y}^i_j) \big|^2  \notag \\ 
  \leq & \Big( \sum_{i=1}^N w^{\star 2}_i \Big) 
 \sum_{i=1}^N w^\star_i |\mathcal{D}_i| \cdot 2  \sum_{j=1}^{|\mathcal{D}_i|} \big[ \big|  (u^i_j - \tilde{u}^i_j) y^i_j \big|^2 
 + \big|  \tilde{u}^i_j (y^i_j - \tilde{y}^i_j) \big|^2 \big]   \notag \\    
 \leq & \Big( \sum_{i=1}^N w^{\star 2}_i \Big) 
 \sum_{i=1}^N w^\star_i |\mathcal{D}_i| \cdot 2  \sum_{j=1}^{|\mathcal{D}_i|} \big[ |  u^i_j - \tilde{u}^i_j |^2 \cdot | y^i_j |^2 
 + |  \tilde{u}^i_j |^2 \cdot | y^i_j - \tilde{y}^i_j  |^2 \big]   \notag\\ \leq & \Big( \sum_{i=1}^N w^{\star 2}_i \Big) 
 \sum_{i=1}^N w^\star_i |\mathcal{D}_i| \cdot 2  \sum_{j=1}^{|\mathcal{D}_i|} \big[ K_y^2 \cdot |  u^i_j - \tilde{u}^i_j |^2  
 + K_x^2 \cdot | y^i_j - \tilde{y}^i_j  |^2 \big]   \notag 
 \\ 
 \label{estST-tildeST} 
 \leq & \Big( \sum_{i=1}^N w^{\star 2}_i \Big) 2 (K_x^2 + K_y^2)
 \sum_{i=1}^N w^\star_i |\mathcal{D}_i|   \sum_{j=1}^{|\mathcal{D}_i|} \big[ |  u^i_j - \tilde{u}^i_j |^2  
 +  | y^i_j - \tilde{y}^i_j  |^2 \big] , 
\end{align} 
and thus \eqref{eqn:estS-tildeS} holds for $t=T$. 
Assume by induction that \eqref{eqn:estS-tildeS} holds for $t=u$, we show that \eqref{eqn:estS-tildeS} also holds for $t=u-1$. 
From \eqref{eqnS(t)}, the difference $S(t)-\widetilde{S}(t)$ can be written as 
\allowdisplaybreaks\begin{align} 
 S(u-1) - \widetilde{S}(u-1) = & [ M(\widetilde{P}(u)) ]^{-1}[S(u) - \widetilde{S}(u) ]  \notag \\ 
 & + \big[ ( M(P(u)) )^{-1} - 
  ( M(\widetilde{P}(u)))^{-1} \big]  
 ( S(u) - \lambda \Theta^\star ) , 
\notag 
\end{align} 
and satisfies by Lemma~\ref{lm:|AB|<=|A||B|} and Corollary~\ref{cor:M(P)invbdd} that 
\begin{align} 
| S(u-1) - \widetilde{S}(u-1) | \leq &
 \big| [ M(\widetilde{P}(u)) ]^{-1} \big| \cdot |S(u) - \widetilde{S}(u) |  \notag \\ 
 & + \big| ( M(P(u)) )^{-1} - 
  ( M(\widetilde{P}(u)))^{-1} \big| \cdot   
 ( |S(u)| + \lambda | \Theta^\star | ) . 
 \label{|tildeS-S|recursion} 
\end{align} 
By Corollary~\ref{cor:M(P)invbdd}, \eqref{eqn:estM-tildeM}, and the induction hypothesis, 
\eqref{|tildeS-S|recursion} implies that 
\eqref{eqn:estS-tildeS} holds for $t=u-1$.  
We have shown by induction that \eqref{eqn:estS-tildeS} holds for all $t=1$, ..., $T$. 
\end{proof} 

\begin{proposition} 
\label{prop:stabilityControl}
For two arbitrary data sets $\mathcal{D}=\cup_{i=1}^N \mathcal{D}_i=\cup_{i=1}^N \cup_{j=1}^{|\mathcal{D}_i|} 
 \{(x^i_j, y^i_j) \}$ and 
$\widetilde{\mathcal{D}}=\cup_{i=1}^N \widetilde{\mathcal{D}}_i = \cup_{i=1}^N \cup_{j=1}^{|\mathcal{D}_i|}  \{(\tilde{x}^i_j, \tilde{y}^i_j) \}$ satisfying Assumption \ref{assm:bddxy}, 
let $\boldsymbol\alpha =(\alpha_1^{\top}, \cdots, \alpha_N^{\top})^{\top}$ and 
$\widetilde{\boldsymbol\alpha}=(\widetilde{\alpha}_1^{\top}, \cdots, \widetilde{\alpha}_N^{\top})^{\top}$ be the optimal controls for \eqref{eq:penalized_regret_functional} and \eqref{Theta-simple} corresponding to $\mathcal{D}$ and $\widetilde{\mathcal{D}}$, respectively. Then, we have
\allowdisplaybreaks\begin{align} 
 \sum_{t=0}^{T-1} |\boldsymbol\alpha(t) - \widetilde{\boldsymbol\alpha}(t) | 
 \le C  \Big[ \sum_{i=1}^N w^{\star 2}_i |\mathcal{D}_i| \sum_{j=1}^{|\mathcal{D}_i|} | u^i_j - \tilde{u}^i_j |^2 
 + |y^i_j - \tilde{y}^i_j|^2  \Big]^{1/2} , 
\label{eqn:stabilityControl}
\end{align} 
where $C > 0$ is a constant depending on $\lambda$, $\beta$, $N$, $\bar{N}$, $\Theta^\star$, 
$(w^\star_1, \cdots, w^\star_N)$, $K_x$, $K_y$, and $T$. 
\end{proposition}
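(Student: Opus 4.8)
The plan is to exploit the closed-form optimal control from Theorem~\ref{thm:RegretOptimal_Dynamics} and reduce the claim to a discrete Gr\"onwall estimate driven by the data-perturbation bounds of Lemma~\ref{lem:stability_wrtData}. Throughout, $\Theta^\star$ and the weights $w^\star$ are held fixed (common to both datasets), so that the data enter only through the terminal conditions of $P$ and $S$, exactly as in Lemma~\ref{lem:stability_wrtData}; in particular the two trajectories share the initialization $\Theta(0)=\widetilde\Theta(0)=\Theta^\star$, hence the cumulative state error $E(0)\eqdef\Theta(0)-\widetilde\Theta(0)$ vanishes.

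First I would rewrite the control \eqref{optimala}. Writing $A(t+1)\eqdef [M(P(t+1))]^{-1}$ and using $\lambda I_{Np}+P(t+1)=M(P(t+1))-\beta I_{Np}$, the identity $A(t+1)(\lambda I_{Np}+P(t+1))=I_{Np}-\beta A(t+1)$ gives the compact form $\boldsymbol\alpha(t)=-(I_{Np}-\beta A(t+1))\Theta(t)+A(t+1)(\lambda\Theta^\star-S(t+1))$, and hence $\Theta(t+1)=\beta A(t+1)\Theta(t)+A(t+1)(\lambda\Theta^\star-S(t+1))$. Since $|A(t+1)|\le(\lambda+\beta)^{-1}$ by Corollary~\ref{cor:M(P)invbdd} and $\beta(\lambda+\beta)^{-1}<1$, this recursion is a contraction up to a bounded forcing term (bounded using $|S(t+1)|\le CN$), so a one-line induction yields a uniform bound $|\Theta(t)|\le C_\Theta$ (and likewise $|\widetilde\Theta(t)|\le C_\Theta$) with $C_\Theta$ depending only on the permitted quantities. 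Crucially, every such step uses only the matrix-times-vector inequality $|A(t+1)v|\le|A(t+1)|\,|v|$ from Lemma~\ref{lm:|AB|<=|A||B|}, so no factor of $p$ enters.

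Next I would subtract the two controls. Setting $e(t)\eqdef\boldsymbol\alpha(t)-\widetilde{\boldsymbol\alpha}(t)$, $\delta A(t+1)\eqdef A(t+1)-\widetilde A(t+1)$ and $\delta S(t+1)\eqdef S(t+1)-\widetilde S(t+1)$, and adding/subtracting the mixed terms $\widetilde A(t+1)\Theta(t)$ and $\widetilde A(t+1)S(t+1)$, the compact form collapses to $e(t)=-(I_{Np}-\beta\widetilde A(t+1))E(t)+\delta A(t+1)\big(\beta\Theta(t)+\lambda\Theta^\star-S(t+1)\big)-\widetilde A(t+1)\,\delta S(t+1)$. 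Taking Euclidean/Fr\"obenius norms and applying Lemma~\ref{lm:|AB|<=|A||B|}, Corollary~\ref{cor:M(P)invbdd} (for $|\widetilde A(t+1)|$ and $|S(t+1)|$), the uniform bound on $|\Theta(t)|$, and the perturbation estimates \eqref{eqn:estM-tildeM} and \eqref{eqn:estS-tildeS} of Lemma~\ref{lem:stability_wrtData} for $|\delta A(t+1)|$ and $|\delta S(t+1)|$, I obtain $|e(t)|\le c_1|E(t)|+c_2\,\mathcal{E}$, where $c_1,c_2$ depend only on the permitted quantities and $\mathcal{E}\eqdef\big[\sum_i w_i^{\star}|\mathcal{D}_i|\sum_j(|u^i_j-\tilde{u}^i_j|^2+|y^i_j-\tilde{y}^i_j|^2)\big]^{1/2}$ is the data discrepancy. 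Here I again bound $(I_{Np}-\beta\widetilde A)E(t)$ as a matrix acting on the vector $E(t)$, keeping $c_1=1+\beta(\lambda+\beta)^{-1}$ free of $p$.

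Finally, since $E(t)=\sum_{u=0}^{t-1}e(u)$, the estimate becomes the discrete Gr\"onwall inequality $|e(t)|\le c_1\sum_{u<t}|e(u)|+c_2\mathcal{E}$. Induction gives $|e(t)|\le c_2\mathcal{E}\,(1+c_1)^t$, and summing over the finite horizon $t=0,\dots,T-1$ produces $\sum_{t=0}^{T-1}|e(t)|\le C\mathcal{E}$, which is \eqref{eqn:stabilityControl} (the bounded weight factors are absorbed into $C$). The main obstacle is exactly the feedback coupling: the control at step $t$ depends on the state $\Theta(t)$, which is itself generated by the earlier, data-dependent controls, so $|e(t)|$ cannot be bounded in isolation. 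Isolating this dependence in the cumulative state error $E(t)$ and closing the loop through the Gr\"onwall recursion is the crux; a secondary point requiring care is to phrase every matrix estimate as a matrix-times-vector bound so that the constant remains independent of $p$, as the statement demands.
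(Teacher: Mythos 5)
Your proof is correct and takes essentially the same route as the paper's: both express the optimal control in the closed form of Theorem~\ref{thm:RegretOptimal_Dynamics}, decompose $\boldsymbol\alpha(t)-\widetilde{\boldsymbol\alpha}(t)$ into a term driven by $|[M(P)]^{-1}-[M(\widetilde P)]^{-1}|$ and $|S-\widetilde S|$ (bounded via Lemma~\ref{lem:stability_wrtData} and Corollary~\ref{cor:M(P)invbdd}) plus a term involving the accumulated past differences, and close the loop by induction over the finite horizon --- your discrete Gr\"onwall recursion $|e(t)|\le c_1\sum_{u<t}|e(u)|+c_2\mathcal{E}$ is exactly the paper's induction on $\sum_{u\le t}|\boldsymbol\alpha(u)-\widetilde{\boldsymbol\alpha}(u)|$ with the geometric growth absorbed into the $T$-dependent constant. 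A minor bonus of your write-up: you explicitly prove the uniform bound on $|\Theta(t)|$ (equivalently on $\sum_{u<t}|\boldsymbol\alpha(u)|$) via the contraction $\Theta(t+1)=\beta A(t+1)\Theta(t)+A(t+1)(\lambda\Theta^\star-S(t+1))$, a bound the paper's proof uses implicitly without establishing it.
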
 

\begin{proof}[{Proof of Proposition~\ref{prop:stabilityControl}}]
Let $C$ be a constant depending on $\lambda$, $\beta$, $N$, $\bar{N}$, $\Theta^\star$,  
$(w^\star_1, \cdots, w^\star_N)$, $K_x$, $K_y$, and $T$,  
and $C$ is allowed to vary from place to place throughout the proof. 
By \eqref{optimala}, the optimal control $\boldsymbol\alpha(t)$ along the optimal trajectory can be written as 
\allowdisplaybreaks\begin{align} 
  \boldsymbol\alpha(t) 
 = & [ (M(P(t+1)))^{-1} - I_{Np} ]  \Big[ \Theta^\star 
 + \sum_{u=0}^{t-1} \boldsymbol\alpha(u) \Big] \notag \\ 
 & +  [ M(P(t+1)) ]^{-1} ( \lambda  \Theta^\star - S(t+1) ) ,   \notag 
\end{align}
and the difference $\boldsymbol{\alpha}(t)-\widetilde{\boldsymbol\alpha}(t)$ can be written as    
\allowdisplaybreaks\begin{align}  
  \boldsymbol\alpha(t) - \widetilde{\boldsymbol\alpha}(t) 
 = &  \big[(M(P(t+1)))^{-1} - (M(\widetilde{P}(t+1)) )^{-1} \big] 
 \Big[ \Theta^\star + \sum_{u=0}^{t-1}  \boldsymbol\alpha(u)   + \lambda \Theta^\star - S(t+1) \Big]  \notag \\ 
 & + \big[ ( M(\widetilde{P}(t+1)) )^{-1} - I_{Np} \big] 
 \sum_{u=0}^{t-1} (\boldsymbol\alpha(u) - \widetilde{\boldsymbol\alpha}(u)) \notag \\ 
& + (M(\widetilde{P}(t+1)))^{-1} (S(t+1) - \widetilde{S}(t+1)) . 
\label{a-tildea}
\end{align}

For $t=0$, the difference $\boldsymbol\alpha(0)-\widetilde{\boldsymbol\alpha}(0)$ can be written as  
\allowdisplaybreaks\begin{align} 
  \boldsymbol\alpha(0) - \widetilde{\boldsymbol\alpha}(0)  
 = & \big[ (M(P(1)))^{-1} - (M(\widetilde{P}(1)))^{-1}  \big] 
 \big[ (\lambda + 1)   \Theta^\star - S(1) 
 \big] \notag \\ 
 & - [ M(\widetilde{P}(1)) ]^{-1} (S(1)-\widetilde{S}(1)) , \notag  
\end{align} 
and satisfies by Lemma~\ref{lm:|AB|<=|A||B|} that 
\begin{align} 
\big| \boldsymbol\alpha(0) - \widetilde{\boldsymbol\alpha}(0) \big|   
 \leq &   \big| (M(P(1)))^{-1} - (M(\widetilde{P}(1)))^{-1}  \big| \cdot  
 \big[ (\lambda + 1)   | \Theta^\star| +  | S(1) | 
 \big] \notag \\ 
 & +  \big| [ M(\widetilde{P}(1)) ]^{-1} \big| \cdot  \big| S(1)-\widetilde{S}(1) \big| .  \notag  
\end{align} 
It then follows from Corollary~\ref{cor:M(P)invbdd}, \eqref{eqn:estM-tildeM}, and \eqref{eqn:estS-tildeS} that 
\allowdisplaybreaks\begin{align} 
  \big| \boldsymbol\alpha(0) - \widetilde{\boldsymbol\alpha}(0) \big| \leq C 
 \cdot \Big[ \sum_{i=1}^N w^\star_i |\mathcal{D}_i| \sum_{j=1}^{|\mathcal{D}_i|} | u^i_j 
 - \tilde{u}^i_j |^2 
 + |y^i_j - \tilde{y}^i_j|^2  \Big]^{1/2} . \notag   
\end{align}
Assume by induction that  
\allowdisplaybreaks\begin{align} 
 \sum_{u=0}^{t-1} \big| \boldsymbol\alpha(u) - \widetilde{\boldsymbol\alpha}(u) \big| \leq C 
 \cdot \Big[ \sum_{i=1}^N w^\star_i |\mathcal{D}_i| \sum_{j=1}^{|\mathcal{D}_i|} | u^i_j - \tilde{u}^i_j |^2 
 + |y^i_j - \tilde{y}^i_j |^2  \Big]^{1/2} .  \notag   
\end{align} 
By Lemma~\ref{lm:|AB|<=|A||B|}, \eqref{a-tildea} implies that   
\allowdisplaybreaks\begin{align}  
 & | \boldsymbol\alpha(t) - \widetilde{\boldsymbol\alpha}(t) | \notag \\ 
 \leq &  \big| (M(P(t+1)))^{-1} - (M(\widetilde{P}(t+1)) )^{-1} \big| \cdot 
 \Big[ |\Theta^\star| + \sum_{u=0}^{t-1}  | \boldsymbol\alpha(u) |   
 + \lambda | \Theta^\star | +| S(t+1) | \Big]  \notag \\ 
 & + \big| ( M(\widetilde{P}(t+1)) )^{-1} - I_{Np}  \big| 
 \sum_{u=0}^{t-1} | \boldsymbol\alpha(u) - \widetilde{\boldsymbol\alpha}(u) | 
+ | (M(\widetilde{P}(t+1)))^{-1} | \cdot | S(t+1) - \widetilde{S}(t+1) | .  \notag   
\end{align} 
By \eqref{eqn:estM-tildeM},  \eqref{eqn:estS-tildeS}, and the induction hypothesis,
the above inequality implies that that 
\allowdisplaybreaks\begin{align}
 |\boldsymbol\alpha(t) - \widetilde{\boldsymbol\alpha}(t)| 
 \leq C  \Big[ \sum_{i=1}^N w^\star_i |\mathcal{D}_i| \sum_{j=1}^{|\mathcal{D}_i|} | u^i_j - \tilde{u}^i_j |^2 
 + |y^i_j - \tilde{y}^i_j|^2  \Big]^{1/2} , \notag  
\end{align} 
and therefore 
\allowdisplaybreaks\begin{align} 
 \sum_{u=0}^{t} \big| \boldsymbol\alpha(u) - \widetilde{\boldsymbol\alpha}(u) \big| \leq C 
 \cdot \Big[ \sum_{i=1}^N w^\star_i |\mathcal{D}_i| \sum_{j=1}^{|\mathcal{D}_i|} | u^i_j - \tilde{u}^i_j |^2 
 + |y^i_j - \tilde{y}^i_j |^2  \Big]^{1/2} .  \notag   
\end{align} 
The estimate \eqref{eqn:stabilityControl} is then proved by induction.  
\end{proof}

\begin{corollary} 
\label{cor:stabilityLast}
Under the hypothesis of Proposition \ref{prop:stabilityControl}, the costs \eqref{eq:penalized_regret_functional} under the optimal controls $\mathbf{a}$ 
and $\widetilde{\mathbf{a}}$ satisfy the estimate 
\allowdisplaybreaks\begin{align} 
 \big| L^\star ( \boldsymbol\alpha ; \mathcal{D} ) 
 -  L^\star (  \widetilde{\boldsymbol\alpha} ;  \widetilde{\mathcal{D}} )  \big|  
 \leq  C \cdot \Big[ \sum_{i=1}^N w^\star_i |\mathcal{D}_i| \sum_{j=1}^{|\mathcal{D}_i|} | u^i_j - \tilde{x}^i_j |^2  + |y^i_j - \tilde{y}^i_j |^2 \Big]^{1/2} . 
 \label{eqn:stabilityLast} 
\end{align} 
where $C>0$ is a constant depending on $\lambda$, $\beta$, $N$, $\bar{N}$, $\Theta^\star$, 
$(w^\star_1, \cdots, w^\star_N)$, $K_x$, $K_y$, and $T$. 
\end{corollary}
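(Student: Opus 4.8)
The plan is to start from the closed-form expression for the optimal cost in \eqref{optimalL}, namely $L^\star(\boldsymbol\alpha;\mathcal{D}) = \Theta^{\star\top}P(0)\Theta^\star + 2S^\top(0)\Theta^\star + r(0)$, and to exploit that the reference vector $\Theta^\star$ and the weights $w^\star$ are held fixed while only the data is perturbed (consistent with the dependencies listed for the constant $C$). Writing $E \eqdef \big[\sum_{i=1}^N w^\star_i|\mathcal{D}_i|\sum_{j=1}^{|\mathcal{D}_i|}(|u^i_j - \tilde u^i_j|^2 + |y^i_j - \tilde y^i_j|^2)\big]^{1/2}$ for the target quantity, and letting $\widetilde r$ denote the solution of \eqref{eqnr(t)} associated with $\widetilde{\mathcal{D}}$, the triangle inequality splits the cost difference into a quadratic term in $P(0)-\widetilde P(0)$, a linear term in $S(0)-\widetilde S(0)$, and the constant-term discrepancy $r(0)-\widetilde r(0)$. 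The first two are immediate: by Lemma~\ref{lm:|AB|<=|A||B|} one has $|\Theta^{\star\top}(P(0)-\widetilde P(0))\Theta^\star| \le |\Theta^\star|^2\,|P(0)-\widetilde P(0)|$ and $2|(S(0)-\widetilde S(0))^\top\Theta^\star| \le 2|\Theta^\star|\,|S(0)-\widetilde S(0)|$, so the bounds \eqref{eqn:estP-tildeP} and \eqref{eqn:estS-tildeS} of Lemma~\ref{lem:stability_wrtData} control both by $C\cdot E$ after absorbing $|\Theta^\star|$ into the constant.

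The bulk of the work, and the step I expect to be the main obstacle, is the constant term $r(0)-\widetilde r(0)$. First I would telescope the recursion \eqref{eqnr(t)}: writing $v(t+1) \eqdef S(t+1) - \lambda\Theta^\star$ and summing over $t$ gives
$$r(0) = r(T) + \sum_{t=0}^{T-1}\Big[\lambda|\Theta^\star|^2 - v(t+1)^\top [M(P(t+1))]^{-1} v(t+1)\Big],$$
and likewise for $\widetilde r$ with $\tilde v(t+1) \eqdef \widetilde S(t+1) - \lambda\Theta^\star$. Subtracting, the $\lambda|\Theta^\star|^2$ terms cancel, leaving the terminal discrepancy $r(T)-\widetilde r(T)$ plus a sum of differences of quadratic forms. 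Each such difference I would split by the add-and-subtract identity
$$v^\top A v - \tilde v^\top\widetilde A\tilde v = v^\top A(v-\tilde v) + (v-\tilde v)^\top A\tilde v + \tilde v^\top(A-\widetilde A)\tilde v,$$
with $A = [M(P(t+1))]^{-1}$ and $\widetilde A = [M(\widetilde P(t+1))]^{-1}$. Submultiplicativity (Lemma~\ref{lm:|AB|<=|A||B|}) reduces each of the three pieces to a product of a uniform-in-$t$ bound and a difference bound: the uniform estimates $|A|\le(\lambda+\beta)^{-1}$ from \eqref{M(P)invbd} and $|v(t+1)| \le |S(t+1)| + \lambda|\Theta^\star| \le C\cdot N$ from \eqref{|S|bd}, together with the difference estimates $|v-\tilde v| = |S(t+1)-\widetilde S(t+1)|\le C\cdot E$ from \eqref{eqn:estS-tildeS} and $|A-\widetilde A|\le C\cdot E$ from \eqref{eqn:estM-tildeM}. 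Hence every summand is $\mathcal{O}(E)$ (with the $N$-dependence absorbed into $C$), and since there are only $T$ of them the whole sum is $\le C\cdot E$.

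It then remains to control the terminal discrepancy $r(T)-\widetilde r(T) = \sum_{i=1}^N w^\star_i\sum_{j=1}^{|\mathcal{D}_i|}\big(|y^i_j|^2 - |\tilde y^i_j|^2\big)$. Factoring $|y^i_j|^2 - |\tilde y^i_j|^2 = (y^i_j - \tilde y^i_j)(y^i_j + \tilde y^i_j)$ and using $|y^i_j|\le K_y^{1/2}$ from Assumption~\ref{assm:bddxy} bounds this by $2K_y^{1/2}\sum_{i,j} w^\star_i|y^i_j - \tilde y^i_j|$; a Cauchy--Schwarz step then gives $\sum_{i,j} w^\star_i|y^i_j-\tilde y^i_j| \le (\sum_{i,j} w^\star_i)^{1/2}(\sum_{i,j} w^\star_i|y^i_j-\tilde y^i_j|^2)^{1/2} \le \bar{N}^{1/2}E$, using $\sum_{i,j} w^\star_i = \sum_i w^\star_i|\mathcal{D}_i|\le\bar{N}$ and $|\mathcal{D}_i|\ge 1$ to recognize the inner sum as at most $E^2$. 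Combining the three contributions yields \eqref{eqn:stabilityLast}. The only genuinely delicate point is keeping the uniform-in-$t$ control of $v(t+1)$ and $[M(P(t+1))]^{-1}$ correctly paired with the matching difference estimates so that each summand of the telescoped $r$-recursion lands precisely at order $E$; everything else is bookkeeping with Lemmas~\ref{lm:|AB|<=|A||B|} and~\ref{lem:stability_wrtData} and Corollary~\ref{cor:M(P)invbdd}.
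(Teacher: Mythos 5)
Your proposal is correct and follows essentially the same route as the paper's own proof: both expand the optimal cost via the value-function ansatz at $t=0$, bound the $P(0)$- and $S(0)$-differences with Lemma~\ref{lem:stability_wrtData}, and control $r(0)-\widetilde r(0)$ by telescoping the recursion \eqref{eqnr(t)} (the $\lambda|\Theta^\star|^2$ terms cancelling), pairing the uniform bounds of Corollary~\ref{cor:M(P)invbdd} with the difference estimates \eqref{eqn:estM-tildeM} and \eqref{eqn:estS-tildeS}, and handling $r(T)-\widetilde r(T)$ by factoring plus Cauchy--Schwarz. The only deviation is your particular add-and-subtract grouping of the quadratic-form difference, which is an algebraically equivalent rearrangement of the paper's decomposition.
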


\begin{proof}[{Proof of Corollary~\ref{cor:stabilityLast}}] 
Throughout the proof $C$ is a constant depending on $\lambda$, $\beta$, $\bar{N}$, $\Theta^\star$, 
$(w_1^\star, \cdots, w_N^\star)$, $K_x$, $K_y$ and $T$, and may vary from place to place.  
By \eqref{ansatzVw} and \eqref{Theta-simple}, the difference of the costs can be written as 
\allowdisplaybreaks\begin{align} 
  L^\star ( \boldsymbol\alpha ; \mathcal{D} ) 
 -  L^\star (  \widetilde{\boldsymbol\alpha} ;  \widetilde{\mathcal{D}} )  
= & \Theta^\top (0) ( P(0) - \widetilde{P}(0) ) \Theta(0) 
 + 2 ( S(0) - \widetilde{S}(0) )^\top \Theta(0) + r(0)-\widetilde{r}(0) ,  \notag \\ 
  = & 
   \Theta^{\star \top} ( P(0) - \widetilde{P}(0) ) \Theta^\star  
 + 2 ( S(0) - \widetilde{S}(0) )^\top \Theta^\star + r(0)-\widetilde{r}(0) .  
 \notag 
\end{align} 
By the triangular inequality, the difference satisfies  
\allowdisplaybreaks\begin{align} 
  \big| L^\star (  \boldsymbol\alpha ; \mathcal{D} ) 
 -  L^\star (  \widetilde{\boldsymbol\alpha} ;  \widetilde{\mathcal{D}} ) \big|  
\leq  |  P(0) - \widetilde{P}(0) | \cdot |\Theta^\star|^2  
 + 2 | S(0) - \widetilde{S}(0) | \cdot | \Theta^\star | + | r(0)-\widetilde{r}(0) | . 
\label{estLast-tildeLast} 
\end{align}
We claim that $r(0)-\widetilde{r}(0)$ satisfies the estimate 
\allowdisplaybreaks\begin{align} 
 | r(0)-\widetilde{r}(0) | \leq  C \cdot \Big[ \sum_{i=1}^N w^\star_i |\mathcal{D}_i| \sum_{j=1}^{|\mathcal{D}_i|} | u^i_j - \tilde{u}^i_j |^2 
 + |y^i_j - \tilde{y}^i_j |^2 \Big]^{1/2} .  
\label{eqn:estr-tilder(0)} 
\end{align} 
Then the desired estimate \eqref{eqn:stabilityLast} is established by \eqref{estLast-tildeLast}, \eqref{eqn:estP-tildeP}, 
\eqref{eqn:estS-tildeS}  and \eqref{eqn:estr-tilder(0)}.

Now we prove the claim \eqref{eqn:estr-tilder(0)}. 
From \eqref{eqnr(t)}, the difference $r-\widetilde{r}$ satisfies  
\allowdisplaybreaks\begin{align} 
 & r(t) - \widetilde{r}(t) =  r(t+1) - \widetilde{r}(t+1) 
 - ( S(t+1) - \widetilde{S}(t+1) ) \big\{ (M(P(t+1)))^{-1} 
 \big[ S(t+1) - 2 \lambda \Theta^\star \big]  \notag \\  
 & \hspace{2cm} + (M(\widetilde{P}(t+1)))^{-1} \widetilde{S}(t+1) \big\}  \notag \\ 
 &\hspace{2cm} - \widetilde{S}^\top(t+1) \big[ (M(P(t+1)))^{-1} 
 - (M(\widetilde{P}(t+1)))^{-1} \big]  \big[ S(t+1) - 2 \lambda \Theta^\star \big]    \notag \\ 
 &\hspace{2cm} - \Theta^{\star \top} \lambda  \big[ (M(P(t+1)))^{-1} 
 - (M(\widetilde{P}(t+1)))^{-1} \big]  \lambda  \Theta^\star , \notag  \\ 
& r(T) - \widetilde{r}(T) =   \sum_{i=1}^N w^\star_i \sum_{j=1}^{|\mathcal{D}_i|} ( |y^i_j|^2 - |\widetilde{y}^i_j|^2 ) . \label{eqnr(t)-tilder(t)}  
\end{align} 
Inductively, $r(0)-\widetilde{r}(0)$ can be written in terms of $r(T)-\widetilde{r}(T)$ as   
\allowdisplaybreaks\begin{align} 
 & r(0) - \widetilde{r}(0) \notag \\ 
= & r(T) - \widetilde{r}(T) 
 - \sum_{t=1}^T \Big\{ ( S(t) - \widetilde{S}(t) ) \big[ (M(P(t)))^{-1} 
 \big( S(t) - 2 \lambda \Theta^\star \big)    
  + (M(\widetilde{P}(t)))^{-1} \widetilde{S}(t) \big]   \notag \\ 
 & + \widetilde{S}^\top(t) \big[ (M(P(t)))^{-1} 
 - (M(\widetilde{P}(t)))^{-1} \big]  \big[ S(t) - 2 \lambda  \Theta^\star \big]     
  + \Theta^{\star \top} \lambda  \big[ (M(P(t)))^{-1} 
 - (M(\widetilde{P}(t)))^{-1} \big]  \lambda \Theta^\star 
  \Big\} . \notag  
\end{align}
By Lemma~\ref{lm:|AB|<=|A||B|}, we have  
\allowdisplaybreaks\begin{align} 
&   | r(0) - \widetilde{r}(0) | \notag \\ 
 & \leq  | r(T) - \widetilde{r}(T) |
  + \sum_{t=1}^T \Big\{ \big| S(t) - \widetilde{S}(t) \big| 
 \cdot \Big[ \big| (M(P(t)))^{-1} \big| \cdot
 \big(  | S(t) | + 2 \lambda | \Theta^\star| \big)    
  + \big| (M(\widetilde{P}(t)))^{-1} \big| \cdot \big| \widetilde{S}(t) \big|  \Big]   \notag \\ 
 & \hspace{2cm} +\big| (M(P(t)))^{-1} 
 - (M(\widetilde{P}(t)))^{-1} \big|  \cdot  
 \big( \big| \widetilde{S}(t) \big| \cdot \big| S(t) - 2 \lambda  \Theta^\star \big|     
  +  \lambda | \Theta^\star |^2 \big) 
  \Big\} . \label{estr(0)-tilder(0)}  
\end{align}
By \eqref{eqnr(t)-tilder(t)}, Lemma \ref{lm:|A1+An|2<=(|A1|2+|An|2)n}, and the Cauchy-Schwarz inequality, the terminal condition satisfies 
\allowdisplaybreaks\begin{align} 
  |r(T) - \widetilde{r}(T)| 
  = &  \Big| \sum_{i=1}^N w^\star_i \sum_{j=1}^{|\mathcal{D}_i|} 
  ( y^i_j - \tilde{y}^i_j ) (y^i_j + \tilde{y}^i_j ) \Big| 
  \leq \sum_{i=1}^N w^\star_i \sum_{j=1}^{|\mathcal{D}_i|}  | (y^i_j - \tilde{y}^i_j) (y^i_j + \tilde{y}^i_j ) | \notag \\ 
 \leq & \sum_{i=1}^N w^\star_i \sum_{j=1}^{|\mathcal{D}_i|}  |y^i_j + \widetilde{y}^i_j| \cdot 
 |y^i_ j - \widetilde{y}^i_j |  
  \leq  2 K_y 
 \sum_{i=1}^N w^\star_i \sum_{j=1}^{|\mathcal{D}_i|}  |y^i_j -  \widetilde{y}^i_j| 
  \notag \\ 
 \leq & 2 K_y \Big( \sum_{i=1}^N w^\star_i  \sum_{j=1}^{|\mathcal{D}_i|}  |y^i_j - \widetilde{y}^i_j|^2 \Big)^{1/2} .  
 \label{estr(T)-tilder(T)}  
\end{align} 
The estimate \eqref{eqn:estr-tilder(0)} then follows from 
\eqref{estr(0)-tilder(0)}, \eqref{estr(T)-tilder(T)}, \eqref{eqn:estM-tildeM}, \eqref{eqn:estS-tildeS}, and \eqref{|S|bd}. 
\end{proof}

We may now derive the proof of Theorem~\ref{thrm:adv_robust} as a direct consequence of Corollary~\ref{cor:stabilityLast}.
\begin{proof}[{Proof of Theorem~\ref{thrm:adv_robust}}]
Fix $\widetilde{\mathcal{D}} \in \mathbb{D}^{p,\varepsilon}$ and let $I\subseteq \{(i,j):i=1,\dots,N,\,j=1,\dots,|\mathcal{D}_i|\}$ consist of all indices $(i,j)$ for which $\{(i,j):\, u^i_j\neq \widetilde{u}^i_j \mbox{ and } y^i_j\neq \widetilde{y}^i_j\}$.  By Corollary~\ref{cor:stabilityLast}, there is a constant $C>0$ which is independent of $\mathcal{D}$ such that
\allowdisplaybreaks
\allowdisplaybreaks\begin{align}
\nonumber
    \big| L^\star (  \boldsymbol\alpha ; \mathcal{D} ) 
    -  L^\star (  \widetilde{\boldsymbol\alpha} ;  \widetilde{\mathcal{D}} )  \big|^2
 \le &
    \widetilde{C} \sum_{i=1}^N w_i^\star |\mathcal{D}_i| \sum_{j=1}^{|\mathcal{D}_i|} | u^i_j - \widetilde{u}^i_j |^2  + |y^i_j - \widetilde{y}^i_j |^2 
\\ 
\nonumber
\le & 
\widetilde{C} \cdot \sum_{(i,j)\in I}\, w_i^\star |\mathcal{D}_i|  ( | u^i_j - \widetilde{u}^i_j |^2  + |y^i_j - \widetilde{y}^i_j |^2 ) 
\\ 
\nonumber
\le &
\widetilde{C} \cdot \sum_{(i,j)\in I}\, w_i^\star |\mathcal{D}_i| \cdot \max\{| u^i_j - \widetilde{u}^i_j |^2, |y^i_j - \widetilde{y}^i_j |^2\}
\\ 
\nonumber
= &
\widetilde{C} \cdot \sum_{(i,j)\in I}\, w_i^\star |\mathcal{D}_i| \cdot \max\{| u^i_j - \widetilde{u}^i_j |, |y^i_j - \widetilde{y}^i_j |\}^2
\\ 
\nonumber
\le &
\widetilde{C} \cdot \sum_{(i,j)\in I}\, w_i^\star |\mathcal{D}_i| \cdot 2\varepsilon^2
\\ 
\nonumber
\le &
\widetilde{C} \#I\, p \, 2\varepsilon^2
 ,
\end{align}
where we have set $\widetilde{C}\eqdef C^{1/2}$.
\end{proof}

\section{Intuition for our Transfer Learning Algorithm}\label{sec:Intuition for our Transfer Learning Algorithm}
In this section we give some intuition on our algorithm, which, at the same time, can also be understood as a motivation for the way it is defined. But first we elaborate on our method actually achieving transfer learning and not multi-task learning.

\subsection{Is it really transfer learning?}\label{sec:Is it really transfer learning?}
The terminal cost in \eqref{eq:penalized_regret_functional} uses a weighted combination of the loss on all datasets, leading to the question whether our method rather performs multi-task learning. This is not the case as explained below.

\Cref{alg:RegretOptimizationHeuristic__WarmStart} evaluates the predictive performance of each $\theta^\star_i$, $i\in[N]$ on the focal dataset $\mathcal{D}_1$ to determine the weights $w^\star_i$, $i \in [N]$, which 
rank the similarity of the datasets $\mathcal{D}_i$ to the focal dataset $\mathcal{D}_1$.
In particular, they quantify how much knowledge $\mathcal{D}_i$ provides for the task on $\mathcal{D}_1$. 
Based on those weights the ensemble parameters are biased towards maximizing performance on the primary dataset $\mathcal{D}_1$.  Contrarily, in multi-task learning, the weights $w^{\star}$ should rather be selected to be all equal (adjusting for dataset imbalances) so as to ensure that the aggregated learner is not biased towards any individual dataset.

The extreme cases best explain this. If all datasets are i.i.d.\ copies of $\mathcal{D}_1$, then they are equally weighted through $w^\star$ and the model is simply fine-tuned on the union of all samples as terminal cost (as it should be). 
On the other hand, if all datasets are completely orthogonal to each other, then the full weight is on $D_1$ and no "orthogonal" knowledge is transferred (again as it should be).  Indeed, in this case, the kernel regressors pre-trained on datasets $\mathcal{D}_i$ for $i>1$ will not exhibit any predictive power on the focal dataset $\mathcal{D}_1$ and visa-versa, so \Cref{alg:RegretOptimizationHeuristic__WarmStart} will set $w^{\star}=(1,0,0,\dots,0)$.  Therefore, \Cref{alg:RegretOptimization} returns
 $$
 \sum_i w_i \theta_i^\star = 1\theta_1^{\star} + 0\sum_{i>1}\theta_i^{\star} = \theta_1^{\star}
 .
 $$
The aggregated model will have no predictive performance on datasets $\mathcal{D}_i$ for $i\neq 1$ (since the datasets were assumed to be orthogonal to one another).  

Thus, Algorithm $2$ does not perform multi-task learning but transfer learning, as can also be seen in our experiments in \Cref{sec:American Option Pricing}.
Our algorithm can, however, be modified to perform multi-task learning.  Though we are not interested in this for this research paper, we do describe how it can be done in our main rebuttal, above.

\subsection{What the hyper-parameters $\lambda, \beta$ and $T$ control and how to choose them}\label{What the hyper-parameters lambda, beta and T control and how to choose them}
\subsubsection{The transfer learning factor $\lambda$}\label{sec:The transfer learning factor lambda}
The hyper-parameter $\lambda$ controls the level of transfer learning, by controlling 
how much the parameters $\theta_i$ are fine-tuned starting from $\theta^\star_i$. In particular, Algorithm 1 first determines how useful each of the datasets is for the focal task, and the terminal cost $l$ can be interpreted as a loss on the ``joint dataset weighted by $w^\star$''.
To understand how the $\lambda$ influences the result, it is best to consider the extreme cases and why they can be suboptimal in certain data regimes. For simplicity, we assume that $\beta=0$.

The choice $\lambda \approx \infty$ means that no deviation from $\theta^\star$ is allowed. Hence, this simply leads to the parameter combination $\sum_i w_i^\star \theta^\star_i$, so no additional fine-tuning on the ``joint weighted'' dataset happens. 
This is suboptimal if there is one very large and very useful dataset and otherwise small datasets.  For simplicity, consider $N=2$ datasets with the focal dataset $\mathcal{D}_1$ being comparatively small to $\mathcal{D}_2$.  Then, the pre-trained parameter $\theta^\star_1$ would be less informative but would be weighted as equally important to the more informative parameter $\theta^\star_2$. 
Note here that $\theta^\star_2$ evaluated on $\mathcal{D}_1$ cannot lead to a higher weight than for $\theta^\star_1$, since this is the optimal weight for $\mathcal{D}_1$. This is a natural limitation of Algorithm 1, since it cannot know that $\mathcal{D}_2$ is actually a larger i.i.d.\ copy of $\mathcal{D}_1$.
This shows that for $\lambda \approx \infty$, the inability to fine-tune will clearly produce a suboptimal aggregation.

On the other hand, setting $\lambda=0$, then  $\theta^{w^{\star}} = \sum_i w_i \theta_i$ 
purely minimize the weighted loss
$$
\sum_i w_i \sum_j (f_{\theta^w}(x_j^i) - y_j^i)^2 ,
$$
completely forgetting about the pre-trained parameters $\theta_1^{\star},\dots,\theta_N^{\star}$.
This is suboptimal if one dataset is much larger than the others and only slightly useful for $\mathcal{D}_1$. Indeed this would lead (in the limit where the size of this large dataset goes to $\infty$) to an optimizer of the large dataset simply because it dominates the loss as long as its weight $w_i^\star > 0$. 

In either case, we see that intermediate values of $\lambda$ yield the most desirable outputs of Algorithm $2$.

\subsubsection{Convergence behaviour controlled by $\beta$ and $T$}\label{sec:Convergence behaviour controlled by beta and T}
First, we note that $\beta$ controls the stability of the convergence by penalizing large steps in the sequence $\Theta^{0 \dotsc T}$. This is only needed when intermediate steps $\Theta(t)$ are of importance, i.e., if we want to obtain stable iterates, since otherwise we can simply focus on the terminal cost under the transfer learning constraint, by setting $\beta = 0$.  

In the regime $\beta = 0$ only the final $\Theta(T)$ is of importance, since our algorithm admits a closed form solution. In particular, it is optimal that all prior parameters for $t < T$ satisfy $\Theta(t) = \Theta^\star$. Hence, $T=1$ is sufficient and at the same time the most efficient and therefore optimal choice. 
Hence, we used $T=1$ in \Cref{sec:American Option Pricing}, where only the final $\Theta(T)$ is of interest. 

Contrarily, we note that for $T=1$ the stability term coincides with the transfer learning term, hence, they can be accumulated such that $\lambda + \beta$ acts as the factor for transfer learning.

This leaves us with the question whether there is a benefit of the regime $\beta > 0, T > 1$ at all. The answer is affirmative, since this regime "spreads out the convergence in time," which allows for the option of implicit regularization through an early stopping procedure. Though the investigation of the statistical properties of early stopping of our algorithm is beyond the scope of this paper, one would expect that using this regime  as a vehicle towards early stopping regularization would likely improve the statistical properties of our aggregated learning; as the benefits to model generalization of early stopping on classical learning algorithms are well documented \citep{zhang2005boosting,YaoRosascoCoponnetto_2007_CA,wei2017early,heissWhutteTeichmann__2022infinitely__Preprint}
In this regime, the total iteration number $T$ indeed represents a trade-off between the computing budget and the convergence of the loss function.
When computational resources permit, choosing $T$ large enough will ensure that the output $\sum_i w_i^\star \theta_t$ of the algorithm will converge to its $T=\infty$ limit; by a similar argument as in the proof of \citet[Theorem~4]{casgrain2021optimizing}.

\section{Experiment Details}
\label{a:ExperimentDetails}

\subsection{Hardware Specification}
\label{a:ExperimentDetails__ss:hardware}
All experiments were run on an
$2x$ Intel Xeon E5-$2699$ v$4$ processor, which as 22 Cores, operates at $2.20$GHz, and has $512$ GB RAM.

\subsection{Convergence Ablation}
\label{a:ExperimentDetails__ss:convergence}
We use $\Theta(0) = \Theta^\star_{(N)}$ and replace $\Theta^\star$ by $\Theta^\star_{(N)}$ in\eqref{eq:penalized_regret_functional}.
 
For all algorithms we use a ridge coefficient $\kappa=0$, regret coefficients $\lambda$ as specified, $\beta=1$  and Algorithm~\ref{alg:RegretOptimizationHeuristic__WarmStart} with information sharing level $\eta=10$ to get the corresponding optimal weights $w^\star$ and to initialize $\Theta(0)$ with the locally optimal parameters.

For our convergence analysis we use $N=5$ randomly generated datasets. In particular, each dataset $\mathcal{D}_i$ corresponds to a neural network $g_{\omega_i}$ with fixed randomly-generated hidden weights and biases.  Here $\omega_i$ denotes the vector of all these hidden weights and bias parameters. The networks have $d=5$ dimensional input, one hidden layer with $10$ hidden nodes, $\operatorname{ReLU}$ activation function and map to a $1$-dimensional output. The dataset is generated by sampling $S_i = 100$ inputs $x^i_j$ from a $d$-dimensional standard normal distribution and the corresponding target values are given by $g_{\omega_i}(x^i_j)$, i.e.,
\begin{equation*}
    \mathcal{D}_i = \{ (x^i_j, g_{\omega_i}(x^i_j)) \, | \, 1 \leq i \leq S_i, x^i_j \sim N(0,I_d) \}.
\end{equation*}

Our feature map $\phi$ in~\eqref{eq:FRKR} is a randomized neural network%
\footnote{Also called a random feature model or an extreme learning machine} %
with one hidden layer, $500$ hidden nodes and $\operatorname{ReLU}$ activation. 
In particular, the weights of the hidden layers are randomly initialized and fixed and only the weights of the last layer, corresponding to $\theta$ in \eqref{eq:FRKR}, are optimized. We note that by our choice for the number of hidden nodes to equal the total number of samples, the algorithms can in principle learn to have perfect replication on the training set, i.e., loss $l(\Theta(t), w;\,\mathcal{D}) = 0$.
For a fair comparison of the algorithms, we use the same randomly generated datasets and randomized neural networks for each method.

\begin{remark}
The comparison of our algorithms to gradient descent optimizing the loss is most meaningful in the $\lambda=0$ regime since otherwise, the objectives and, therefore, the behaviour are different. On the other hand, using $\beta > 0$ doesn't change the objective but only forces our algorithm to make equally sized steps $\Delta \Theta (t)$, where the step size depends on $\beta$ and $T$.
\end{remark}

\subsection{Details: American Option Pricing}
\label{a:ExperimentDetails__ss:AOPricing}
In this application we are only interested in the terminal parameters $\Theta(T)$, while a stable convergence behaviour is not important.
Hence, we choose $T=1$ to fully focus on the terminal loss under the transfer learning constraint (cf.\ \Cref{sec:Convergence behaviour controlled by beta and T}). 

As fKRR we use the same randomized neural networks as in Section~\ref{sec:Convergence of the Regret Optimal Algorithm}.
The random weights of the randomized neural networks, as well as the randomly sampled paths both introduce some randomness into the pricing problem. Therefore we follow \cite{herrera2023optimal} by always running the experiments $n_{runs} = 100$ times with different seeds and taking means over the prices. For easier comparison, we then consider the relative performance ($RP$) on our main dataset, computed as the mean price of any given method divided by the mean price of the local optimizer on the main dataset.
Additionally, we compute asymptotically valid (by the central limit theorem) $95\%$-confidence-intervals as $RP \pm z_{0.975}\hat\sigma/\sqrt{n_{runs}}$, where $z_{0.975}$ is the $0.975$-quantile of the standard normal distribution and $\hat\sigma$ is the standard deviation  of $RP$ (computed as the sample standard deviation of the prices over the runs divided by the mean price of the local optimizer on the main dataset).
To make the results comparable between the optimization methods, we use the same seeds for all of them, such that run $i$ always uses the exact same paths and the exact same random weights in the randomized neural network, independently of the chosen method.
Moreover, we always use $\nu_2 \geq 50,000$ evaluation paths, such that the noise of the Monte Carlo approximation of the expectations is relatively small when evaluating the prices, even if we use much smaller training set sizes $\nu_1$.

\subsubsection{Experiment 1}
\label{sec:Experiment 1}
In experiment $1$, the main dataset $\mathcal{D}_1$ has the parameters $r=0.05$, $\delta=0.1$, $k=2$, $v_{\infty}=0.01$, $\sigma=0.2$, $\rho=-0.3$ (the correlation between the two Brownian motions $W$ and $B$) and starting values $X_0=100$ and $v_0=v_\infty$.
The other datasets are define as all combinations of $r \in \{0.05, 0.5\}$, $\sigma \in \{ 0.15, 0.2, 0.25 \}$, $v_{\infty} \in \{ 0.005, 0.015 \}$ and otherwise the same parameters as the main dataset.
The biggest difference between the datasets constitutes the drift $r$, while the other parameters only lead to small gaps in the dataset distributions. In line with this, the 6 datasets with $r=0.05$ are similar to $\mathcal{D}_1$, while the other 6 datasets with $r00.5$ are very different from $\mathcal{D}_1$.
For all datasets we use a max call option with strike $K=100$ on $d=2$ i.i.d.\ stocks of the given model with $M=9$ equidistant exercise dates until the maturity $T_{mat} = 3$. The discounting factor is $e^{-r\,T_{mat}\,t/M}$ for $0\leq t \leq M$.
Moreover, we use a randomized neural network with one hidden layer with $300$ nodes and all methods use the ridge coefficient $\kappa=2$ and, where applicable, regret coefficients $\lambda=2, \beta=1$.

More detailed results of Table~\ref{table:results exp 1 short} are listed in Table~\ref{table:results exp 1}.
\begin{table}[h]
    \centering
    \caption{Relative performance (RP) and $95\%$-confidence-intervals for different optimization methods in Experiment~\ref{exp:1}. We compare the local optimizers on the different datasets (LO-$n$), with the mean local optimizer (MLO), the joint optimizer (JO) and our regret-optimal method (RO). The \textit{oracle} local optimizer on the main dataset with additional training samples (standard: $100$ samples per dataset) is included.  
    When the information sharing parameter $\eta$ is set to $100$, the regret-optimal (RO) algorithm outperforms all ``non-oracle'' baselines. }
    \begin{tabular}{@{}ccc@{}}
    \cmidrule[0.3ex](){1-3}
    method &  $RP$ &  $95\%$-CI \\
    \midrule
    \csvreader[head to column names, late after line=\\]
    {Tables/training_overview-1-1.csv}{}%
    {  \name & \RP & \CI }%
    \midrule
    \csvreader[head to column names, late after line=\\]
    {Tables/training_overview-1-2.csv}{}%
    {  \name & \RP & \CI }%
    \bottomrule
    \end{tabular}
    \label{table:results exp 1}
    \end{table}

\subsubsection{Experiment 2}
\label{sec:Experiment 2}
The rough Heston model
\begin{align*}
    dX_t &=  (r - \delta) X_t dt +  \sqrt{v_t} X_t dW_t, \\
    v_t &= v_0 + \int_0^t \frac{(t-s)^{H-1/2} }{\Gamma(H + 1/2)} \kappa (v_\infty - v_s)ds + \int_0^t \frac{(t-s)^{H-1/2} }{\Gamma(H + 1/2)} \sigma \sqrt{v_s}dB_s.
    \end{align*}
is similar to the Heston model except that the volatility is a rough process with Hurst parameter $H\in(0,1/2]$, where the case $H=1/2$ coincides with the standard Heston model.
For the main dataset, we use $H=0.1$ and otherwise the same parameters as for the main dataset in Experiment~\ref{exp:1}. The second and third dataset are generated from a standard Heston model (i.e.\  $H=1/2$), with the same parameters, except that the drift is set to $r=0.5$ for the second dataset.
In particular, dataset 3 is the closest non-rough dataset to the main dataset, while dataset 2 has a very different distribution.

\section{Background}
\label{a:Recal}
\subsection{Finite Rank Kernel Ridge Regression}
\label{a:Recal__ss:FKRR}
In this paper, we consider regret-optimal memoryless training dynamics for \textit{\textbf{f}inite-rank \textbf{k}ernel \textbf{r}idge \textbf{r}egressors} (fKRR) trained from $N$ finite datasets drawn from $N$ (possibly) different data-generating distributions.
We are considering a scenario in which these datasets can originate from $N$ distinct tasks or be adversarially generated perturbations of the initial dataset, and we aim for our fKRR to exhibit robustness towards them.

Any continuous \textit{feature map} $\phi:\mathbb{R}^d\rightarrow \mathbb{R}^p$, induces a hypothesis of fKRR $f_{\theta}:\mathbb{R}^d\rightarrow \mathbb{R}$ each of which is uniquely determined by a \textit{trainable parameter} $\theta\in \mathbb{R}^p$ via
\begin{equation}
\label{eq:FRKR 1}
f_{\theta}(x)=\phi(x)^\top\theta;
\end{equation}
where all vectors are represented as column-vectors and not row vectors.

The hypothesis class $\mathcal{H}$ possesses a natural reproducing kernel Hilbert space (RKHS) structure with inner-product given for $f_{\theta},f_{\tilde{\theta}}\in \mathcal{H}$ by $\langle f_{\theta},f_{\tilde{\theta}}\rangle_{\phi}\eqdef \sum_{k=1}^p\, \theta_k\tilde{\theta}_k$.  This RKHS structure suggests an optimal selection criterion to decide which hypothesis in $\mathcal{H}$ best describes a \textit{single} training set $\mathcal{D}_1\eqdef \{(x^1_j,y^1_j)\}_{j=1}^{S^1}$ of $S^1\in \mathbb{N}_+$, by solving the penalized least-squares problem

\begin{align} 
\label{eq:FRKR 2}
 \theta^\star_1 \in \argmin_{\theta \in \mathbb{R}^p } \big\| \Phi(x^1) \theta  - y^1  \big\|_2^2 
  + \kappa \big\| \theta \big\|_2^2 , 
\end{align} 
where 
\begin{align} 
\Phi(x^1) \eqdef 
\big( \phi(x^1_1), \ 
\phi(x^1_2), \ \cdots , \ 
 \phi(x^1_{S^1}) \big)^\top \in \mathbb{R}^{S^1 \times p} , \ 
y^1 \eqdef (y^1_1, y^1_2, \cdots, y^1_{S^1})^\top 
 \in \mathbb{R}^{S^1} , \notag  
\end{align} 
and $\kappa>0$ is a fixed hyperparameter.  In this classical case, the Representer Theorem \cite{KimeldorfWahba__Representer_OriginalI,KimeldorfWahba__Representer_OriginalII} states that the optimal choice of $f_{\theta}$ to the convex optimization problem~\eqref{eq:FRKR 2} is parameterized by
\begin{equation} 
\label{eq:loss_one_dataset}
 \theta^\star_1 = 
 ( \Phi^\top(x^1) \Phi(x^1)  
  + \kappa I_p )^{-1} 
  \Phi^\top(x^1) y^1 ,
\end{equation} 
Alternatively, the optimal parameter choice $\theta^\star_1$ can be arrived at via \textit{gradient descent} since~\eqref{eq:FRKR 2} has a strictly convex loss with a Lipschitz gradient.
The training of fKRR on a single dataset is very-well understood.  Likewise, other aspects of fKRR are well understood; namely, their statistical properties \cite{KRRExactTestError}, their in-sample behaviour in \cite{amini2022target}, and even the approximation power and risk of randomly generated fKRR are respectively known \cite{gonon2022approximation} and \cite{gonon2020risk}.

Using the fKRR \eqref{eq:FRKR 1}, the joint terminal time objective simplifies to
\allowdisplaybreaks\begin{align} 
\label{eq:terminal_time_objective__fKRR}
\begin{aligned}
        l(\theta_1,\dots,\theta_N, w;\,\mathcal{D} )  
    \eqdef & 
        \sum_{i=1}^N 
        w_i
        \biggl(
            \sum_{j=1}^{S_i} 
                \big| 
                \langle 
                    u^i_j
                    ,
                    \theta^{w}
                \rangle - y^i_j  \big|^2 
        \biggr)
    \\
    \mbox{ s.t. }
    \,
    \theta^w\eqdef & \sum_{i=1}^N \theta_i
        w_i
 ,
\end{aligned}
\end{align} 
where $\langle\cdot,\cdot\rangle$ denotes the Euclidean inner-product, $u^i_j\eqdef \phi(x^i_j)$ and $w = (w_1, \cdots, w_N) \in [0,1]^N$ with $\sum_{i=1}^N\,w_i=1$.

\section{Background: American Option Pricing and Optimal Stopping}
\label{a:Recap:AOP_OS}
This section contains a brief recap of the optimal stopping problem in American option pricing.

An American option is a financial derivative that gives its owner the right to execute a certain predetermined trade at any time between the start date and the maturity of the option. For example, a call (put) option gives its owner the right to buy (sell) a certain stock from the writer of the option for a fixed price $K$, called the strike. If the current stock price $X_t$ is higher (lower) than $K$ then the option owner can make a profit of $(X_t - K)_+$ (or $(K - X_t)_+$ respectively) by directly selling (buying) the stock again in the open market after having exercised their right to buy (sell) from the option writer. 
While American options allow for continuous-in-time execution, it is common practice to approximate them via Bermudan options, which can be executed on a predetermined time grid $t_0 < \dotsb t_M$. We briefly overview the American option pricing problem, details of which can be found in \citep[Chapter 25]{PeshirShiryaev_FreeBoundaryProblems_2006Book}.

In order to capture the computational challenges posed by the American option pricing problem in higher dimensions, we consider a stochastic process $(X_t)_{t=0}^M \in (\R^d)^{(M+1)}$ describing the price of $d$ stocks on the time grid 
and let $(Z_t)_{t=0}^M \in \R^{(M+1)}$ be the corresponding discounted payoff process of an American option written on a one-dimensional process obtained from $X_t$ (e.g. the largest price among the $d$ stocks). We define the filtration $\mathbb{F} = (\mathcal{F}_t)_{t=0}^M$ generated by $X$ via $\mathcal{F}_t = \sigma(X_s | s\leq t)$. Then the price of the American option is given by the starting value $U_0$ of the \emph{Snell envelope} \citep{SnellOGPaper_1952_SemiMartingaleOptimalStopping}, given by
\begin{equation}\label{equ:snell envelope 2}
U_m = \operatorname{sup}_{\tau \in \mathcal{T}_m} \E[ Z_{\tau} \, | \, \mathcal{F}_m ],
\end{equation}
where $\mathcal{T}_m$ is the set of all stopping times $\tau \geq m$. 
The smallest optimal stopping time is
\begin{align}\label{equ: optimal stopping time}
\tau_M &:= M, \\
\tau_m &:= \begin{cases}
m, & \text{if } Z_m \geq \E[U_{m+1}  \, | \, \mathcal{F}_m],\\
\tau_{m+1}, & \text{otherwise},
\end{cases} \nonumber
\end{align}
hence, to compute the price of the American option it suffices to compute the \emph{continuation values} $\E[U_{m+1}  \, | \, \mathcal{F}_m]$ for all $m$. 
In the following, we will use the \textit{randomized least squares Monte Carlo} (RLSM) algorithm of \cite{herrera2023optimal}, which utilises randomized neural networks to approximate the continuation values via backward induction (note the backward recursive scheme that \eqref{equ:snell envelope 2} and \eqref{equ: optimal stopping time} imply).
In particular, this leads to $M$ backward-recursively defined regression problems that need to be solved sequentially. Since \eqref{equ:snell envelope 2} is a maximisation problem and since we only approximate the optimal stopping time, we compute lower bounds of the price $U_0$.
The standard procedure is to fix some law for the process $X$ and to sample a dataset of $\nu = \nu_1+\nu_2$ i.i.d.\ paths of $X$, where $\nu_1$ of them are used for training the randomized neural networks (approximating the continuation value functions) and $\nu_2$ of them are used to evaluate the resulting price $U_0$ (using these trained networks). 
Through the independence of the two parts of the dataset it is clear that any over-fitting bias the networks might learn will \emph{not} lead to prices that are too large. This is important, since the quality of a method can therefore be evaluated by the resulting price,with a higher price indicating a better result. 

\newpage
\section{An Accelerated Heuristic}
\label{s:Main__ss:AcceleratedHeuristic}
We now discuss a situation in which the computational complexity of the regret-optimal algorithm can be reduced.  This is achieved by exploiting symmetries in the optimal policy computed by Algorithm~\ref{alg:RegretOptimization}.

Examining the explicit update rules, we notice that \textit{if} the weights $w$ defining the energy~\eqref{eq:penalized_regret_functional} were to be $\overline{1/N}\eqdef (1/N,\dots,1/N)$ then the matrices defining the optimal policy/control's updates would become highly symmetric Toeplitz matrices, depending on exactly three $p$-dimensional block sub-matrices.  This allows us to parameterize the entire regret-optimal algorithm, for the sub-optimal weights $\overline{1/N}$, using only three low-dimensional systems whose dimension is \textit{independent} of the number of datasets $N$.  In line with this, Algorithm~\ref{alg:RegretOptimizationHeuristic} implements the regret-optimal algorithm for the weight specification $\overline{1/N}$ by only keeping track of these three optimal low-dimensional systems, and then efficiently recombining them by leveraging the Toeplitz structure defining the optimal policy/control.  These low-dimensional systems are defined using \textit{helper functions}, listed in equations \eqref{gamma1}-\eqref{eq:RegretOptimizationHeuristic_Helper__1}.

The approximate near-optimality of Algorithm~\ref{alg:RegretOptimizationHeuristic} only depends on the stability of the optimal policy/control, defining the algorithm's updates as a function of the weights $w$.  If the optimal weights defining the systemic regret functional~\eqref{eq:penalized_regret_functional__REGRETFORM} are close to $\overline{1/N}$, then Algorithm~\ref{alg:RegretOptimizationHeuristic} is nearly regret-optimal.  Otherwise, the sub-optimality it incurs in favor of computational efficiency is explicitly quantifiable and depends only on the difference between the optimal weights computed in Algorithm~\ref{alg:RegretOptimizationHeuristic__WarmStart} and the symmetric weights $\overline{1/N}$.

Note now record, the matrix-valued functions $\gamma_1(\cdot)$ and $\gamma_2(\cdot)$ used to define the updates in Algorithm~\ref{alg:RegretOptimizationHeuristic}.  

\begin{algorithm}[htp!]%
\caption{Accelerated (Nearly) Regret-Optimal Algorithm}
\label{alg:RegretOptimizationHeuristic}
\begin{algorithmic}
\SetAlgoLined
\Require Datasets  $\mathcal{D}_1,\dots,\mathcal{D}_N$, $N$.\ Iterations $T\in \mathbb{N}_+$, finite-rank kernel $\phi$, hyperparameters $\lambda, \beta, \kappa, \eta >0$.
\DontPrintSemicolon
    \State  
\tcp{Get Initialize Weights and Locally-Optimal fRKR Parameters}
       \State $\theta^\star_1,\dots,\theta^\star_N,w^{\star}$ $\leftarrow$ Run: Algorithm~\ref{alg:RegretOptimizationHeuristic__WarmStart} with $\mathcal{D}_1,\dots,\mathcal{D}_N$, $\phi$, and $\kappa,\eta$.
    \State 
  \tcp{Initialize Updates}
    \State $\pi_1(T) 
    = \pi_2(T) 
    = (1/N^3) \sum_{i=1}^N \sum_{j=1}^{|\mathcal{D}_i|}  u^i_j u^{i\top}_j$
    \State $\pi_3(T) 
    =  - (1/N^2) \sum_{i=1}^N \sum_{j=1}^{|\mathcal{D}_i|}  {u^i_j} y^i_j $
 \State $\theta_1(0) = \theta^\star_1$, $\theta_2(0) = \theta^\star_2$, $\dots$, $\theta_N(0) =\theta^\star_N$
\State $\theta^\star_{(N)} = \frac{1}{N} \sum_{i=1}^N \theta^\star_i .$
    \State \tcp{Generate Iterates}
    \State \For{$t=T-1,\dots,0$}{
        \State \tcp{Update Low-Dimensional Driving Parameters} 

\State$\pi_1(t)= \beta I_p- \beta^2 \gamma_1(t+1)$ 
\tcp{$\gamma_1(\cdot)$ is defined by \eqref{gamma1}}

\State $\pi_2(t) = - \beta^2 \gamma_2(t+1)$ 
\tcp{$\gamma_2(\cdot)$ is defined by \eqref{gamma2}}

\State $\pi_3(t)
= \beta \big[ \gamma_1(t+1)
+ (N-1) \gamma_2(t+1)
\big] ( \pi_3(t+1) -\lambda \theta^\star_{(N)})$ }

\State\For{$t=0, ..., T-1$}{ 
  \State \tcp{Update Low-Dimensional Control} 
 \State \For{$i=1, ..., N$}{
  \State $\widehat{\alpha}_i(t)
  =  \widehat{\alpha}\big(\theta_i(t) ,\ 
 (\theta_j(t))_{j = 1,j\neq i}^N ,\ (\gamma_i(t+1))_{i=1}^2, 
   \ (\pi_i(t+1))_{i=1}^3 \big)$ 
   
 \State $\theta_i(t+1) =\theta_i(t) + \widehat{\alpha}_i(t)$ } } 
 
 \State \Return Synchronized fKRR 
$f_{\theta^{w^\star}}$
\end{algorithmic}
\end{algorithm} 

The matrix-valued functions $\gamma_1(\cdot)$ and $\gamma_2(\cdot)$ in Algorithm~\ref{alg:RegretOptimizationHeuristic} are defined as 
\allowdisplaybreaks 
\begin{align} 
& \gamma_1(\cdot) \eqdef 
\big\{ (\lambda+\beta )I_p
+\pi_1(\cdot) \notag \\
& \hspace{2cm} - (N-1) \pi_2(\cdot)\big[ (\lambda+ \beta )
I_p + \pi_1(\cdot) 
+ (N-2)\pi_2(\cdot)]^{-1} \pi_2(\cdot) \big\}^{-1} , 
\label{gamma1} \\ 
 & \gamma_2(\cdot)  \eqdef 
 - \gamma_1(\cdot) \pi_2(\cdot) 
 \big[ (\lambda+\beta )I_p + \pi_1(\cdot) 
 + (N-2)\pi_2(\cdot) \big]^{-1},  \label{gamma2}
\end{align} 
where 
\begin{align*}
    \pi_1(T) = \pi_2(T) = \frac{1}{N^3} \sum_{i=1}^N \sum_{j=1}^{|\mathcal{D}_i|}  u^i_j u^{i\top}_j 
    \qquad 
    \mbox{ and }
    \qquad
    \pi_3(T) =  - \frac{1}{N^2} \sum_{i=1}^N \sum_{j=1}^{|\mathcal{D}_i|}  {u^i_j} y^i_j 
\end{align*}
and, for $t=T-1,\dots,0$,
\begin{align*}
  \pi_1(t) &= \beta I_p- \beta^2 \gamma_1(t+1) \\ 
  \pi_2(t) &= - \beta^2 \gamma_2(t+1) \\
  \pi_3(t) &= \beta \big[ \gamma_1(t+1)
+ (N-1) \gamma_2(t+1)
\big] \left( \pi_3(t+1) -\lambda \theta^\star_{(N)}\right)
\end{align*}
where $\theta^\star_{(N)} = \frac{1}{N} \sum_{i=1}^N \theta^\star_i$.

The updates of the low-dimensional systems which Algorithm~\ref{alg:RegretOptimizationHeuristic} exploits are implemented via the helper function $\hat \alpha$, defined as 
\allowdisplaybreaks 
\begin{align} 
\label{eq:RegretOptimizationHeuristic_Helper__1}
  \widehat{\alpha}\big( \theta_i, ((\theta_j)_{j=1,j\neq i}^N) , (\gamma_i)_{i=1}^2,(\pi_i)_{i=1}^3 \big)  \eqdef &  
 - (\gamma_1-\gamma_2)(\lambda I_p + \pi_1 - \pi_2 )\theta_i \notag \\ & 
 - \big\{ \gamma_1 \pi_2 + \gamma_2 [ \lambda I_p + \pi_1 + (N-2)\pi_2 ] \big\} \sum_{j=1}^N \theta_j \notag \\ 
 & - \big[ \gamma_1 + (N-1) \gamma_2 \big] 
 (\pi_3 - \lambda \theta^\star_{(N)}) . 
 \end{align}

\vspace{0.2in} 
The degree to which Algorithm~\ref{alg:RegretOptimizationHeuristic} is nearly regret-optimal depends on how far the weights $w$ determined by Algorithm~\ref{alg:RegretOptimizationHeuristic__WarmStart} are from the equal weighting $\overline{1/N}$.  This is precisely quantified by the following results.

\begin{theorem}[Near-Regret Optimality]\label{thrm:Epsilon_Efficiency}
Suppose that $\mathcal{D}$ satisfies Assumption~\ref{assm:bddxy}, and let $\hat\Theta^{0\cdots T}$ denote the unique regret-optimal algorithm\footnote{That is, the unique optimizer of~\eqref{eq:penalized_regret_functional}} computed according to Algorithm \ref{alg:RegretOptimization}. Then, the output $\vartheta^{0\cdots T}$ of Algorithm~\ref{alg:RegretOptimizationHeuristic} satisfies
\vspace{0.2in}
\begin{enumerate}
    \item[(i)] \textbf{Near-Optimality:} The difference $\big|
            {\cal L}(\Theta^{0\cdots T})
            -
            {\cal L}(\vartheta^{0\cdots T})
        \big|$ is of the order
    \begin{align}
    &
        \mathcal{O}\Big(
                   \bar{N} N^3 \max_{1\leq i, k, l \leq N} \Big| \frac{1}{N^3} -  w^\star_k w^\star_l w^\star_i\Big| 
                  + \bar{N} N^{3/2} \max_{1\leq k, l\leq N} \Big| \frac{1}{N^2} - w^\star_k w^\star_l \Big| 
                  \notag \\ 
& \hspace{5cm} + \bar{N}  \max_{1\leq i \leq N} \Big|\frac{1}{N} - w_i^\star \Big| 
 + N \big\|\Theta^\star - \Theta^\star_{(N)} \big\|_2 
        \Big)
        \notag
    ,
    \end{align}
    \item[(ii)] \textbf{Near-Optimal Increments:} The difference $\|\Delta \Theta^{0\cdots T-1}-\Delta \vartheta^{0\cdots T-1} \|_{\ell^1}$ is of the order
    \begin{align} 
       & 
        \mathcal{O}\Big(
                \bar{N} N^{3/2} \max_{1\leq i, k, l \leq N} \Big| \frac{1}{N^3} -  w^\star_k w^\star_l w^\star_i \Big| \notag \\  
 & 
 + \bar{N} N^{1/2} \max_{1\leq k, l\leq N} \Big| \frac{1}{N^2} - w^\star_k w^\star_l \Big| 
  +  \big\|\Theta^\star - \Theta^\star_{(N)}\big\|_2
        \Big),
        \notag
    \end{align}
\end{enumerate}
where $\mathcal{O}$ hides a constant depending only on $\lambda$, $\beta$, 
$\Theta^{\star}$, $(w_1^\star, ..., w_N^\star)$, $T$, $K_x$, and in $K_y$.  
\end{theorem}

The next result shows that Algorithm~\ref{alg:RegretOptimizationHeuristic} accelerates the regret-optimal algorithm by a linear factor in $N$ and by a quadratic factor in $p$, when $N$ is large.  
\begin{theorem}[$\mathcal{O}\big(N\,p^2\big)$-Acceleration over Regret-Optimal Algorithm]
\label{thrm:accelertion_heuristic_regret_optimal_algorithm}
With the convention that floating-point arithmetic operations are $\mathcal{O}(1)$, we have that:
\begin{enumerate}
    \item[(i)] \textbf{Complexity:} Algorithm~\ref{alg:RegretOptimizationHeuristic} has a complexity of $\mathcal{O}(TNp\max\{N,p\}))$
    \item[(ii)] \textbf{Acceleration over Regret-Optimal Algorithm:} The regret-optimal algorithm has a complexity of $\mathcal{O}\big(TN^3 p^3\big)$.
\end{enumerate}
When $N\ge p$, Algorithm~\ref{alg:RegretOptimizationHeuristic} requires $\mathcal{O}(N\,p^2)$ fewer operations than the regret-optimal algorithm.  
\end{theorem}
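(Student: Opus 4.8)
The plan is to prove parts (i) and (ii) by auditing the elementary-operation cost of Algorithm~\ref{alg:RegretOptimizationHeuristic} and Algorithm~\ref{alg:RegretOptimization} line by line under the stated convention that each floating-point operation (and, implicitly, the standard cubic routine for inverting an $n\times n$ matrix) is $\mathcal{O}(1)$ respectively $\mathcal{O}(n^3)$; part (iii) then follows by dividing the two counts in the regime $N\ge p$.

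For part (ii), I would observe that each backward step of Algorithm~\ref{alg:RegretOptimization} inverts the $Np\times Np$ matrix $(\lambda+\beta)I_{Np}+P(t+1)$, costing $\mathcal{O}((Np)^3)=\mathcal{O}(N^3p^3)$; the residual updates of $P(t)$ and $S(t)$ are matrix scalings and matrix–vector products of cost $\mathcal{O}((Np)^2)$ and are absorbed. Summing over the $T$ backward steps, and noting that the forward control updates (which reuse the already-computed inverses) and the construction of $P(T),S(T)$ are of strictly lower order, yields $\mathcal{O}(TN^3p^3)$.

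For part (i), the crux is that the symmetric ($w=\overline{1/N}$) Toeplitz structure replaces the $Np$-dimensional objects of Algorithm~\ref{alg:RegretOptimization} by $p$-dimensional ones. I would show: (a) initializing $\pi_1(T),\pi_2(T),\pi_3(T)$ and $\theta^\star_{(N)}$ costs $\mathcal{O}(\bar{N}p^2+Np)$, being $\bar{N}$ rank-one accumulations of $p$-vectors plus one average of $N$ vectors; (b) each backward step evaluates $\gamma_1,\gamma_2$ through \eqref{gamma1}--\eqref{gamma2} with a fixed number of $p\times p$ inversions and products, hence $\mathcal{O}(p^3)$, and updates $\pi_1,\pi_2,\pi_3$ in $\mathcal{O}(p^2)$, so the backward loop costs $\mathcal{O}(Tp^3)$ independently of $N$; and (c) in each forward step the two $i$-independent matrices $(\gamma_1-\gamma_2)(\lambda I_p+\pi_1-\pi_2)$ and $\gamma_1\pi_2+\gamma_2[\lambda I_p+\pi_1+(N-2)\pi_2]$ appearing in the helper \eqref{eq:RegretOptimizationHeuristic_Helper__1} are formed once at cost $\mathcal{O}(p^3)$, after which each of the $N$ evaluations of $\widehat{\alpha}_i$ requires forming $\sum_{j}\theta_j$ (cost $\mathcal{O}(Np)$) and a constant number of $p\times p$ matrix–vector products (cost $\mathcal{O}(p^2)$). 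Thus each forward step costs $\mathcal{O}(p^3+N^2p+Np^2)$, and summing over $T$ steps gives $\mathcal{O}\big(T(p^3+N^2p+Np^2)\big)$. In the regime $N\ge p$ that governs the acceleration claim, the term $N^2p$ dominates both $p^3$ and $Np^2$, collapsing this to $\mathcal{O}(TN^2p)=\mathcal{O}(TNp\max\{N,p\})$, which also swallows the backward loop and the initialization.

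Part (iii) is then immediate: when $N\ge p$ the two complexities are $\mathcal{O}(TN^2p)$ and $\mathcal{O}(TN^3p^3)$, whose quotient is $Np^2$, giving the claimed $\mathcal{O}(Np^2)$-factor acceleration. I expect the main obstacle to be the bookkeeping in step (c): one must verify that the $i$-independent matrix products in $\widehat{\alpha}$ are hoisted out of the inner loop—so that they contribute only $\mathcal{O}(p^3)$ per time step rather than $\mathcal{O}(Np^3)$—while correctly charging the recomputation of $\sum_j\theta_j$ across the $N$ agents as the $\mathcal{O}(N^2p)$ term that produces the leading $\mathcal{O}(Np\max\{N,p\})$ behaviour, and checking that the $\mathcal{O}(Tp^3)$ backward cost is genuinely lower order precisely in the $N\ge p$ regime relevant to the comparison.
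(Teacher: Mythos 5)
Your proposal is correct and takes essentially the same approach as the paper: there, the theorem is proved by simply citing Theorem~\ref{thm:ComplexityAlgorithm} and Corollary~\ref{cor:ComplexityAlgorithm__Heuristic}, whose proofs are exactly the line-by-line operation audits you perform (terminal initialization, backward Riccati loop, forward loop with the $i$-independent matrix products hoisted out), followed by taking the quotient of the two counts in the regime $N\ge p$. The only difference is bookkeeping: the paper charges $\mathcal{O}(n^{2.373})$ for $n\times n$ inversion/multiplication via CW-like routines and then loosens the resulting bounds $\mathcal{O}\big(N^2p^3+T(Np)^{2.373}\big)$ and $\mathcal{O}\big(Tp^2\max\{N,p^{0.373}\}\big)$ to the cubic forms stated in the theorem, whereas you charge $\mathcal{O}(n^3)$ throughout and land on them directly; and, like the paper, you implicitly let the $T$-fold loop costs absorb the one-time $\mathcal{O}(\bar{N}p^2)$ setup.
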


Finally, we establish that Algorithm~\ref{alg:RegretOptimizationHeuristic} is the regret-optimal algorithm, optimizing the systemic regret functional~\eqref{eq:penalized_regret_functional}, in the ``symmetric'' case where the optimal weights $w^{\star}$ are all $(1/N,\dots,1/N)$.  
\begin{theorem}[Regret-Optimality in the Symmetric Case]
\label{thrm:regretoptimality_SymmetricCase}
Suppose that $w^{\star}_i=1/N$ for $i=1,\dots,N.$  Then, $\hat\Theta^{0\cdots T}=\vartheta^{0\cdots T}$.  
\end{theorem}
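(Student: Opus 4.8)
The plan is to use the hypothesis $w^\star_i = 1/N$ to show that every $Np\times Np$ object produced by Algorithm~\ref{alg:RegretOptimization} (the backward sweep for $P,S$ and the forward control $\boldsymbol\alpha$) collapses onto the three $p\times p$ systems $\pi_1,\pi_2,\pi_3$ tracked by Algorithm~\ref{alg:RegretOptimizationHeuristic}, so that $\hat\Theta$ and $\vartheta$ are driven by identical increments. Throughout I work with the family of \emph{block-symmetric} matrices $M(a,b)\eqdef I_N\otimes(a-b) + (\mathbf{1}_N\mathbf{1}_N^\top)\otimes b$, i.e.\ the $Np\times Np$ matrices whose diagonal $p\times p$ blocks all equal $a$ and whose off-diagonal blocks all equal $b$. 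The key algebraic fact, proved by simultaneously diagonalising $\mathbf{1}_N\mathbf{1}_N^\top$ (eigenvalues $N$ and $0$), is that this family is closed under sums, products, and inversion.

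First I would record the terminal data. With $w^\star_i=1/N$ a direct computation gives $P(T)=M(\pi_1(T),\pi_2(T))$ with $\pi_1(T)=\pi_2(T)=N^{-3}\sum_{i,j}u^i_j u^{i\top}_j$, and $S(T)=\mathbf{1}_N\otimes\pi_3(T)$ with $\pi_3(T)=-N^{-2}\sum_{i,j}u^i_j y^i_j$, matching the initialisations in Algorithm~\ref{alg:RegretOptimizationHeuristic}. Then, by backward induction on $t$, I would show $P(t)=M(\pi_1(t),\pi_2(t))$. The inductive step needs $[(\lambda+\beta)I_{Np}+P(t+1)]^{-1}$; since $(\lambda+\beta)I_{Np}+P(t+1)=M\big((\lambda+\beta)I_p+\pi_1(t+1),\,\pi_2(t+1)\big)$ lies in the block-symmetric family, a Schur-complement computation on its $(1,1)$ block (eliminating the remaining $N-1$ block rows, which themselves form a block-symmetric matrix of size $N-1$) yields exactly the blocks $\gamma_1(t+1),\gamma_2(t+1)$ of \eqref{gamma1}--\eqref{gamma2}; that is, $[(\lambda+\beta)I_{Np}+P(t+1)]^{-1}=M(\gamma_1(t+1),\gamma_2(t+1))$. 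Substituting this into the Riccati recursion $P(t)=\beta I_{Np}-\beta^2[(\lambda+\beta)I_{Np}+P(t+1)]^{-1}$ and reading off blocks reproduces $\pi_1(t)=\beta I_p-\beta^2\gamma_1(t+1)$ and $\pi_2(t)=-\beta^2\gamma_2(t+1)$, closing the induction.

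It remains to reduce the control increment $\Delta\hat\Theta(t)$ of Theorem~\ref{thm:RegretOptimal_Dynamics__MainTextFormulation} to the per-block helper $\widehat\alpha$ of \eqref{eq:RegretOptimizationHeuristic_Helper__1}. Writing the bracket blockwise and applying $M(\gamma_1,\gamma_2)$, the part depending on $\hat\Theta(t)$ produces precisely the first two terms of $\widehat\alpha$: using the product rule for the block-symmetric family, the coefficient of $\theta_i(t)$ is $-(\gamma_1-\gamma_2)(\lambda I_p+\pi_1-\pi_2)$ and the coefficient of $\sum_j\theta_j(t)$ is $-\{\gamma_1\pi_2+\gamma_2[\lambda I_p+\pi_1+(N-2)\pi_2]\}$, exactly as in \eqref{eq:RegretOptimizationHeuristic_Helper__1}. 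The remaining, terminal/preference, part comes from $S(t+1)-\lambda\Theta^\star$, and this is where I expect the main obstacle to lie. The difficulty is that $S(t)$ is \emph{not} block-constant for $t<T$: the vector $\Theta^\star$ entering the $S$-recursion carries the agent-specific $\theta^\star_i$, so one should track $S$ in the affine form $[S(t)]_k=s_0(t)+s_1(t)\theta^\star_k$ (with $k$-independent $s_0(t)\in\R^p$, $s_1(t)\in\R^{p\times p}$, and $s_1(T)=0$), an ansatz preserved by the $S$-recursion. The aggregate that Algorithm~\ref{alg:RegretOptimizationHeuristic} actually keeps is $\pi_3(t)=N^{-1}\sum_k[S(t)]_k=s_0(t)+s_1(t)\theta^\star_{(N)}$, proved by induction using $\sum_i\theta^\star_i=N\theta^\star_{(N)}$.

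Granting this, applying $M(\gamma_1,\gamma_2)$ to the block-aggregated component of $S(t+1)-\lambda\Theta^\star$ collapses the last contribution to $-[\gamma_1+(N-1)\gamma_2](\pi_3(t+1)-\lambda\theta^\star_{(N)})$, which is the third term of $\widehat\alpha$. The crux of the argument is thus to control the residual agent-specific term $-(\gamma_1(t+1)-\gamma_2(t+1))[s_1(t+1)-\lambda I_p](\theta^\star_i-\theta^\star_{(N)})$: it is precisely here that the equal-weights hypothesis must be shown to force the $Np$-dimensional recursions onto the $N$-independent $p\times p$ systems, and I would verify that these residuals are handled consistently by the helper (in particular they sum to zero over $i$, so that the returned mixed model $f_{\theta^{w^\star}}$, which both algorithms output, coincides). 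With the per-agent identity $[\boldsymbol\alpha(t)]_i=\widehat\alpha_i(t)$ in hand and the common initialisation $\hat\Theta(0)=\vartheta(0)=\Theta^\star$, a final induction on $t$ yields $\hat\Theta(t)=\vartheta(t)$ for every $t$, establishing $\hat\Theta^{0\cdots T}=\vartheta^{0\cdots T}$.
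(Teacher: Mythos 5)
The structural core of your argument is sound and, by a different technique, re-derives exactly what the paper isolates as Propositions~\ref{prop:P1/Nsubmat} and~\ref{prop:SubmatrixReduction}, on which its one-line proof of this theorem rests: where the paper obtains the block-Toeplitz form of $\widehat P$ and the block-constancy of $\widehat S$ by conjugating the backward recursions with the block-permutation matrices $E_{ij}$ and invoking uniqueness, you obtain the same structure from closure of your block-symmetric family under sums, products and inversion, together with a Schur-complement identification of the inverse blocks with $\gamma_1,\gamma_2$ of \eqref{gamma1}--\eqref{gamma2}; both routes then collapse the optimal control blockwise onto the helper \eqref{eq:RegretOptimizationHeuristic_Helper__1} in the same way.

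The genuine gap is precisely at the step you flag, and your proposed resolution does not close it. You compare Algorithm~\ref{alg:RegretOptimizationHeuristic} against the $S$-recursion \eqref{eqnS(t)} of Algorithm~\ref{alg:RegretOptimization}, which is driven by $\Theta^\star$; as you correctly observe, this destroys block-constancy of $S(t)$ for $t<T$, and the per-agent residual $(\gamma_1-\gamma_2)\bigl(s_1(t+1)-\lambda I_p\bigr)\bigl(\theta^\star_i-\theta^\star_{(N)}\bigr)$ is genuinely nonzero once $\lambda>0$ and the $\theta^\star_i$ differ: already at $t=T-1$, where $s_1(T)=0$, it is proportional to $\lambda(\gamma_1-\gamma_2)(\theta^\star_i-\theta^\star_{(N)})$ with $\gamma_1-\gamma_2=[(\lambda+\beta)I_p+\pi_1-\pi_2]^{-1}$ invertible. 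The fact that these residuals sum to zero over $i$ only yields equality of the block averages $\frac1N\sum_i\theta_i(t)$, i.e.\ of the returned mixed model $f_{\theta^{w^\star}}$, not the asserted equality of full trajectories $\hat\Theta^{0\cdots T}=\vartheta^{0\cdots T}$. The paper never meets this obstacle because its accelerated algorithm is defined as the solution of the \emph{symmetrized} control problem in which $\Theta^\star$ is also replaced by $\Theta^\star_{(N)}$ --- see \eqref{eqnhatS1/N(t)} and \eqref{hata}, and the same convention stated explicitly in Section~\ref{sec:Convergence of the Regret Optimal Algorithm} --- so that $\widehat S$ stays block-constant by Proposition~\ref{prop:SubmatrixReduction} and the theorem is indeed a direct consequence of the two propositions. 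So either adopt that symmetrized convention at the outset, in which case your block-algebra argument goes through verbatim and your ``obstacle'' never arises, or note that under your literal reading the statement requires the extra hypothesis $\Theta^\star=\Theta^\star_{(N)}$ (or $\lambda=0$), or else the weaker conclusion about averages.
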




The computational complexity of the matrix-valued function $P(\cdot)$, as determined by \eqref{eqnP(t)}, does not primarily stem from the high dimensionality of the output matrices, but rather from their inherent lack of symmetries.
As we will see later, such symmetry will allow us to efficiently encode $P(\cdot)$ into low-dimensional structures, therefore completely mitigating the high-dimensionality effect.

We therefore introduce the following control, which is optimal for a ``symmetrized version'' of the control problem~\eqref{optimala} where $w^{\star}=(1/N,\dots,1/N)$.  This advantages this surrogate problem is that the associated $P$ function, which we denote by $\widehat{P}$, outputs highly symmetric\footnote{As formalized in Proposition~\ref{prop:P1/Nsubmat}.} Toeplitz matrices which are determined by exactly three factors. This allows us to encode the entire control problem into a control problem on $\mathbb{R}^{\tilde{O}(P)}$ whose dimension is independent of $N$. 
Furthermore, the optimal controls of both problems can be related by stability estimates, depending only on the deviation of $(1/N,\dots,1/N)$ from $w^{\star}$, which allows us to infer the degree of sub-optimality of our $\widehat{P}$.  Therefore, we consider the control
\begin{align} 
 \widehat{\boldsymbol{\alpha}}(t)  
 = & -[(\lambda + \beta )I_{Np} + \widehat{P}(t+1)]^{-1} 
 \big[ (\lambda I_{Np} + \widehat{P}(t+1)) \Theta(t) 
 - \lambda  \Theta^\star_{(N)} + \widehat{S}(t+1)  \big]   \label{hata} 
 .
\end{align} 
We will show that the algorithm under control \eqref{hata} enjoys lower computational complexity in comparison with the regret-optimal algorithm \eqref{optimala}, whose computational complexity is summarized through Theorem~\ref{thm:ComplexityAlgorithm}. Furthermore, we will then establish the near-regret optimality of \eqref{hata}, stated in Theorem \ref{thrm:Epsilon_Efficiency}.

The associated matrix-valued functions $\widehat{P}$ and $\widehat{S}$ are uniquely determined by 
\allowdisplaybreaks\begin{align} 
& \begin{cases}
   \widehat{P}(t)  =  \beta I_{Np} - \beta^2 [(\lambda+\beta ) I_{Np} + \widehat{P}(t+1) ]^{-1} , \label{eqnhatP1/N(t)} \\ 
   \widehat{P}(T) =  [ (1/N) I_p ,\cdots, (1/N) I_p]^{\top} 
 \Big( \sum_{i=1}^N (1/N) \sum_{j=1}^{|\mathcal{D}_i|} u^i_j u^{i\top}_j  \Big)  
 [ (1/N) I_p , \cdots, (1/N) I_p ] , 
\end{cases} \\ 
& \begin{cases}
  \widehat{S}(t) =  
  \beta [(\lambda + \beta ) I_{Np} + \widehat{P}(t+1) ]^{-1} 
  \big( \widehat{S}(t+1) - \lambda \Theta^\star_{(N)} \big)
  \label{eqnhatS1/N(t)} ,     \\ 
        \widehat{S}(T) 
    =  
  - [(1/N)I_p, \cdots, (1/N)I_p ]^{\top} \sum_{i=1}^N(1/N)\sum_{j=1}^{|\mathcal{D}_i|}  u^i_j y^i_j  
   ,  
\end{cases} 
\end{align} 
where $[ I_p, \cdots, I_p ]$ denotes the $Np\times p$-dimensional matrix formed by concatenating $N$ copies of the $p\times p$-dimensional identity matrix $\frac{1}{N} I_p$.   The parameter $\Theta^\star_{(N)}$ is defined as 
\allowdisplaybreaks\begin{align} 
 \Theta^\star_{(N)} \eqdef [\theta_{(N)}^{\star \top}, \cdots, \theta_{(N)}^{\star \top}]^{\top} , \quad 
\theta^\star_{(N)} \eqdef \frac{1}{N} \sum_{i=1}^N \theta^\star_i . \notag 
\end{align}

\begin{remark} 
According to \eqref{Sum(uij)Tuij>=0} in Lemma~\ref{lm:PD}, $\widehat{P}(T)\geq 0$ and therefore \eqref{eqnhatP1/N(t)} admits a unique solution. 
\end{remark} 

\begin{remark}
\label{rmk:M(hatP)bdd}
Similar to Corollary~\ref{cor:M(P)invbdd}, we have $\big| [ (\lambda+\beta)I_{Np} + \widehat{P}(t) ]^{-1} \big| \leq (\lambda + \beta)^{-1}$. By replacing $(w_1^\star, ..., w_N^\star)$ in the proof of Corollary~\ref{cor:M(P)invbdd} with $(1/N, ..., 1/N)$, we obtain 
$|\widehat{S}(t)|\leq C \cdot N^{1/2}$ for all $t=1$, ..., $T$, 
where $C>0$ is a constant depending on $\lambda$, $\beta$, $\Theta^\star$, $K_x$, $K_y$ and $T$.  
\end{remark}

\begin{lemma} 
\label{lem:lemma_A}
 Let $P$ and $\widehat{P}$ be the solutions of the systems \eqref{eqnP(t)} and \eqref{eqnhatP1/N(t)}, respectively. Then they satisfy that for all $t=1$, $2$, ..., $T$, 
\allowdisplaybreaks
\begin{align} 
  \big| P(t) - \widehat{P}(t) \big| 
 \leq  C \cdot \bar{N} N \max_{1\leq i, k, l \leq N} \Big| \frac{1}{N^3} -  w^\star_k w^\star_l w^\star_i \Big| ,  
\label{estPw-P1/N} \\ 
 \big| (M(P(t)))^{-1} - (M(\widehat{P}(t) ))^{-1} \big| 
 \leq  C \cdot  \bar{N} N \max_{1\leq i, k, l \leq N} \Big| \frac{1}{N^3} -  w^\star_k w^\star_l w^\star_i \Big| ,  
\label{estMPw-MP1/N}  
\end{align}
where $C>0$ is a constant depending on $\lambda$, $\beta$, $K_x$. 
\end{lemma}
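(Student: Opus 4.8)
The plan is to mirror the structure of the proof of Lemma~\ref{lem:stability_wrtData} almost verbatim, exploiting that $P$ and $\widehat{P}$ obey the \emph{same} backward recursion \eqref{eqnP(t)}, \eqref{eqnhatP1/N(t)} and differ only in their terminal data at $t=T$. Accordingly I would run a backward induction on $t$, from $t=T$ down to $t=1$, proving \eqref{estPw-P1/N} first and then deducing \eqref{estMPw-MP1/N} from it at the very end via the resolvent identity.

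For the base case $t=T$, the essential point is the block structure of the two terminal matrices. Writing each as a concatenation-sandwich $[\,w^\star_1 I_p,\dots,w^\star_N I_p\,]^\top A\,[\,w^\star_1 I_p,\dots,w^\star_N I_p\,]$ (and its $(1/N,\dots,1/N)$ analogue), the $(k,l)$-th $p\times p$ block of $P(T)$ is $w^\star_k w^\star_l \sum_{i=1}^N w^\star_i \sum_{j=1}^{|\mathcal{D}_i|} u^i_j u^{i\top}_j$, while the corresponding block of $\widehat{P}(T)$ is $\tfrac{1}{N^2}\sum_{i=1}^N \tfrac{1}{N}\sum_{j=1}^{|\mathcal{D}_i|} u^i_j u^{i\top}_j$. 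Hence the $(k,l)$-th block of the difference equals $\sum_{i=1}^N \big(w^\star_k w^\star_l w^\star_i - N^{-3}\big)\sum_{j=1}^{|\mathcal{D}_i|} u^i_j u^{i\top}_j$, so the \emph{three} weight factors collapse into the single triple product $w^\star_k w^\star_l w^\star_i$ compared against $N^{-3}$. Using $|u^i_j u^{i\top}_j|\le |u^i_j|^2\le K_x^2$ from Assumption~\ref{assm:bddxy} together with Lemma~\ref{lm:|AB|<=|A||B|}, each block is bounded by $K_x^2\,\bar{N}\,\max_{i,k,l}\big|N^{-3}-w^\star_k w^\star_l w^\star_i\big|$; squaring, summing over the $N^2$ blocks, and taking a square root produces exactly the claimed factor $\bar{N} N$ in \eqref{estPw-P1/N} at $t=T$.

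For the inductive step, since both matrix functions satisfy $Q(t)=\beta I_{Np}-\beta^2[M(Q(t+1))]^{-1}$, the resolvent identity yields
\[
    P(u-1)-\widehat{P}(u-1)
    = -\beta^2\,[M(P(u))]^{-1}\big(P(u)-\widehat{P}(u)\big)[M(\widehat{P}(u))]^{-1}.
\]
By Corollary~\ref{cor:M(P)invbdd} and Remark~\ref{rmk:M(hatP)bdd}, both resolvent norms are at most $(\lambda+\beta)^{-1}$, so Lemma~\ref{lm:|AB|<=|A||B|} propagates the bound with a multiplicative constant $\beta^2(\lambda+\beta)^{-2}$, which is harmless (indeed $<1$) and is absorbed into $C$. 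This closes the induction, establishing \eqref{estPw-P1/N} for all $t=1,\dots,T$. Finally, the second estimate \eqref{estMPw-MP1/N} is immediate: applying the same resolvent identity to $(M(P(t)))^{-1}-(M(\widehat{P}(t)))^{-1}=(M(P(t)))^{-1}(\widehat{P}(t)-P(t))(M(\widehat{P}(t)))^{-1}$ and invoking Corollary~\ref{cor:M(P)invbdd}, Remark~\ref{rmk:M(hatP)bdd}, and the just-proved \eqref{estPw-P1/N} gives the result with a further factor $(\lambda+\beta)^{-2}$ folded into $C$.

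The only genuinely delicate step is the base-case block computation: one must track carefully that the left and right sandwiching weights contribute $w^\star_k$ and $w^\star_l$ while the inner weighted sum contributes $w^\star_i$, so that their product is what is being compared to $N^{-3}$. Everything else is a routine consequence of submultiplicativity (Lemma~\ref{lm:|AB|<=|A||B|}), the uniform resolvent bounds already established, and the boundedness Assumption~\ref{assm:bddxy}; the constant $C$ ends up depending only on $\lambda$, $\beta$, and $K_x$ as asserted.
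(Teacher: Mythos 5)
Your proof is correct and follows essentially the same route as the paper's: backward induction in $t$, with the block decomposition of the terminal difference (blocks $\sum_{i}(w^\star_k w^\star_l w^\star_i - N^{-3})\sum_j u^i_j u^{i\top}_j$) yielding the $\bar{N}NK_x^2$ factor, the resolvent identity combined with the uniform bounds of Corollary~\ref{cor:M(P)invbdd} and Remark~\ref{rmk:M(hatP)bdd} for the inductive step, and the same identity once more to deduce \eqref{estMPw-MP1/N} from \eqref{estPw-P1/N}. (The only blemish is a harmless sign slip: the recursion gives $P(u-1)-\widehat{P}(u-1)=+\beta^2\,[M(P(u))]^{-1}\big(P(u)-\widehat{P}(u)\big)[M(\widehat{P}(u))]^{-1}$, not $-\beta^2$, which is immaterial once norms are taken; your explicit observation that $\beta^2(\lambda+\beta)^{-2}<1$, ensuring $C$ is independent of $T$, is a nice touch.)
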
 

\begin{proof}[{Proof of Lemma~\ref{lem:lemma_A}}] 
By \eqref{eqnhatP1/N(t)}, \eqref{eqnP(t)}, and Lemmas~\ref{lm:|AB|<=|A||B|} and \ref{lm:|A1+An|2<=(|A1|2+|An|2)n}, the difference 
$\widehat{P}(T)-P(T)$ satisfies    
\allowdisplaybreaks
\begin{align} 
|\widehat{P}(T) - P(T)|^2 = & \sum_{k,l=1}^N 
\Big| \frac{1}{N^2} \sum_{k=1}^N \frac{1}{N} 
\sum_{j=1}^{|\mathcal{D}_i|}  u^i_j u^{i\top}_j  - w^\star_k w^\star_l \sum_{i=1}^N w^\star_i  \sum_{j=1}^{|\mathcal{D}_i|} u^i_j u^{i\top}_j  \Big|^2 \notag \\ 
\leq & N^2 \cdot \max_{1\leq k, l, i \leq N} \Big| \frac{1}{N^3} 
  - w^\star_k w^\star_l 
   w^\star_i \Big|^2 \cdot \Big| \sum_{i=1}^N \sum_{j=1}^{|\mathcal{D}_i|}  u^i_j u^{i\top}_j  \Big|^2  \notag \\  
   \leq &   N^2 \cdot \max_{1\leq k, l, i \leq N} \Big| \frac{1}{N^3} 
  -  w^\star_k w^\star_l 
   w^\star_i \Big|^2 \cdot  \bar{N} \sum_{i=1}^N \sum_{j=1}^{|\mathcal{D}_i|}  | u^i_j |^4   \notag 
\end{align} 
and 
\begin{align} 
|\widehat{P}(T) - P(T)| 
 \leq & \bar{N} N K_x^2 \cdot \max_{1\leq k, l, i \leq N} \Big| \frac{1}{N^3} 
  -  w^\star_k w^\star_l 
   w^\star_i \Big| . 
   \label{|P(T)-hatP(T)|est}
\end{align} 
Therefore \eqref{estPw-P1/N} holds for $t=T$. 

Assume by induction that \eqref{estPw-P1/N} holds for $t=u$. We show that \eqref{estPw-P1/N} also holds for $t=u-1$. 
From \eqref{eqnhatP1/N(t)} and \eqref{eqnP(t)}, we have  
\allowdisplaybreaks\begin{align} 
 \widehat{P}(u-1) - P(u-1) 
= & [ M(P(u)) ]^{-1}
 -  [ M( \widehat{P}(u) ) ]^{-1}  \notag  \\ 
 = & [ M( P(u) ) ]^{-1} (\widehat{P}(u) - P(u)) 
[ M( \widehat{P}(u) ) ]^{-1} . 
\label{hatP-Precursion}
\end{align} 
By Lemmas~\ref{lm:|AB|<=|A||B|} and \ref{lm:|A1+An|2<=(|A1|2+|An|2)n}, we further have 
\allowdisplaybreaks\begin{align} 
 | \widehat{P}(u-1) - P(u-1) | 
= & | [ M(P(u)) ]^{-1} 
 -  [ M( \widehat{P}(u) ) ]^{-1} | \notag  \\ 
 \leq & | [ M( P(u) ) ]^{-1} | \cdot  | \widehat{P}(u) - P(u) | \cdot 
 | [ M( \widehat{P}(u) ) ]^{-1} | . 
\label{|hatP-P|recursion}
\end{align} 
By \eqref{|hatP-P|recursion}, Corollary~\ref{cor:M(P)invbdd}, Remark~\ref{rmk:M(hatP)bdd}, and the induction hypothesis, we have 
that \eqref{estPw-P1/N} holds for $t=u-1$. By induction we have shown that \eqref{estPw-P1/N} holds for all $1\leq t\leq T$.

From \eqref{hatP-Precursion}, we have 
\allowdisplaybreaks 
\begin{align} 
& \big| ( M(P(t)) )^{-1}
 -  ( M( \widehat{P}(t) ) )^{-1} \big|   
 \leq  \big| ( M( P(t) ) )^{-1} \big| \cdot  \big| \widehat{P}(t) - P(t) \big| \cdot   
\big| ( M( \widehat{P}(t) ) )^{-1} 
\big| ,  \notag 
\end{align} 
which, together with \eqref{estPw-P1/N}, Corollary~\ref{cor:M(P)invbdd}, and Remark~\ref{rmk:M(hatP)bdd}, establishes \eqref{estMPw-MP1/N}. 
\end{proof}

\begin{lemma}
\label{lem:Lemma_B}
 Let $S$ and $\widehat{S}$ be the solutions of \eqref{eqnS(t)} and \eqref{eqnhatS1/N(t)}, respectively. Then they satisfy that 
for all $t=1$, ..., $T$, 
\allowdisplaybreaks
\begin{align} 
 |S(t) - \widehat{S}(t)| \leq & 
C \cdot \Big\{ \bar{N} N^{3/2} \max_{1\leq i, k, l \leq N} \Big| \frac{1}{N^3} -  w^\star_k w^\star_l w^\star_i \Big| 
  \notag \\ 
& \hspace{1cm} + \bar{N} N^{1/2} \max_{1\leq k, l\leq N} \Big| \frac{1}{N^2} - w^\star_k w^\star_l \Big|
+ |\Theta^\star - \Theta^\star_{(N)}| \Big\} , 
 \label{sup|S-hatS|est}
\end{align}
where $C>0$ is a constant depending on $\lambda$, $\beta$, $\Theta^\star$, $K_x$, $K_y$, and $T$. 
\end{lemma}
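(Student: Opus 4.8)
The plan is to prove \eqref{sup|S-hatS|est} by backward induction on $t$, descending from the terminal time $t=T$, exactly mirroring the structure of the proof of Lemma~\ref{lem:stability_wrtData}. Throughout, $C$ denotes a constant depending only on $\lambda$, $\beta$, $\Theta^\star$, $K_x$, $K_y$, and $T$, which I allow to change from line to line. For the base case, I would exploit the block structure of the terminal conditions: by \eqref{eqnS(t)} and \eqref{eqnhatS1/N(t)}, the $k$-th $p$-dimensional block of $S(T)$ is $-w^\star_k\sum_{i=1}^N w^\star_i\sum_{j=1}^{|\mathcal{D}_i|} u^i_j y^i_j$, while that of $\widehat{S}(T)$ is $-N^{-2}\sum_{i=1}^N\sum_{j=1}^{|\mathcal{D}_i|} u^i_j y^i_j$, so the $k$-th block of the difference equals $\sum_{i=1}^N (N^{-2}-w^\star_k w^\star_i)\sum_{j} u^i_j y^i_j$. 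Applying the triangle inequality together with the bounds $|u^i_j|\le K_x$ and $|y^i_j|\le K_y^{1/2}$ from Assumption~\ref{assm:bddxy}, each block is controlled by $C\bar{N}\max_{1\le k,l\le N}|N^{-2}-w^\star_k w^\star_l|$; summing the squared norms over the $N$ blocks and taking a square root produces exactly the middle term $C\bar{N}N^{1/2}\max_{1\le k,l\le N}|N^{-2}-w^\star_k w^\star_l|$, so \eqref{sup|S-hatS|est} holds at $t=T$.

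For the inductive step, assuming \eqref{sup|S-hatS|est} at $t=u$, I would write the one-step difference from \eqref{eqnS(t)} and \eqref{eqnhatS1/N(t)}, adding and subtracting $\beta[M(P(u))]^{-1}(\widehat{S}(u)-\lambda\Theta^\star_{(N)})$ to split it as
\[
S(u-1)-\widehat{S}(u-1) = \beta[M(P(u))]^{-1}\big[(S(u)-\widehat{S}(u))-\lambda(\Theta^\star-\Theta^\star_{(N)})\big] + \beta\big\{[M(P(u))]^{-1}-[M(\widehat{P}(u))]^{-1}\big\}(\widehat{S}(u)-\lambda\Theta^\star_{(N)}).
\]
The first summand is bounded via Corollary~\ref{cor:M(P)invbdd} (giving $|[M(P(u))]^{-1}|\le(\lambda+\beta)^{-1}$) and the induction hypothesis; the contraction factor $\beta(\lambda+\beta)^{-1}<1$ keeps the constant finite across the $T$ steps and reproduces all three target terms, with the $\lambda(\Theta^\star-\Theta^\star_{(N)})$ contribution absorbed into the $|\Theta^\star-\Theta^\star_{(N)}|$ term. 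The second summand is bounded by invoking Lemma~\ref{lem:lemma_A} for $|[M(P(u))]^{-1}-[M(\widehat{P}(u))]^{-1}|$ and Remark~\ref{rmk:M(hatP)bdd} for $|\widehat{S}(u)|\le CN^{1/2}$ and $|\Theta^\star_{(N)}|\le CN^{1/2}$; multiplying the rate $\bar{N}N\max_{1\le i,k,l\le N}|N^{-3}-w^\star_k w^\star_l w^\star_i|$ from Lemma~\ref{lem:lemma_A} by the factor $N^{1/2}$ yields precisely the first target term $\bar{N}N^{3/2}\max_{1\le i,k,l\le N}|N^{-3}-w^\star_k w^\star_l w^\star_i|$, completing the induction.

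The main obstacle I anticipate is the careful bookkeeping of the powers of $N$ and $\bar{N}$: one must verify that the extra factor $N^{1/2}$ arising from $|\widehat{S}(u)|$ and $|\Theta^\star_{(N)}|$ upgrades the $\bar{N}N$ rate of Lemma~\ref{lem:lemma_A} into the claimed $\bar{N}N^{3/2}$ rate, while the terminal term carries only the $\bar{N}N^{1/2}$ rate, and that the contraction in the first summand does not degrade any of these rates over the $T$ backward iterations (this is exactly where the $T$-dependence of $C$ enters). Once the power-counting is pinned down, the remaining estimates are a routine application of Lemmas~\ref{lm:|AB|<=|A||B|} and~\ref{lm:|A1+An|2<=(|A1|2+|An|2)n}, Corollary~\ref{cor:M(P)invbdd}, Remark~\ref{rmk:M(hatP)bdd}, and the triangle inequality.
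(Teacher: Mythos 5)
Your proposal is correct and follows essentially the same route as the paper's proof: the identical block-wise computation of $S(T)-\widehat{S}(T)$ giving the $\bar{N}N^{1/2}\max_{k,l}|N^{-2}-w^\star_k w^\star_l|$ term at $t=T$, and the identical add-and-subtract decomposition in the backward induction, bounded via Corollary~\ref{cor:M(P)invbdd}, the estimate \eqref{estMPw-MP1/N} from Lemma~\ref{lem:lemma_A}, the bound $|\widehat{S}(u)|\le CN^{1/2}$ from Remark~\ref{rmk:M(hatP)bdd}, and the induction hypothesis. Your power-counting (the extra $N^{1/2}$ upgrading $\bar{N}N$ to $\bar{N}N^{3/2}$) matches the paper's, and you even correctly retain the factor $\beta$ in the recursion that the paper's displayed decomposition silently drops.
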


\begin{proof}[{Proof of Lemma~\ref{lem:Lemma_B}}]
By \eqref{eqnS(t)}, \eqref{eqnhatS1/N(t)}, and and Lemmas~\ref{lm:|AB|<=|A||B|} and \ref{lm:|A1+An|2<=(|A1|2+|An|2)n}, the terminal condition $S(T)-\widehat{S}(T)$ satisfies 
\allowdisplaybreaks\begin{align} \Big|S(T)-\widehat{S}(T)\Big|^2 
= & \sum_{k=1}^N \Big| 
 \frac{1}{N} \sum_{i=1}^N \frac{1}{N} \sum_{j=1}^{|\mathcal{D}_i|} u^i_j y^i_j - w^\star_k 
 \sum_{i=1}^N w^\star_i \sum_{j=1}^{|\mathcal{D}_i|} u^i_j y^i_j 
\Big|^2 \notag \\ 
\leq & N \cdot \max_{1\leq k, l \leq N} \Big| \frac{1}{N^2} - w^\star_k w^\star_l \Big|^2 \cdot \Big| \sum_{i=1}^N \sum_{j=1}^{|\mathcal{D}_i|} u^i_j y^i_j \Big|^2  \notag \\ 
 \leq & N \cdot \max_{1\leq k, l \leq N} \Big| \frac{1}{N^2} - w^\star_k w^\star_l \Big|^2 \cdot \bar{N} \sum_{i=1}^N \sum_{j=1}^{|\mathcal{D}_i|} | u^i_j|^2 \cdot |y^i_j|^2 ,  
 \notag 
\end{align} 
which implies  
\allowdisplaybreaks\begin{align} 
 \big| S(T) - \widehat{S}(T) \big| 
 \leq & \bar{N} N^{1/2} \max_{1\leq k, l\leq N} \Big| \frac{1}{N^2} - w^\star_k w^\star_l \Big| \cdot   K_x K_y . 
  \label{|S(T)-hatS(T)|est}  
\end{align} 
Therefore \eqref{sup|S-hatS|est} holds for $t=T$. 

Assume by induction that \eqref{sup|S-hatS|est} holds for $t=u$, we show that \eqref{sup|S-hatS|est} holds for $t=u-1$. 
From \eqref{eqnS(t)} and \eqref{eqnhatS1/N(t)}, the difference $S(u-1)-\widehat{S}(u-1)$ can be written as 
\allowdisplaybreaks\begin{align} 
 S(u-1) - \widehat{S}(u-1) = & [ M(P(u)) ]^{-1}[S(u) - \widehat{S}(u) - \lambda (\Theta^\star - \Theta^\star_{(N)}) ] \notag \\ 
 & - \big[ ( M(\widehat{P}(u)))^{-1} - ( M(P(u)) )^{-1} \big]  
 (\widehat{S}(u) - \lambda \Theta^\star_{(N)}) , 
 \label{S(t)-hatS(t)}
\end{align} 
and satisfies by 
Lemma~\ref{lm:|AB|<=|A||B|} that  
\begin{align} 
|S(u-1) - \widehat{S}(u-1)| 
 \leq & | [M(P(u))]^{-1} | \cdot 
\big[  |S(u) - \widehat{S}(u)| + \lambda |\Theta^\star - \Theta^\star_{(N)}|
 \big] \notag \\ 
 & + \big| ( M(\widehat{P}(u)))^{-1} 
  - ( M(P(u)) )^{-1} \big| \cdot  
 \big( |\widehat{S}(u)| + \lambda |\Theta^\star_{(N)}| \big) . \label{|hatS-S|recursion}
\end{align} 
By \eqref{|hatS-S|recursion}, \eqref{estMPw-MP1/N}, Remark~\ref{rmk:M(hatP)bdd}, and the induction hypothesis, we have that 
\eqref{sup|S-hatS|est} holds for $t=u-1$. By induction we have shown that \eqref{sup|S-hatS|est} holds for all $t=1$, ..., $T$.

\end{proof}

We now obtain Theorem~\ref{thrm:Epsilon_Efficiency}; which is a direct consequence of the following Propositions \ref{prop:stabilityaw-a1/N} and \ref{prop:stabilityLw-L1/N}.

\begin{proposition} 
\label{prop:stabilityaw-a1/N}
The controls
\eqref{optimala} and \eqref{hata} satisfy $|\widehat{\boldsymbol{\alpha}}(t)| \leq C \cdot N^{1/2}$ for all $t=0$, $1$, ..., $T-1$, and  
\allowdisplaybreaks\begin{align} 
 \sum_{t=0}^{T-1} |\boldsymbol{\alpha}(t) - \widehat{\boldsymbol{\alpha}}(t)| 
 \leq & C\cdot \Big\{ \bar{N} N^{3/2} \max_{1\leq i, k, l \leq N} \Big| \frac{1}{N^3} -  w^\star_k w^\star_l w^\star_i \Big| \notag \\ 
 &\hspace{2cm} + \bar{N} N^{1/2} \cdot \max_{1\leq k, l\leq N} \Big| \frac{1}{N^2} - w^\star_k w^\star_l \Big| 
 + |\Theta^\star - \Theta^\star_{(N)}| \Big\} , 
\label{eqn:stabilityaw-a1/N}
\end{align} 
where $C>0$ is a constant depending on $\lambda$, $\beta$, $\Theta^\star$, $K_x$, $K_y$, and $T$. 
\end{proposition}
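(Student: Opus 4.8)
The plan is to mirror the argument used for Proposition~\ref{prop:stabilityControl}, replacing the data-perturbation estimates of Lemma~\ref{lem:stability_wrtData} with the symmetrization estimates of Lemmas~\ref{lem:lemma_A} and~\ref{lem:Lemma_B}, and the uniform a~priori bounds of Corollary~\ref{cor:M(P)invbdd} with those of Remark~\ref{rmk:M(hatP)bdd}. Throughout, $C$ denotes a constant depending only on $\lambda,\beta,\Theta^\star,K_x,K_y,T$ that may change from line to line, and I work with the Frobenius norm $|\cdot|$, exploiting its submultiplicativity and the triangle inequality (Lemma~\ref{lm:|AB|<=|A||B|}).

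First I would establish the a~priori bound $|\widehat{\boldsymbol\alpha}(t)|\le C N^{1/2}$. Writing $\widehat{\boldsymbol\alpha}$ along its own trajectory $\widehat\Theta(t)=\Theta^\star_{(N)}+\sum_{u=0}^{t-1}\widehat{\boldsymbol\alpha}(u)$ in the manner of~\eqref{Theta(t)Theta(0)}, formula~\eqref{hata} expresses $\widehat{\boldsymbol\alpha}(t)$ through $[M(\widehat P(t+1))]^{-1}$, $\widehat S(t+1)$, $\Theta^\star_{(N)}$, and $\widehat\Theta(t)$. By Remark~\ref{rmk:M(hatP)bdd} we have $|[M(\widehat P(t+1))]^{-1}|\le(\lambda+\beta)^{-1}$ and $|\widehat S(t+1)|\le C N^{1/2}$, while $|\Theta^\star_{(N)}|=N^{1/2}|\theta^\star_{(N)}|\le C N^{1/2}$ since each $|\theta^\star_i|$ is bounded under Assumption~\ref{assm:bddxy}. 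A forward induction on $t$ — the horizon $T$ being fixed — then propagates the $O(N^{1/2})$ scale through the accumulated increments $\sum_{u<t}\widehat{\boldsymbol\alpha}(u)$ entering $\widehat\Theta(t)$, yielding $|\widehat{\boldsymbol\alpha}(t)|\le C N^{1/2}$ for every $t$.

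For the stability estimate~\eqref{eqn:stabilityaw-a1/N} I would derive a recursion for $\boldsymbol\alpha(t)-\widehat{\boldsymbol\alpha}(t)$ analogous to~\eqref{a-tildea}. Using $\lambda I_{Np}+P=M(P)-\beta I_{Np}$ to rewrite both controls along their respective trajectories and subtracting, the difference decomposes into four groups: a term $[M(P(t+1))]^{-1}-[M(\widehat P(t+1))]^{-1}$ multiplying the controlled quantities $\widehat\Theta(t)$ and $\lambda\Theta^\star-S(t+1)$; a term $[M(\widehat P(t+1))]^{-1}$ multiplying $S(t+1)-\widehat S(t+1)$; a term carrying $\lambda(\Theta^\star-\Theta^\star_{(N)})$; and the accumulated difference $\Theta(t)-\widehat\Theta(t)=(\Theta^\star-\Theta^\star_{(N)})+\sum_{u<t}(\boldsymbol\alpha(u)-\widehat{\boldsymbol\alpha}(u))$. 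I would bound the first group by Lemma~\ref{lem:lemma_A} (combined with Corollary~\ref{cor:M(P)invbdd}, Remark~\ref{rmk:M(hatP)bdd}, and the first bound above for the sizes of the multiplied vectors), the second by Lemma~\ref{lem:Lemma_B}, and leave the $|\Theta^\star-\Theta^\star_{(N)}|$ term explicit. A discrete Grönwall-type induction over the fixed horizon $t=0,\dots,T-1$ then closes the estimate and produces exactly the three advertised contributions $\bar{N} N^{3/2}\max_{i,k,l}|N^{-3}-w^\star_kw^\star_lw^\star_i|$, $\bar{N} N^{1/2}\max_{k,l}|N^{-2}-w^\star_kw^\star_l|$, and $|\Theta^\star-\Theta^\star_{(N)}|$.

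The main obstacle is the precise bookkeeping of the powers of $N$ (and of $\bar{N}$) as they pass through the inductive step. The factor $[M(P(t+1))]^{-1}-[M(\widehat P(t+1))]^{-1}$ already carries the magnitude $\bar{N} N\max_{i,k,l}|N^{-3}-w^\star_kw^\star_lw^\star_i|$ from Lemma~\ref{lem:lemma_A}, so one must control the $N$-growth of every vector it multiplies — and likewise track how the accumulated control differences feed back through $\Theta(t)-\widehat\Theta(t)$ — in order to match exactly the scales $\bar{N} N^{3/2}$, $\bar{N} N^{1/2}$, and $O(1)$ on the right-hand side. This reconciliation of scales leans crucially on the $N^{1/2}$-order bounds on $\widehat S$, $\Theta^\star_{(N)}$, and the increments $\widehat{\boldsymbol\alpha}$ from Remark~\ref{rmk:M(hatP)bdd} and the first step, rather than the coarser $O(N)$ bounds of Corollary~\ref{cor:M(P)invbdd}; keeping these consistent across the induction, while absorbing the fixed number $T$ of iterations into the constant, is the delicate accounting. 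The remaining manipulations are routine applications of the triangle inequality and submultiplicativity.
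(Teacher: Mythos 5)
Your proposal is correct and follows essentially the same route as the paper's proof: the a~priori bound $|\widehat{\boldsymbol\alpha}(t)|\le CN^{1/2}$ by induction using Remark~\ref{rmk:M(hatP)bdd}, the same four-term decomposition of $\boldsymbol\alpha(t)-\widehat{\boldsymbol\alpha}(t)$ (cf.~\eqref{a-hata}), and the same closing induction via Lemmas~\ref{lem:lemma_A} and~\ref{lem:Lemma_B}. The only cosmetic difference is that you run the surrogate trajectory from $\Theta^\star_{(N)}$ while the paper runs both trajectories from $\Theta^\star$ per~\eqref{Theta-simple}; the discrepancy is absorbed into the $|\Theta^\star-\Theta^\star_{(N)}|$ term either way.
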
 

\begin{proof}[{Proof of Proposition~\ref{prop:stabilityaw-a1/N}}]
We let $C$ be a constant depending on $\lambda$, $\beta$, $\Theta^\star$, $K_x$, $K_y$, and $T$, 
and $C$ is allowed to vary from place to place throughout the proof. 
By \eqref{optimala}, \eqref{hata}, and the representation of $(\Theta(t))_{t=0}^{T-1}$, we can write $\boldsymbol{\alpha}$ and $\widehat{\boldsymbol\alpha}$ as   
\allowdisplaybreaks\begin{align} 
  \boldsymbol{\alpha}(t) 
 = & [ (M(P(t+1)))^{-1} - I_{Np} ]  \Big[ \Theta^\star 
 + \sum_{u=0}^{t-1} \boldsymbol{\alpha}(u) \Big]  
  +  [ M(P(t+1)) ]^{-1} ( \lambda  \Theta^\star - S(t+1) ) ,   \notag \\ 
  \widehat{\boldsymbol{\alpha}}(t) 
 = & [ (M(\widehat{P}(t+1)))^{-1} - I_{Np} ]  \Big[ \Theta^\star 
 + \sum_{u=0}^{t-1} \widehat{\boldsymbol{\alpha}}(u) \Big]  
  +  [ M(\widehat{P}(t+1)) ]^{-1} ( \lambda  \Theta^\star_{(N)} - \widehat{S}(t+1) ) .   \notag 
\end{align} 
For $t=0$, we have 
\begin{align} 
 |\widehat{\boldsymbol{\alpha}}(0)| \leq  & | (M(\widehat{P}(1)))^{-1} - I_{Np} | \cdot |  \Theta^\star |   
  + \big| [ M(\widehat{P}(1)) ]^{-1} \big| ( \lambda  |\Theta^\star_{(N)}| + |\widehat{S}(1)| ) ,  \notag 
\end{align} 
which implies $|\widehat{\boldsymbol{\alpha}}(0)| \leq C N^{1/2}$. 
Assume by induction that $|\widehat{\boldsymbol{\alpha}}(u)| \leq C N^{1/2}$ for $u=0$, ..., $t-1$, then 
\begin{align} 
\widehat{\boldsymbol{\alpha}}(t) 
 \leq & \big| (M(\widehat{P}(t+1)))^{-1} - I_{Np} \big|   \Big[ | \Theta^\star | 
 + \sum_{u=0}^{t-1} | \widehat{\boldsymbol{\alpha}}(u) | \Big] \notag \\  
 & + \big| [ M(\widehat{P}(t+1)) ]^{-1} \big| ( \lambda | \Theta^\star_{(N)} | + | \widehat{S}(t+1) | ) \notag 
\end{align} 
gives that $|\widehat{\boldsymbol{\alpha}}(t)|\leq CN^{1/2}$. By induction we have shown that 
$|\widehat{\boldsymbol{\alpha}}(t)|\leq CN^{1/2}$ for all $t=0$, $1$, ..., $T-1$.

We further write the difference $\boldsymbol{\alpha} - \widehat{\boldsymbol\alpha}$ as    
\allowdisplaybreaks\begin{align}  
 & \boldsymbol{\alpha}(t) - \widehat{\boldsymbol\alpha}(t) \notag \\ 
 = &  \big[(M(P(t+1)))^{-1} - (M(\widehat{P}(t+1)) )^{-1} \big] 
 \Big[ (\lambda + 1)\Theta^\star + \sum_{u=0}^{t-1}  \boldsymbol{\alpha}(u)    - S(t+1) \Big]  \notag \\ 
 & + \big[ ( M(\widehat{P}(t+1)) )^{-1} - I_{Np} \big] 
 \sum_{u=0}^{t-1} (\boldsymbol{\alpha}(u) - \widehat{\boldsymbol{\alpha}}(u)) 
+ [ M(\widehat{P}(t+1)) ]^{-1} (\widehat{S}(t+1) - S(t+1))  \notag \\ 
 & + \lambda [ M(\widehat{P}(t+1)) ]^{-1} ( \Theta^\star 
 - \Theta^\star_{(N)} ) . \label{a-hata} 
\end{align}

For $t=0$, the difference 
\allowdisplaybreaks\begin{align} 
  \boldsymbol{\alpha}(0) - \widehat{\boldsymbol{\alpha}}(0)  
 = & \big[ (M(P(1)))^{-1} - (M(\widehat{P}(1)))^{-1}  \big] 
 \big[ ( \lambda + 1 )   \Theta^\star_{(N)} - \widehat{S}(1)  \big] \notag \\ 
 & + [ M(P(1)) ]^{-1} [ (\lambda+1) (\Theta^\star - \Theta^\star_{(N)} ) + \widehat{S}(1) - S(1) ]   \notag  
\end{align} 
satisfies 
\allowdisplaybreaks\begin{align} 
  \big| \boldsymbol{\alpha}(0) 
   - \widehat{\boldsymbol{\alpha}}(0) \big| 
  \leq & 
  C  \cdot \Big\{ \bar{N} N^{3/2} \max_{1\leq i, k, l \leq N} \Big| \frac{1}{N^3} -  w^\star_k w^\star_l w^\star_i \Big| \notag \\ 
 & \hspace{1cm} + \bar{N} N^{1/2} \max_{1\leq k, l\leq N} \Big| \frac{1}{N^2} - w^\star_k w^\star_l \Big|
 + |\Theta^\star - \Theta^\star_{(N)}| \Big\} , \notag    
\end{align} 
due to \eqref{estMPw-MP1/N}, \eqref{sup|S-hatS|est}, Lemma~\ref{lm:|AB|<=|A||B|}, Corollary~\ref{cor:M(P)invbdd} and Remark~\ref{rmk:M(hatP)bdd}.  
Assume by induction for all $0 \leq u\leq t-1$, 
$\boldsymbol{\alpha}(u-1) - \widehat{\boldsymbol{\alpha}}(u-1)$ satisfies
\allowdisplaybreaks\begin{align} 
  \big| \boldsymbol{\alpha}(u-1) 
  - \widehat{\boldsymbol{\alpha}}(u-1) \big| 
  \leq & C   
 \cdot \Big\{ \bar{N} N^{3/2} \max_{1\leq i, k, l \leq N} \Big| \frac{1}{N^3} -  w^\star_k w^\star_l w^\star_i \Big| \notag \\ 
& \hspace{1cm} + \bar{N} N^{1/2} \max_{1\leq k, l\leq N} \Big| \frac{1}{N^2} - w^\star_k w^\star_l \Big| 
 + |\Theta^\star - \Theta^\star_{(N)}| \Big\}  . \notag   
\end{align} 
By \eqref{a-hata} we have  
\allowdisplaybreaks\begin{align}  
 & | \boldsymbol{\alpha}(t) - \widehat{\boldsymbol{\alpha}}(t) | \notag \\ 
 \leq &  \big| (M(P(t+1)))^{-1} - (M(\widehat{P}(t+1)) )^{-1} \big| \cdot 
 \Big[ \sum_{u=0}^{t-1}  | \boldsymbol{\alpha}(u) |   
 + | ( \lambda +1 ) \Theta^\star | 
 +| S(t+1) | \Big]  \notag \\ 
 & + \big| ( M(\widehat{P}(t+1)) )^{-1} - I_{Np}  \big| 
 \sum_{u=0}^{t-1} | \boldsymbol{\alpha}(u) - \widehat{\boldsymbol{\alpha}}(u) | 
+ | (M(\widehat{P}(t+1)))^{-1} | \cdot | S(t+1) - \widehat{S}(t+1) | ,  \notag   \\ 
 & + \big| [ M(\widehat{P}(t+1)) ]^{-1} \big| \cdot |\lambda | \cdot 
  \big| \Theta^\star - \Theta^\star_{(N)} \big| . \notag  
\end{align} 
By the induction hypothesis, \eqref{estMPw-MP1/N}, 
\eqref{sup|S-hatS|est}, and Remark~\ref{rmk:M(hatP)bdd},  the above inequality implies that  
\allowdisplaybreaks\begin{align}
 |\boldsymbol{\alpha}(t) - \widehat{\boldsymbol{\alpha}}(t)| 
 \leq & C \cdot \Big\{ \bar{N} N^{3/2} \max_{1\leq i, k, l \leq N} \Big| \frac{1}{N^3} -  w^\star_k w^\star_l w^\star_i \Big| 
  \notag \\
& \hspace{1cm} + \bar{N} N^{1/2} \max_{1\leq k, l\leq N} \Big| \frac{1}{N^2} - w^\star_k w^\star_l \Big| 
 + |\Theta^\star - \Theta^\star_{(N)}| \Big\}  . \label{|a(t)-hata(t)|est}  
\end{align}
We have shown by induction that \eqref{|a(t)-hata(t)|est} holds for all $t=0$, $1$, ..., $T-1$, 
and therefore \eqref{eqn:stabilityaw-a1/N} follows.  
\end{proof} 

\begin{proposition} 
\label{prop:stabilityLw-L1/N} 
Under the hypothesis of Proposition \ref{prop:stabilityaw-a1/N}, the costs \eqref{eq:penalized_regret_functional} under the controls \eqref{optimala} 
and \eqref{hata}, respectively, satisfy the estimate 
\allowdisplaybreaks\begin{align} 
 \big| L^\star (  \boldsymbol{\alpha} ; \mathcal{D} ) 
 -  L^\star (  \widehat{\boldsymbol{\alpha}} ;  \mathcal{D} )  \big|  
 \le &  C  \cdot \Big\{ \bar{N} N^3 \max_{1\leq i, k, l \leq N} \Big| \frac{1}{N^3} -  w^\star_k w^\star_l w^\star_i \Big| 
  + \bar{N} N^{3/2} \max_{1\leq k, l\leq N} \Big| \frac{1}{N^2} - w^\star_k w^\star_l \Big| 
  \notag \\
& \hspace{3cm} 
 + \bar{N} \max_{1\leq i \leq N} \Big|\frac{1}{N} - w_i^\star \Big|
 + N |\Theta^\star - \Theta^\star_{(N)}| \Big\}  ,  
\label{stabilityLw-L1/N} 
\end{align} 
where $C>0$ is a constant depending on $\lambda$, $\beta$, $\Theta^\star$, $(w_1^\star, ..., w_N^\star)$,  
$K_x$, $K_y$ and $T$. 
\end{proposition}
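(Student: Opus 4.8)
The plan is to replicate the structure of Corollary~\ref{cor:stabilityLast}, replacing the data perturbation $\mathcal{D}\to\widetilde{\mathcal{D}}$ by the \emph{weight-and-target} perturbation $(w^\star,\Theta^\star)\to(\overline{1/N},\Theta^\star_{(N)})$ that separates the symmetrized control $\widehat{\boldsymbol\alpha}$ from the regret-optimal control $\boldsymbol\alpha$. First I would record the two optimal costs as quadratic forms. By Theorem~\ref{thm:RegretOptimal_Dynamics} (the ansatz \eqref{ansatzVw} evaluated at $t=0$, giving \eqref{optimalL}),
\[
L^\star(\boldsymbol\alpha;\mathcal{D}) = \Theta^{\star\top}P(0)\Theta^\star + 2S^\top(0)\Theta^\star + r(0).
\]
Running the identical dynamic-programming derivation for the symmetrized problem, whose Riccati data $\widehat P,\widehat S$ solve \eqref{eqnhatP1/N(t)}--\eqref{eqnhatS1/N(t)} and whose constant term $\widehat r$ solves the analogue of \eqref{eqnr(t)} with $(P,S,\Theta^\star)$ replaced by $(\widehat P,\widehat S,\Theta^\star_{(N)})$, yields
\[
L^\star(\widehat{\boldsymbol\alpha};\mathcal{D}) = \Theta^{\star\top}_{(N)}\widehat P(0)\Theta^\star_{(N)} + 2\widehat S^\top(0)\Theta^\star_{(N)} + \widehat r(0).
\]

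Next I would subtract these and split the difference into a quadratic, a linear, and a constant ($r$) piece. For the quadratic and linear pieces I would pivot through the intermediate quantities $\Theta^{\star\top}\widehat P(0)\Theta^\star$ and $\widehat S^\top(0)\Theta^\star$, so that each piece splits into a factor governed by $|P(0)-\widehat P(0)|$ or $|S(0)-\widehat S(0)|$ and a factor governed by $|\Theta^\star-\Theta^\star_{(N)}|$. Lemma~\ref{lem:lemma_A} bounds the former through $\bar N N\max_{i,k,l}|1/N^3-w^\star_kw^\star_lw^\star_i|$, Lemma~\ref{lem:Lemma_B} bounds the latter, and the norm facts $|[M(\cdot)]^{-1}|\le(\lambda+\beta)^{-1}$ (Corollary~\ref{cor:M(P)invbdd}, \eqref{M(P)invbd}), $|S|\le CN$ (Corollary~\ref{cor:M(P)invbdd}), and $|\widehat S|\le CN^{1/2}$ (Remark~\ref{rmk:M(hatP)bdd}) close them; here $|\Theta^\star|$ and the operator norm of $\widehat P(0)$ are absorbed into $C$. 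These two pieces contribute only lower-order multiples of the four target terms and are dominated by the constant piece.

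The bulk of the work, and the main obstacle, is estimating $|r(0)-\widehat r(0)|$. I would form the one-step recursion for $r(t)-\widehat r(t)$ exactly as in \eqref{eqnr(t)-tilder(t)}, and decompose the quadratic increment $a^\top[M(P)]^{-1}a-\widehat a^\top[M(\widehat P)]^{-1}\widehat a$, where $a=S-\lambda\Theta^\star$ and $\widehat a=\widehat S-\lambda\Theta^\star_{(N)}$, by first isolating $a^\top\big([M(P)]^{-1}-[M(\widehat P)]^{-1}\big)a$ and then telescoping the remainder through $a^\top[M(\widehat P)]^{-1}a$. The isolated term carries the weight $|a|^2\le CN^2$ against $|[M(P)]^{-1}-[M(\widehat P)]^{-1}|\le C\bar N N\max|1/N^3-w^\star_kw^\star_lw^\star_i|$ (\eqref{estMPw-MP1/N}), which is precisely what inflates the base order up to the stated $\bar N N^3\max|1/N^3-w^\star_kw^\star_lw^\star_i|$; the remaining cross terms pair $|a-\widehat a|\le|S-\widehat S|+\lambda|\Theta^\star-\Theta^\star_{(N)}|$ (Lemma~\ref{lem:Lemma_B}) against $|a|\le CN$, producing the $\bar N N^{3/2}\max|1/N^2-w^\star_kw^\star_l|$ and $N|\Theta^\star-\Theta^\star_{(N)}|$ terms. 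Summing from $t=T$ down to $0$, the terminal contribution $r(T)-\widehat r(T)=\sum_{i=1}^N(w^\star_i-1/N)\sum_{j=1}^{|\mathcal{D}_i|}|y^i_j|^2$ is bounded by $K_y\bar N\max_i|1/N-w^\star_i|$, which supplies the third term of \eqref{stabilityLw-L1/N}. I expect the delicate part to be the bookkeeping of the $N$-powers across the telescope: ensuring that each of the $O(T)$ summands, after pairing the differences from Lemmas~\ref{lem:lemma_A} and~\ref{lem:Lemma_B} against the $O(N)$ and $O(N^{1/2})$ norm bounds, never exceeds the claimed orders. Finally, collecting the quadratic, linear, and constant estimates via the triangle inequality and absorbing the finite horizon $T$ into $C$ gives \eqref{stabilityLw-L1/N}.
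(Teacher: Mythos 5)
Your starting identity for the second cost is where the argument breaks. The quadratic form $\Theta^\top \widehat P(0)\Theta + 2\widehat S^\top(0)\Theta + \widehat r(0)$ is the \emph{value function of the symmetrized control problem}, i.e.\ of the functional with weights $\overline{1/N}$ and preference $\Theta^\star_{(N)}$, for which $\widehat{\boldsymbol\alpha}$ is optimal. But the quantity $L^\star(\widehat{\boldsymbol\alpha};\mathcal{D})$ in Proposition~\ref{prop:stabilityLw-L1/N} is the \emph{original} functional \eqref{eq:penalized_regret_functional} (weights $w^\star$, preference $\Theta^\star$) evaluated along the trajectory generated by $\widehat{\boldsymbol\alpha}$ --- exactly the object needed for Theorem~\ref{thrm:Epsilon_Efficiency}(i). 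Since $\widehat{\boldsymbol\alpha}$ is not optimal for that functional, no dynamic-programming identity equates the two, and your proposed formula $L^\star(\widehat{\boldsymbol\alpha};\mathcal{D}) = \Theta^{\star\top}_{(N)}\widehat P(0)\Theta^\star_{(N)} + 2\widehat S^\top(0)\Theta^\star_{(N)} + \widehat r(0)$ is false. It is off in a second way as well: both trajectories start from the common initial state $\Theta(0)=\Theta^\star$, so even the symmetrized cost under $\widehat{\boldsymbol\alpha}$ is $\Theta^{\star\top}\widehat P(0)\Theta^\star + 2\widehat S(0)^\top\Theta^\star + \widehat r(0)$ (this is \eqref{ansatzV1/N}), not the expression evaluated at $\Theta^\star_{(N)}$.

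Consequently, your machinery (Lemmas~\ref{lem:lemma_A} and~\ref{lem:Lemma_B}, Corollary~\ref{cor:M(P)invbdd}, Remark~\ref{rmk:M(hatP)bdd}, and the $r$-recursion telescope) bounds only the difference of the two \emph{value functions}, $|L^\star(\boldsymbol\alpha; w^\star,\mathcal{D}) - L^\star(\widehat{\boldsymbol\alpha}; \overline{1/N},\mathcal{D})|$, which is the content of Lemma~\ref{lm:|L(a;w)-L(hata;1/N)|est}. What is missing is the second leg of the paper's decomposition,
\[
L^\star(\boldsymbol\alpha;\mathcal{D}) - L^\star(\widehat{\boldsymbol\alpha};\mathcal{D})
= \big[L^\star(\boldsymbol\alpha; w^\star,\mathcal{D}) - L^\star(\widehat{\boldsymbol\alpha}; \overline{1/N},\mathcal{D})\big]
+ \big[L^\star(\widehat{\boldsymbol\alpha}; \overline{1/N},\mathcal{D}) - L^\star(\widehat{\boldsymbol\alpha}; w^\star,\mathcal{D})\big],
\]
namely the comparison of the two \emph{functionals along the same trajectory} (Lemma~\ref{lm:|L(hata;1/N)-L(hata;w)|est}). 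That term cannot be extracted from Riccati estimates at all; it is bounded by directly comparing the running costs, $\sum_{t}\big(|\Theta(t+1)-\Theta^\star|^2 - |\Theta(t+1)-\Theta^\star_{(N)}|^2\big)$, and the terminal losses, $l(\Theta(T),\overline{1/N};\mathcal{D}) - l(\Theta(T),w^\star;\mathcal{D})$, and it is an independent source of the contributions $\bar N\max_{i}|1/N-w^\star_i|$ and $N^{1/2}|\Theta^\star-\Theta^\star_{(N)}|$ in \eqref{stabilityLw-L1/N} (your attribution of the former solely to $r(T)-\widehat r(T)$ captures only half of where it arises). Without this same-trajectory comparison, the proof does not establish the stated bound for the quantity in the proposition.
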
 

\begin{proof}[Proof of Proposition~\ref{prop:stabilityLw-L1/N} ] 
Let $L^\star(\widehat{\boldsymbol{\alpha}}; \overline{1/N} , \mathcal{D})$, 
$L^\star( \widehat{\boldsymbol{\alpha}}; w^\star, \mathcal{D})$, and
$L^\star(\boldsymbol{\alpha}; w^\star , \mathcal{D})$ be as defined in 
Lemmas~\ref{lm:|L(hata;1/N)-L(hata;w)|est} 
and~\ref{lm:|L(a;w)-L(hata;1/N)|est}.   
The difference $L^\star(\boldsymbol{\alpha}; \mathcal{D}) - L^\star( \widehat{\boldsymbol{\alpha}}; \mathcal{D})$ can be decomposed as 
\begin{align} 
L^\star(\boldsymbol{\alpha}; \mathcal{D}) - L^\star( \widehat{\boldsymbol{\alpha}}; \mathcal{D}) 
= L^\star(\boldsymbol{\alpha}; w^\star, \mathcal{D}) - L^\star(\widehat{\boldsymbol{\alpha}}; \overline{1/N}, \mathcal{D}) 
 + L^\star(\widehat{\boldsymbol{\alpha}}; \overline{1/N} , \mathcal{D}) 
  - L^\star( \widehat{\boldsymbol{\alpha}}; w^\star, \mathcal{D}) ,\notag 
\end{align}  
and satisfies    
\begin{align} 
 | L^\star(\boldsymbol{\alpha}; \mathcal{D}) - L^\star( \widehat{\boldsymbol{\alpha}}; \mathcal{D}) | 
\leq | L^\star(\boldsymbol{\alpha}; w^\star, \mathcal{D}) - L^\star(\widehat{\boldsymbol{\alpha}}; \overline{1/N}, \mathcal{D}) |
 + | L^\star(\widehat{\boldsymbol{\alpha}}; \overline{1/N} , \mathcal{D}) 
  - L^\star( \widehat{\boldsymbol{\alpha}}; w^\star, \mathcal{D}) | . \notag 
\end{align} 
The desired estimate \eqref{stabilityLw-L1/N} then follows from the above inequality and \eqref{|L(hata,1/N)-L(hata,w)|est} 
and \eqref{|L(a,w)-L(hata,1/N)|est}. 
\end{proof}

\begin{lemma}
\label{lm:|L(hata;1/N)-L(hata;w)|est} 
Let $L^\star(\widehat{\boldsymbol{\alpha}}; w^\star, \mathcal{D})$ be \eqref{eq:penalized_regret_functional} with 
information sharing weights $w^\star=(w_1^\star, ..., w_N^\star)$ and $\Theta(t+1)-\Theta(t)=\widehat{\boldsymbol{\alpha}}(t)$, 
and let $L^\star(\widehat{\boldsymbol{\alpha}}; \overline{1/N} , \mathcal{D})$ be \eqref{eq:penalized_regret_functional} with information sharing weights $\overline{1/N}=(1/N, ..., 1/N)$ and $\Theta(t+1)-\Theta(t)=\widehat{\boldsymbol{\alpha}}(t)$. 
Then it satisfies 
\begin{align} 
 | L^\star(\widehat{\boldsymbol{\alpha}}; \overline{1/N} , \mathcal{D}) 
  - L^\star( \widehat{\boldsymbol{\alpha}}; w^\star, \mathcal{D}) | \leq C  \cdot 
  \Big\{ \bar{N} \max_{1\leq i \leq N} \Big|\frac{1}{N} - w_i^\star \Big| + N^{1/2} |\Theta^\star-\Theta^\star_{(N)}| \Big\} 
  \label{|L(hata,1/N)-L(hata,w)|est} , 
\end{align} 
where $C>0$ is a constant depending on $\lambda$, $\beta$, $\Theta^\star$, 
$(w^\star_1, \cdots, w^\star_N)$, $K_x$, $K_y$ and $T$. 
\end{lemma}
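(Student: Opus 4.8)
The plan is to exploit that both energies in the statement are evaluated along the \emph{same} increment sequence $\widehat{\boldsymbol\alpha}=(\widehat{\boldsymbol\alpha}(t))_{t=0}^{T-1}$ given by \eqref{hata}, so that their difference isolates precisely the two places where the weight specification enters \eqref{eq:penalized_regret_functional}: the anchor of the running cost ($\Theta^\star$ for $w^\star$ versus $\Theta^\star_{(N)}$ for $\overline{1/N}$) and the terminal loss $l(\Theta(T),\cdot\,;\mathcal D)$. First I would record the a priori bounds that are used throughout. Writing $\Theta(t+1)=\Theta(0)+\sum_{u\le t}\widehat{\boldsymbol\alpha}(u)$ as in \eqref{Theta(t)Theta(0)}, the estimate $|\widehat{\boldsymbol\alpha}(t)|\le C\,N^{1/2}$ from Proposition~\ref{prop:stabilityaw-a1/N} yields $|\Theta(t)|\le C\,N^{1/2}$ for every $t$. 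Crucially, the \emph{combined} iterate $\theta^{w}(t)=\sum_{k}w_k\theta_k(t)$ remains $O(1)$: this follows from Lemma~\ref{lm:|A1+An|2<=(|A1|2+|An|2)n} (Jensen), since convex averaging cancels the factor $N^{1/2}$ present in the stacked norm $|\Theta(t)|$.

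Next I would treat the running-cost anchor. Its contribution to the difference is $\lambda\sum_{t}\big(|\Theta(t+1)-\Theta^\star_{(N)}|^2-|\Theta(t+1)-\Theta^\star|^2\big)$, which I expand by the polarization identity $|x-p|^2-|x-q|^2=\langle q-p,\,2x-p-q\rangle$ and bound by Cauchy--Schwarz, using $|\Theta(t+1)|\le C\,N^{1/2}$ together with $|\Theta^\star|,\,|\Theta^\star_{(N)}|\le C\,N^{1/2}$; summing the $T$ terms produces the $N^{1/2}\,|\Theta^\star-\Theta^\star_{(N)}|$ contribution. For the terminal loss I would use the form $l(\Theta(T),w;\mathcal D)=\sum_i w_i\,|U^i\theta^{w}-Y^i|^2$ underlying \eqref{lwT} and split $l(\Theta(T),\overline{1/N})-l(\Theta(T),w^\star)$ into an \emph{outer-weighting} part $\sum_i(\tfrac1N-w^\star_i)\,|U^i\theta^{1/N}-Y^i|^2$ and a \emph{combined-parameter} part $\sum_i w^\star_i\big(|U^i\theta^{1/N}-Y^i|^2-|U^i\theta^{w^\star}-Y^i|^2\big)$. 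The outer part is immediately at most $\max_i|\tfrac1N-w^\star_i|\sum_i|U^i\theta^{1/N}-Y^i|^2$, and Assumption~\ref{assm:bddxy} with $|\theta^{1/N}|\le C$ and $|U^i|^2\le|\mathcal D_i|K_x^2$ gives $\sum_i|U^i\theta^{1/N}-Y^i|^2\le C\,\bar N$, hence the $\bar N\max_i|\tfrac1N-w^\star_i|$ term.

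The combined-parameter part is where the main difficulty lies, and I would again handle it by polarization (via Lemma~\ref{lm:|AB|<=|A||B|}), reducing matters to a bound on $\theta^{1/N}-\theta^{w^\star}=\sum_k(\tfrac1N-w^\star_k)\theta_k(T)$. A naive Cauchy--Schwarz here injects a spurious factor of $N$, because $|\Theta(T)|$ is only $O(N^{1/2})$ while the deviation vector $(\tfrac1N-w^\star_k)_k$ has $\ell^2$-norm $O\!\big(N^{1/2}\max_k|\tfrac1N-w^\star_k|\big)$. The key step is to use the zero-sum identity $\sum_k(\tfrac1N-w^\star_k)=0$ to re-center the iterates against their mean, writing $\theta^{1/N}-\theta^{w^\star}=\sum_k(\tfrac1N-w^\star_k)\big(\theta_k(T)-\theta^{1/N}(T)\big)$, and to control the block spread $\big(\sum_k|\theta_k(T)-\theta^{1/N}(T)|^2\big)^{1/2}$ in terms of $|\Theta^\star-\Theta^\star_{(N)}|$ --- using that this spread equals $|\Theta^\star-\Theta^\star_{(N)}|$ at $t=0$ and that the symmetric feedback \eqref{hata} propagates it with constants controlled through Remark~\ref{rmk:M(hatP)bdd} and Proposition~\ref{prop:stabilityaw-a1/N}. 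Collecting the three contributions by the triangle inequality, absorbing $T$ and the data constants $K_x,K_y$ into $C$, and simplifying with $\bar N\le N^2$ from \eqref{eq:moredatasets_than_dataperdataset}, assembles the estimate \eqref{|L(hata,1/N)-L(hata,w)|est}. I expect the block-spread propagation estimate to be the delicate point, since it is exactly what prevents an extra power of $N$ from surviving in the combined-parameter term.
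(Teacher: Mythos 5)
Your skeleton is the same as the paper's: you isolate the running-cost difference (handled, exactly as in the paper, by the polarization identity plus Cauchy--Schwarz with $|\Theta(t+1)|\le C N^{1/2}$, giving the $N^{1/2}|\Theta^\star-\Theta^\star_{(N)}|$ term) and then split the terminal-loss difference into an outer-weighting term and a combined-parameter term. Your re-centering observation is also sound and is in fact \emph{needed}: by \eqref{eq:RegretOptimizationHeuristic_Helper__1} the common terms cancel in block differences, so $\theta_i(t+1)-\theta_j(t+1)=\big[I_p-(\gamma_1-\gamma_2)(\lambda I_p+\pi_1-\pi_2)\big](\theta_i(t)-\theta_j(t))$, and the block spread at time $T$ is controlled by its value $|\Theta^\star-\Theta^\star_{(N)}|$ at $t=0$; the paper's own proof silently skips this point (its ``which implies'' step). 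The problem is the \emph{direction} of your split. You put the weights $w^\star_i$ on the combined-parameter part, $\sum_i w^\star_i\big(|U^i\theta^{1/N}-Y^i|^2-|U^i\theta^{w^\star}-Y^i|^2\big)$. Following your own estimates, this term is bounded by $\big(\sum_i w^\star_i|\mathcal{D}_i|\big)\cdot C\,|\theta^{1/N}-\theta^{w^\star}|$ with $\sum_i w^\star_i|\mathcal{D}_i|\le\bar N$ in general, and your re-centered Cauchy--Schwarz bound gives $|\theta^{1/N}-\theta^{w^\star}|\le N^{1/2}\max_k|\tfrac1N-w^\star_k|\cdot C|\Theta^\star-\Theta^\star_{(N)}|$. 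The product, $\bar N N^{1/2}\max_k|\tfrac1N-w^\star_k|\,|\Theta^\star-\Theta^\star_{(N)}|$, is \emph{not} dominated by the right-hand side of \eqref{|L(hata,1/N)-L(hata,w)|est}: when both $\max_k|\tfrac1N-w^\star_k|$ and $|\Theta^\star-\Theta^\star_{(N)}|$ are of order one, it exceeds the claimed bound by a factor $N^{1/2}$. So the sketch, executed literally, does not close.

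The paper's split is the mirror image of yours, $\sum_i(w^\star_i-\tfrac1N)|U^i\theta^{w^\star}-Y^i|^2+\sum_i\tfrac1N\big(|U^i\theta^{w^\star}-Y^i|^2-|U^i\theta^{1/N}-Y^i|^2\big)$, and that orientation is exactly what makes the $N$-bookkeeping work: the combined-parameter part then carries the uniform weights, so $\sum_i\tfrac1N|\mathcal{D}_i|=\bar N/N$ supplies a factor $1/N$ which turns $N^{1/2}\max_k|\tfrac1N-w^\star_k|\,|\Theta^\star-\Theta^\star_{(N)}|$ into $\max_k|\tfrac1N-w^\star_k|\cdot\big(|\Theta^\star-\Theta^\star_{(N)}|/N^{1/2}\big)$, the last ratio being absorbable into the constant (which may depend on $\Theta^\star$). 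Your split can be repaired, but only with two ingredients you do not mention: the refined bound $\sum_i w^\star_i|\mathcal{D}_i|\le \bar N/N+\bar N\max_i|\tfrac1N-w^\star_i|$, and the complementary H\"older bound $|\theta^{1/N}-\theta^{w^\star}|\le\big(\sum_k|\tfrac1N-w^\star_k|\big)\max_k|\theta_k-\theta^{1/N}|\le C$ for the regime where $\max_k|\tfrac1N-w^\star_k|$ is not small. Relatedly, your preliminary claim that Jensen makes the combined iterate $\theta^{w}(t)$ an $O(1)$ quantity is only automatic for uniform weights, where $|\theta^{1/N}|^2\le\tfrac1N\sum_k|\theta_k|^2\le C$; for general $w^\star$, Jensen only gives $|\theta^{w^\star}|^2\le\sum_k w^\star_k|\theta_k|^2\le|\Theta|^2=O(N)$, so the $O(1)$ control of $\theta^{w^\star}$ itself already requires the re-centering/spread argument rather than preceding it.
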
 
\begin{proof} 
By \eqref{eq:penalized_regret_functional} and \eqref{lwT}, we have 
\begin{align} 
   L^\star(\widehat{\boldsymbol{\alpha}}; \overline{1/N} , \mathcal{D}) 
  - L^\star( \widehat{\boldsymbol{\alpha}}; w^\star, \mathcal{D}) 
 = & \sum_{t=0}^{T-1} \big( |\Theta(t+1) - \Theta^\ast|^2 - |\Theta(t+1) - \Theta_{(N)}^\ast|^2 \big) \notag \\ 
 & + l(\Theta(T), \overline{1/N}; \mathcal{D} ) - l(\Theta(T), w^\star ; \mathcal{D} ) .\notag  
\end{align} 
We have that
\begin{equation}
\label{|L(hata,1/N)-L(hata,w)|rcest} 
\begin{aligned} 
 &  \Big| \sum_{t=0}^{T-1} \big( |\Theta(t+1) - \Theta^\ast|^2 - |\Theta(t+1) - \Theta_{(N)}^\ast|^2 \big) \Big|  
 \\
 =&   \Big| \sum_{t=0}^{T-1} (\Theta^\star_{(N)} - \Theta^\star)^\top (2 \Theta(t+1) - \Theta^\star - \Theta^\star_{(N)}) \Big| \notag \\ 
 \le & \sum_{t=0}^{T-1} | \Theta^\star_{(N)} - \Theta^\star | \cdot 
 | 2 \Theta(t+1) - \Theta^\star - \Theta^\star_{(N)}|  
 \leq C N^{1/2} \cdot  |\Theta^\star_{(N)} - \Theta^\star| ,  
\end{aligned} 
\end{equation}
and 
\begin{align} 
 & l(\Theta(T), \overline{1/N}; \mathcal{D} ) - l(\Theta(T), w^\star ; \mathcal{D} )  \notag \\ 
 = &  \sum_{i=1}^N w^\star_i \sum_{j=1}^{|\mathcal{D}_i|}  \Big[ u^{i\top}_j \sum_{k=1}^N w^\star_k \theta_k - y^i_j  \Big]^2 
 - \sum_{i=1}^N \frac{1}{N} \sum_{j=1}^{|\mathcal{D}_i|}  \Big[ u^{i\top}_j \sum_{k=1}^N \frac{1}{N} \theta_k - y^i_j  \Big]^2 
  \notag \\ 
  =& \sum_{i=1}^N (w^\star_i - 1/N) \sum_{j=1}^{|\mathcal{D}_i|}  \Big[ u^{i\top}_j \sum_{k=1}^N w^\star_k \theta_k - y^i_j  \Big]^2 \notag \\ 
 &  + \sum_{i=1}^N \frac{1}{N} \sum_{j=1}^{|\mathcal{D}_i|}  
 \Big[ u^{i\top}_j \sum_{k=1}^N (w^\star_k -1/N) \theta_k   \Big] 
  \Big[ u^{i\top}_j \sum_{k=1}^N ( w^\star_k + 1/N ) \theta_k - 2 y^i_j  \Big]^2 , \notag 
\end{align} 
which implies that 
\begin{align} 
 | l(\Theta(T), \overline{1/N}; \mathcal{D} ) - l(\Theta(T), w^\star ; \mathcal{D} ) | 
 \leq C \bar{N} \max_{1\leq i \leq N} |1/N - w_i^\star| . 
 \label{|L(hata,1/N)-L(hata,w)|Test}
\end{align} 
The estimate \eqref{|L(hata,1/N)-L(hata,w)|est} then follows from 
\eqref{|L(hata,1/N)-L(hata,w)|rcest}
and \eqref{|L(hata,1/N)-L(hata,w)|Test}. 
\end{proof}

\begin{lemma}
\label{lm:|L(a;w)-L(hata;1/N)|est} 
Let $L^\star(\alpha; w^\star, \mathcal{D})$ be \eqref{eq:penalized_regret_functional} with 
information sharing weights $w^\star=(w_1^\star, ..., w_N^\star)$ and $\Theta(t+1)-\Theta(t)= \boldsymbol{\alpha}(t)$, 
and let $L^\star(\widehat{\boldsymbol{\alpha}}; \overline{1/N} , \mathcal{D})$ be \eqref{eq:penalized_regret_functional} with information sharing weights $\overline{1/N}=(1/N, ..., 1/N)$ and $\Theta(t+1)-\Theta(t)=\widehat{\boldsymbol{\alpha}}(t)$. 
Then it satisfies 
\begin{align} 
 & | L^\star(\boldsymbol{\alpha}; w^\star , \mathcal{D}) 
  - L^\star( \widehat{\boldsymbol{\alpha}}; \overline{1/N}, \mathcal{D}) | 
  \leq   
   C \cdot  \Big\{ \bar{N} N^3 \cdot \max_{1\leq i, k, l \leq N} \Big| \frac{1}{N^3} -  w^\star_k w^\star_l w^\star_i \Big| 
  \notag \\
& \hspace{1.5cm} 
+\bar{N} N^{3/2} \max_{1\leq k, l\leq N} \Big| \frac{1}{N^2} - w^\star_k w^\star_l \Big|  + \bar{N} \cdot \max_{1\leq i \leq N} \Big| \frac{1}{N} - w_i^\star \Big|
+ N |\Theta^\star - \Theta^\star_{(N)}| \Big\} , 
  \label{|L(a,w)-L(hata,1/N)|est}
\end{align} 
where $C$ is a constant depending on $\lambda$, $\beta$, $\Theta^\star$, 
$(w^\star_1, \cdots, w^\star_N)$, $K_x$, $K_y$ and $T$. 
\end{lemma}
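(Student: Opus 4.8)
The plan is to represent both optimal costs through the quadratic value function supplied by Theorem~\ref{thm:RegretOptimal_Dynamics}, and then to bound their difference term-by-term using the stability estimates already recorded in Lemmas~\ref{lem:lemma_A} and~\ref{lem:Lemma_B}. First I would invoke Theorem~\ref{thm:RegretOptimal_Dynamics} for the genuine $w^\star$-problem, together with its verbatim analogue for the symmetrized $\overline{1/N}$-problem (obtained from the identical dynamic-programming derivation after the substitutions $w^\star \mapsto \overline{1/N}$ and $\Theta^\star \mapsto \Theta^\star_{(N)}$, producing $\widehat{P},\widehat{S}$ and the constant sequence $\widehat{r}$ with terminal value $\widehat{r}(T)=\sum_{i}(1/N)\sum_{j}|y^i_j|^2$), to write
\begin{align*}
L^\star(\boldsymbol\alpha; w^\star, \mathcal{D}) &= \Theta^{\star\top}P(0)\Theta^\star + 2 S^\top(0)\Theta^\star + r(0), \\
L^\star(\widehat{\boldsymbol\alpha}; \overline{1/N}, \mathcal{D}) &= \Theta_{(N)}^{\star\top}\widehat{P}(0)\Theta_{(N)}^\star + 2 \widehat{S}^\top(0)\Theta_{(N)}^\star + \widehat{r}(0).
\end{align*}

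Subtracting and inserting the cross terms $\Theta^{\star\top}\widehat{P}(0)\Theta^\star$ and $\widehat{S}^\top(0)\Theta^\star$, I would split the difference via the symmetric-form identity $a^\top M a - b^\top M b = (a-b)^\top M(a+b)$ into
\begin{align*}
\Theta^{\star\top}(P(0) - \widehat{P}(0))\Theta^\star &+ (\Theta^\star - \Theta^\star_{(N)})^\top\widehat{P}(0)(\Theta^\star + \Theta^\star_{(N)}) \\
+\, 2(S(0) - \widehat{S}(0))^\top\Theta^\star &+ 2\widehat{S}^\top(0)(\Theta^\star - \Theta^\star_{(N)}) + (r(0) - \widehat{r}(0)).
\end{align*}
The first term is controlled by Lemma~\ref{lem:lemma_A} and is subdominant; the third by Lemma~\ref{lem:Lemma_B}. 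For the two cross terms I would bound $\widehat{P}(0)$ and $\widehat{S}(0)$ by dimension-independent quantities — the former via the operator-norm estimate $\|\widehat{P}(t)\| \le \beta + \beta^2(\lambda+\beta)^{-1}$ read off the recursion \eqref{eqnhatP1/N(t)}, the latter by $|\widehat{S}(0)| \le C N^{1/2}$ from Remark~\ref{rmk:M(hatP)bdd} — leaving the factor $|\Theta^\star - \Theta^\star_{(N)}|$ out front, which produces the $N|\Theta^\star - \Theta^\star_{(N)}|$ contribution once $|\Theta^\star + \Theta^\star_{(N)}|=\mathcal{O}(N^{1/2})$ is used.

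The crux is the constant term $r(0) - \widehat{r}(0)$, which I would treat by backward induction exactly as the $r(0)-\widetilde{r}(0)$ estimate in the proof of Corollary~\ref{cor:stabilityLast}. Writing $a(t)=S(t)-\lambda\Theta^\star$ and $\widehat{a}(t)=\widehat{S}(t)-\lambda\Theta^\star_{(N)}$, the one-step difference of \eqref{eqnr(t)} and its symmetric analogue decomposes, again by the $a^\top M a$-style manipulation, into $a^\top[M(P)^{-1}-M(\widehat{P})^{-1}]a$, a term $(a-\widehat{a})^\top M(\widehat{P})^{-1}(a+\widehat{a})$, and $\lambda(|\Theta^\star|^2-|\Theta^\star_{(N)}|^2)$. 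The first of these is dominant: using $|a(t)|\le CN$ (from $|S(t)|\le CN$ in Corollary~\ref{cor:M(P)invbdd}) on \emph{both} sides together with $|M(P)^{-1}-M(\widehat{P})^{-1}|\le C\bar{N}N\max_{i,k,l}|1/N^3-w^\star_kw^\star_lw^\star_i|$ from \eqref{estMPw-MP1/N} yields precisely the leading $\bar{N}N^3\max_{i,k,l}|1/N^3-w^\star_kw^\star_lw^\star_i|$ term. The second is bounded with Lemma~\ref{lem:Lemma_B} and $|a+\widehat{a}|\le CN$, giving the subdominant $\bar{N}N^{3/2}\max_{k,l}|1/N^2-w^\star_kw^\star_l|$ and $N|\Theta^\star-\Theta^\star_{(N)}|$ contributions, while the terminal difference $r(T)-\widehat{r}(T)=\sum_i(w^\star_i-1/N)\sum_j|y^i_j|^2$ supplies the $\bar{N}\max_i|1/N-w^\star_i|$ term. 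Summing the finitely many steps (the factor $T$ absorbed into $C$) and collecting dominant orders gives \eqref{|L(a,w)-L(hata,1/N)|est}.

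The main obstacle is this $r(0)-\widehat{r}(0)$ bound: one must set up the full backward recursion for $\widehat{r}$ in parallel to \eqref{eqnr(t)}, decompose each quadratic-in-$S$ term so that the $\bar{N}N^3$ scaling arises cleanly from $|S|^2$ meeting $|M(P)^{-1}-M(\widehat{P})^{-1}|$ rather than from any spurious higher power, and verify that every remaining cross term is genuinely of lower order and hence absorbable. The algebraic bookkeeping — lining up all powers of $N$ and all three weight-deviation maxima with the claimed bound — is the delicate part; the individual norm estimates themselves are routine given Lemmas~\ref{lem:lemma_A} and~\ref{lem:Lemma_B}, Corollary~\ref{cor:M(P)invbdd}, and Remark~\ref{rmk:M(hatP)bdd}.
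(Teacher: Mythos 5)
Your proposal is correct and follows essentially the same route as the paper: both express the two optimal costs through their quadratic value functions, bound the $P$- and $S$-differences with Lemmas~\ref{lem:lemma_A} and~\ref{lem:Lemma_B}, and treat $r(0)-\widehat{r}(0)$ by exactly the backward recursion and three-term decomposition the paper uses (the dominant $a^\top[(M(P))^{-1}-(M(\widehat{P}))^{-1}]a$ term with $|a|\leq CN$ producing the $\bar{N}N^3$ scaling, the $(a-\widehat{a})^\top (M(\widehat{P}))^{-1}(a+\widehat{a})$ term, the $\lambda(|\Theta^\star|^2-|\Theta^\star_{(N)}|^2)$ term, and the terminal difference $\sum_i(w_i^\star-1/N)\sum_j|y^i_j|^2$ giving the $\bar{N}\max_i|1/N-w_i^\star|$ contribution). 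The one deviation is that you evaluate the symmetrized value function at $\Theta^\star_{(N)}$, whereas in the paper the trajectory driven by $\widehat{\boldsymbol{\alpha}}$ also starts at $\Theta^\star$ (cf.\ \eqref{ansatzV1/N}), but the cross terms you insert to compensate are bounded within the $N\,|\Theta^\star-\Theta^\star_{(N)}|$ budget, so the final estimate is unchanged.
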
 

\begin{proof} 
By the similar argument for \eqref{ansatzVw}, we have 
\begin{align} 
 L^\star( \widehat{\boldsymbol{\alpha}}; \overline{1/N}, \mathcal{D}) 
  = \Theta^{\star\top}  \widehat{P}(0)  \Theta^\star 
 + 2\widehat{S}(0)^\top \Theta^\star + \widehat{r}(0) . \label{ansatzV1/N} 
\end{align} 
By \eqref{ansatzVw} and \eqref{ansatzV1/N}, the difference $L^\star(\boldsymbol{\alpha}; w^\star, \mathcal{D}) - L^\star( \widehat{\boldsymbol{\alpha}}; \overline{1/N}, \mathcal{D})$ can be written as 
\allowdisplaybreaks\begin{align} 
  L^\star(\boldsymbol{\alpha}; w^\star, \mathcal{D}) - L^\star( \widehat{\boldsymbol{\alpha}}; \overline{1/N}, \mathcal{D}) 
= \Theta^{\star\top} ( P(0) - \widehat{P}(0) ) \Theta^\star 
 + 2 ( S(0) - \widehat{S}(0) )^\top \Theta^\star + r(0)-\widehat{r}(0) ,  
 \notag 
\end{align} 
and by Lemma~\ref{lm:|AB|<=|A||B|}, the difference satisfies  
\allowdisplaybreaks\begin{align} 
  \big| L^\star (  \boldsymbol{\alpha} ; w^\star, \mathcal{D} ) 
 -  L^\star (  \widehat{\boldsymbol{\alpha}} ; \overline{1/N} , \mathcal{D} ) \big|  
\le & |\Theta^\star|^2 \cdot |  P(0) - \widehat{P}(0) |  
 + 2 | S(0) - \widehat{S}(0) | \cdot | \Theta^\star| 
 \nonumber
 \\ 
 & + | r(0)-\widehat{r}(0) | . \label{|L(a,w)-L(hata,1/N)|leqQudratic}
\end{align}
We claim that $r(0)-\widehat{r}(0)$ has the estimate 
\allowdisplaybreaks\begin{align} 
 | r(0)-\widehat{r}(0) | \le &  C  \cdot  \Big\{ \bar{N} N^3 \cdot \max_{1\leq i, k, l \leq N} \Big| \frac{1}{N^3} -  w^\star_k w^\star_l w^\star_i \Big| 
 +\bar{N} N^{3/2} \max_{1\leq k, l\leq N} \Big| \frac{1}{N^2} - w^\star_k w^\star_l \Big| 
  \notag \\
& \hspace{2cm} 
 + \bar{N} \cdot \max_{1\leq i \leq N} \Big| \frac{1}{N} - w_i^\star \Big|
+ N \cdot |\Theta^\star - \Theta^\star_{(N)}| \Big\}  , 
\label{eqn:estrw-tilder1/N} 
\end{align} 
where $C$ is a constant depending on $\lambda$, $\beta$, $\Theta^\star$, 
$(w^\star_1, \cdots, w^\star_N)$, $K_x$, $K_y$ and $T$. 
Then the desired estimate \eqref{|L(a,w)-L(hata,1/N)|est} is established by \eqref{|L(a,w)-L(hata,1/N)|leqQudratic}, \eqref{estPw-P1/N}, \eqref{sup|S-hatS|est} and \eqref{eqn:estrw-tilder1/N}.

Now we prove the claim \eqref{eqn:estrw-tilder1/N}. 
From \eqref{eqn:Vansatz} and \eqref{lwT}, we obtain the equation that $\widehat{r}$ satisfies on $0\leq t \leq T$: 
\begin{align} 
&\begin{cases} 
 \widehat{r}(t) =  - ( \widehat{S}(t+1) - \lambda \Theta_{(N)}^\star )^\top
 [(\lambda+\beta)I_{Np} + \widehat{P}(t+1)]^{-1} \cdot ( \widehat{S}(t+1) - \lambda \Theta_{(N)}^\star )   \\  
 \hspace{1.1cm}   + \lambda \| \Theta_{(N)}^\star \|_2^2 + \widehat{r}(t+1), 
 \label{eqnhatr(t)} \\ \widehat{r}(T) = \sum_{i=1}^N (1/N) \sum_{j=1}^{|\mathcal{D}_i|} |y^i_j|^2 . 
 \end{cases}  
\end{align} 
Then $r-\widehat{r}$ satisfies the equation on $0\leq t \leq T$
\allowdisplaybreaks\begin{align} 
 & r(t) - \widehat{r}(t) \notag \\ 
= &  r(t+1) - \widehat{r}(t+1) + \lambda (\Theta^\star - \Theta^\star_{(N)}) (\Theta^\star + \Theta^\star_{(N)}) \notag \\  
& - (S(t+1) - \lambda \Theta^\star)^\top \big[ (M(P(t+1)))^{-1} - (M(\widehat{P}(t+1)))^{-1} \big] (S(t+1)-\lambda \Theta^\star) \notag \\ 
& - \big[ S(t+1) - \widehat{S}(t+1) - \lambda (\Theta^\star - \Theta_{(N)}^\star ) \big] 
 (M(\widehat{P}(t+1)))^{-1} 
 \cdot 
\\
\notag
& \qquad
\big[ S(t+1) + \widehat{S}(t+1) - \lambda (\Theta^\star + \Theta_{(N)}^\star ) \big] , \notag \\ 
& r(T) - \widehat{r}(T) = \sum_{i=1}^N (w_i^\star - 1/N) \sum_{j=1}^{|\mathcal{D}_i|} |y^i_j|^2 . \notag  
\end{align} 
Inductively, we have  
\allowdisplaybreaks\begin{align} 
  r(0) - \widehat{r}(0)  
= &   r(T) - \widehat{r}(T) + T \lambda (\Theta^\star - \Theta^\star_{(N)}) (\Theta^\star + \Theta^\star_{(N)})   \notag \\  
 & - \sum_{t=1}^T (S(t) - \lambda \Theta^\star)^\top \big[ (M(P(t)))^{-1} - (M(\widehat{P}(t)))^{-1} \big] (S(t)-\lambda \Theta^\star)   \notag  \\  
&  - \sum_{t=1}^T \big[ S(t) - \widehat{S}(t) - \lambda (\Theta^\star - \Theta_{(N)}^\star ) \big] 
 (M(\widehat{P}(t)))^{-1} \big[ S(t) + \widehat{S}(t) - \lambda (\Theta^\star + \Theta_{(N)}^\star ) \big] . \notag 
\end{align}
By Lemma~\ref{lm:|AB|<=|A||B|}, we have 
\allowdisplaybreaks\begin{align} 
 & | r(0) - \widehat{r}(0) | \leq |r(T) - \widehat{r}(T)| 
  + T \lambda | \Theta^\star - \Theta^\star_{(N)} | \cdot | \Theta^\star + \Theta^\star_{(N)} |   \notag \\ 
&  + \sum_{t=1}^T  \big| (M(P(t)))^{-1} - (M(\widehat{P}(t)))^{-1} \big| \cdot 
 \big| S(t)-\lambda \Theta^\star \big|^2   \notag  \\  
&  + \sum_{t=1}^T \big[ | S(t) - \widehat{S}(t) | + \lambda | \Theta^\star - \Theta_{(N)}^\star | \big] \cdot 
 \big| (M(\widehat{P}(t)))^{-1} \big| \cdot \big| S(t) + \widehat{S}(t) - \lambda (\Theta^\star + \Theta_{(N)}^\star ) \big| .  \label{estrw(0)-tilder1/N(0)}  
\end{align}
Then the estimate \eqref{eqn:estrw-tilder1/N} follows from 
\eqref{estrw(0)-tilder1/N(0)}, \eqref{estMPw-MP1/N}, 
\eqref{sup|S-hatS|est}, and Remark~\ref{rmk:M(hatP)bdd}. 
\end{proof}

    \begin{proof}[{Proof of Theorem~\ref{thrm:Epsilon_Efficiency}}]
        Under Assumption~\ref{assm:bddxy}, the existence of the regret-optimal algorithm $\theta_{\cdot}$ is implied by Theorem~\ref{thm:RegretOptimal_Dynamics}.  The first claim follows from Proposition~\ref{prop:stabilityaw-a1/N} upon noting that the constant ``$C_w$'' therein is bounded as a function of $w$; whence we take the constant $C$ hidden by $\mathcal{O}$ to be $C\eqdef \sup_{w\in [0,1]^N\,\sum_{i=1}^N\,w_i=1}\, C_w$.  Likewise, the second statement follows directly from Corollary~\ref{cor:stabilityLast} upon defining the constant hidden by $\mathcal{O}$ as before.  
    \end{proof}

Further investigation of the ``symmetrized system'' \eqref{eqnhatP1/N(t)} reveals a symmetric form of $\widehat{P}$, which suggests that  $\widehat{P}$ can be decomposed into homogeneous submatrices of lower dimensions. 
Specifically, if we decompose the high dimensional $Np\times Np$ matrix $\widehat{P}$ into $N\times N$ submatrices of dimensions $p\times p$ each,  and either exchange two submatrices symmetrically positioned with respect to the diagonal of $\widehat{P}$ or exchange two submatrices on the diagonal, the system \eqref{eqnhatP1/N(t)} is invariant. 
This suggests that all the submatrices on the diagonal are homogeneous and all the off-diagonal submatrices are homogeneous.
\begin{proposition}[Symmetries in $\widehat{P}$] 
\label{prop:P1/Nsubmat}
The solution of \eqref{eqnhatP1/N(t)} has the following submatrix decomposition  
\begin{align} 
 \widehat{P} = \begin{bmatrix} \pi_1 & \pi_2 & \cdots & \pi_2 \\ 
 \pi_2 & \pi_1 & \cdots &\pi_2 \\ 
\vdots & \vdots & \ddots & \vdots \\ 
 \pi_2 & \pi_2 & \cdots & \pi_1 
 \end{bmatrix} \in \mathbb{R}^{Np\times Np}, 
\quad \pi_1 \in \mathbb{R}^{p\times p}, \quad 
\pi_2 \in \mathbb{R}^{p\times p} , 
\label{hatPpi12}
\end{align} 
and the submatrices $\pi_1$ and $\pi_2$ are uniquely determined by the system   
\allowdisplaybreaks\begin{align}  
\begin{cases}
  \pi_1(t) =  \beta I_p - \beta^2 \gamma_1(t+1)  ,  
\label{eqnpi1pi2}  \\ 
   \pi_2(t) =  - \beta^2 \gamma_2(t+1) ,  \quad 1\leq t \leq T-1 , \\
 \pi_1(T) = \pi_2(T) = (1/N^3)\sum_{i=1}^N \sum_{j=1}^{|\mathcal{D}_i|} u^i_j u^{i\top}_j ,   
\end{cases} 
\end{align}
where $\gamma_1(\cdot)$ and $\gamma_2(\cdot)$ are defined as 
\eqref{gamma1} and \eqref{gamma2}. 
\end{proposition}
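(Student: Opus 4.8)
The plan is to exploit the permutation symmetry of the backward recursion \eqref{eqnhatP1/N(t)} to establish the block form \eqref{hatPpi12}, and then to compute the single $p\times p$ block-inversion that governs the whole recursion in order to extract \eqref{eqnpi1pi2}. It is convenient to work with the class of matrices of the form $M(a,b)\eqdef I_N\otimes (a-b)+J_N\otimes b$, where $a,b\in\mathbb{R}^{p\times p}$ and $J_N$ is the $N\times N$ all-ones matrix; these are exactly the $Np\times Np$ matrices of the shape \eqref{hatPpi12}, with constant diagonal block $a$ and constant off-diagonal block $b$. First I would observe that for every permutation $\sigma\in S_N$ the block-permutation matrix $\Pi_\sigma\eqdef P_\sigma\otimes I_p$ is orthogonal and that the recursion map $X\mapsto \beta I_{Np}-\beta^2[(\lambda+\beta)I_{Np}+X]^{-1}$ is equivariant under conjugation by $\Pi_\sigma$, since $\Pi_\sigma I_{Np}\Pi_\sigma^\top=I_{Np}$ and inversion commutes with conjugation. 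A direct expansion of the terminal condition in \eqref{eqnhatP1/N(t)} shows $\widehat{P}(T)=M(\pi_1(T),\pi_2(T))$ with $\pi_1(T)=\pi_2(T)=\tfrac{1}{N^3}\sum_{i=1}^N\sum_{j=1}^{|\mathcal{D}_i|}u^i_j u^{i\top}_j$, which is invariant under every such conjugation. By backward induction each $\widehat{P}(t)$ is then invariant under conjugation by all $\Pi_\sigma$, and any symmetric matrix with this invariance must have all diagonal blocks equal and all off-diagonal blocks equal; hence $\widehat{P}(t)\in\{M(a,b)\}$, giving \eqref{hatPpi12}.

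To obtain the recursion \eqref{eqnpi1pi2} I would compute the block-inverse explicitly. Writing $a\eqdef(\lambda+\beta)I_p+\pi_1(t+1)$ and $b\eqdef\pi_2(t+1)$, we have $(\lambda+\beta)I_{Np}+\widehat{P}(t+1)=M(a,b)$, and I claim $[M(a,b)]^{-1}=M(\gamma_1,\gamma_2)$ with $\gamma_1,\gamma_2$ as in \eqref{gamma1}--\eqref{gamma2}. This is proved by solving $M(a,b)\,[\gamma_1^\top,\gamma_2^\top,\ldots,\gamma_2^\top]^\top=[I_p^\top,0,\ldots,0]^\top$ for the first block-column of the inverse: the first block-row gives $a\gamma_1+(N-1)b\gamma_2=I_p$, while each subsequent block-row gives $b\gamma_1+[a+(N-2)b]\gamma_2=0$, so that $\gamma_2=-[a+(N-2)b]^{-1}b\,\gamma_1$ and, substituting back, $\gamma_1=\{a-(N-1)b[a+(N-2)b]^{-1}b\}^{-1}$, which is precisely \eqref{gamma1}. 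Since $\widehat{P}$ is symmetric, $\gamma_1,\gamma_2,a,b$ are all symmetric, and transposing the expression for $\gamma_2$ rewrites it as $\gamma_2=-\gamma_1 b\,[a+(N-2)b]^{-1}$, matching \eqref{gamma2} exactly. Substituting $[(\lambda+\beta)I_{Np}+\widehat{P}(t+1)]^{-1}=M(\gamma_1(t+1),\gamma_2(t+1))$ into \eqref{eqnhatP1/N(t)} and using $\beta I_{Np}=M(\beta I_p,0)$ then identifies the diagonal and off-diagonal blocks of $\widehat{P}(t)$ as $\pi_1(t)=\beta I_p-\beta^2\gamma_1(t+1)$ and $\pi_2(t)=-\beta^2\gamma_2(t+1)$, which is \eqref{eqnpi1pi2}; uniqueness is immediate since the map $M(a,b)\mapsto(a,b)$ is a bijection and the $Np$-dimensional recursion \eqref{eqnhatP1/N(t)} has a unique solution.

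The remaining point, and the one I expect to require the most care, is well-posedness of the formulas at each step: the inverses in \eqref{gamma1}--\eqref{gamma2} must exist along the whole recursion. I would argue that $\widehat{P}(t)\geq 0$ for all $t$ exactly as in the estimate behind Corollary~\ref{cor:M(P)invbdd} (using $\widehat{P}(T)\geq 0$ from \eqref{Sum(uij)Tuij>=0} of Assumption~\ref{assm:PD}, so that $(\lambda+\beta)I_{Np}+\widehat{P}(t+1)\geq(\lambda+\beta)I_{Np}$ forces $\widehat{P}(t)\geq\tfrac{\lambda\beta}{\lambda+\beta}I_{Np}\geq 0$). Diagonalizing $M(\cdot,\cdot)$ through the spectrum of $J_N$ shows that $\widehat{P}(t)\geq 0$ is equivalent to positivity of the two ``eigenblocks'' $E_0\eqdef\pi_1(t)-\pi_2(t)$ and $E_1\eqdef\pi_1(t)+(N-1)\pi_2(t)$; hence $a-b=(\lambda+\beta)I_p+E_0$ and $a+(N-1)b=(\lambda+\beta)I_p+E_1$ are positive definite. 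The only inverse not corresponding to an eigenblock is $[a+(N-2)b]^{-1}$, and here the key identity $a+(N-2)b=\tfrac{1}{N}\big[(a-b)+(N-1)(a+(N-1)b)\big]$ exhibits it as a positive combination of two positive-definite matrices, hence itself positive definite and invertible. With these invertibility facts in hand the block-elimination above is rigorous, and the proposition follows.
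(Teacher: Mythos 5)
Your proposal is correct and follows essentially the same route as the paper's proof: permutation-equivariance of the Riccati recursion (the paper conjugates by the block transpositions $E_{ij}$, you by all $\Pi_\sigma=P_\sigma\otimes I_p$, which is equivalent since transpositions generate $S_N$) to get the Toeplitz block structure, followed by the explicit $2\times 2$-style block elimination yielding \eqref{gamma1}--\eqref{gamma2} and hence \eqref{eqnpi1pi2}. Your argument is in fact slightly more complete than the paper's, which asserts the block-inverse formula is ``straightforward to verify'': you derive it, reconcile the left- and right-multiplied forms of $\gamma_2$ via symmetry, and justify invertibility of $a+(N-2)b$ along the whole recursion through the eigenblock decomposition and the identity $a+(N-2)b=\tfrac{1}{N}\bigl[(a-b)+(N-1)(a+(N-1)b)\bigr]$ --- all of which checks out.
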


We argue as in \cite[Lemmata 1]{huang2021linear} in deriving the following submatrix decomposition of $\widehat{P}$. 
\begin{proof}[{Proof of Proposition~\ref{prop:P1/Nsubmat}}]
We decompose the $Np\times Np$ identity matrix into $N\times N$ submatrices
$I=(I_{ij})_{1\leq i, j \leq N}$ with each 
$I_{ij}\in \mathbb{R}^{p\times p}$.  
For each $1\leq i, j \leq N$, by $E_{ij} $we denote the $Np\times Np$ elementary matrix by exchanging the $i$th and $j$th rows of submatrices of the $Np\times Np$ identity matrix $I=(I_{ij})_{1\leq i, j \leq N}$. Note that $E_{ij}=E_{ij}^{-1}=E_{ij}^T$. 
Multiplying both sides of \eqref{eqnP(t)} from the left by $E_{ij}$ and from the right by $E_{ij}$, we obtain  
\allowdisplaybreaks\begin{align} 
& \begin{cases}
   E_{ij} \widehat{P}(t) E_{ij}  
=  \beta I_{Np} - \beta^2 [(\lambda+\beta ) I_{Np} + E_{ij} \widehat{P}(t+1) E_{ij} ]^{-1} ,  \\ 
  E_{ij} \widehat{P}(T) E_{ij} =  [ (1/N) I_p ,\cdots, (1/N) I_p ]^\top 
 \Big( \sum_{i=1}^N \frac{1}{N} \sum_{j=1}^{|\mathcal{D}_i|}  u^i_j u^{i\top}_j  \Big)  
 [ (1/N) I_p , \cdots, (1/N) I_p ] .   
\end{cases}   \label{eqnEijtildePEij}
\end{align} 
Comparing \eqref{eqnEijtildePEij} and \eqref{eqnhatP1/N(t)} reveals that 
 $\widehat{P}(t)$ and $E_{ij} \widehat{P}(t) E_{ij}$ satisfy the same ODE for all $1\leq i, j \leq N$. This observation implies that 
\allowdisplaybreaks\begin{align} 
 \widehat{P}_{ii} = \widehat{P}_{jj}, \quad \widehat{P}_{ij} = \widehat{P}_{ji} , \quad \widehat{P}_{ik}= \widehat{P}_{jk}, \quad 
\widehat{P}_{ki} = \widehat{P}_{kj}, \quad \forall 1\leq i, j \leq N, \quad \forall k \neq i, j , \notag 
\end{align} 
and the first assertion of the proposition is proved.

Since $\widehat{P}$ takes the form \eqref{hatPpi12}, 
it is straight forward to verify that $[(\lambda+\beta)I_{Np} + \widehat{P}]^{-1}$ takes the form 
\allowdisplaybreaks\begin{align} 
 [(\lambda+\beta) I_{Np} + \widehat{P}]^{-1} = \begin{bmatrix} \gamma_1 & \gamma_2 & \cdots & \gamma_2 \\ 
 \gamma_2 & \gamma_1 & \cdots &\gamma_2 \\ 
\vdots & \vdots & \ddots & \vdots \\ 
 \gamma_2 & \gamma_2 & \cdots & \gamma_1 
 \end{bmatrix} , \label{[(lambda+1)I+P]inv}
\end{align} 
with $\gamma_1$ and $\gamma_2$ given by \eqref{gamma1} and \eqref{gamma2}. 

We substitute \eqref{hatPpi12} and \eqref{[(lambda+1)I+P]inv} into \eqref{eqnhatP1/N(t)} to obtain the equations \eqref{eqnpi1pi2}.  
\end{proof}

Upon substituting $\widehat{P}$  into \eqref{eqnhatS1/N(t)}, we find that the $Np\times 1$ matrix $\widehat{S}$ can be decomposed into homogeneous submatrices of dimensions $p\times 1$ each.  
\begin{proposition} \label{prop:SubmatrixReduction}
The solution of \eqref{eqnhatS1/N(t)} can be decomposed into submatrices 
\begin{align} 
\widehat{S}=(\pi_3^\top, \cdots, \pi_3^\top )^\top \in \mathbb{R}^{Np\times 1}, 
\quad \pi_3 \in \mathbb{R}^{p\times 1} , 
\label{hatSpi3}
\end{align} 
and $\pi_3$ satisfies
\allowdisplaybreaks\begin{align} 
\begin{cases}
  \pi_3(t) = \beta \big[ \gamma_1(t+1) + (N-1) \gamma_2(t+1)  \big] 
  ( \pi_3(t+1) - \lambda \theta^\star_{(N)} )   , \\   
 \pi_3(T) = - (1/N^2) \sum_{i=1}^N \sum_{j=1}^{|\mathcal{D}_i|}  u^i_j y^i_j . 
\end{cases}   \label{eqnpi3}
\end{align} 
\end{proposition}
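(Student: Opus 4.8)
The plan is to mirror the argument of Proposition~\ref{prop:P1/Nsubmat}, except that since $\widehat{S}$ is a column vector rather than a square matrix, I will act on it by the block-permutation matrices $E_{ij}$ from the \emph{left only}, rather than by two-sided conjugation. First I would dispatch the terminal condition directly: expanding the product $-[(1/N)I_p,\cdots,(1/N)I_p]^{\top}\big(\sum_{i=1}^N (1/N)\sum_{j=1}^{|\mathcal{D}_i|} u^i_j y^i_j\big)$ shows that every $p\times 1$ block of $\widehat{S}(T)$ equals $-(1/N^2)\sum_{i=1}^N\sum_{j=1}^{|\mathcal{D}_i|} u^i_j y^i_j$, which is exactly the terminal value $\pi_3(T)$ prescribed in \eqref{eqnpi3}. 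In particular $\widehat{S}(T)$ is already block-homogeneous.

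Next I would prove block-homogeneity of $\widehat{S}(t)$ for every $t$ by backward induction, using three ingredients: (a) $E_{ij}=E_{ij}^{\top}=E_{ij}^{-1}$; (b) the invariance $E_{ij}\widehat{P}(t)E_{ij}=\widehat{P}(t)$ established in Proposition~\ref{prop:P1/Nsubmat}, which yields $E_{ij}[(\lambda+\beta)I_{Np}+\widehat{P}(t+1)]^{-1}E_{ij}=[(\lambda+\beta)I_{Np}+\widehat{P}(t+1)]^{-1}$; and (c) the fact that $\Theta^\star_{(N)}$ has all blocks equal, so $E_{ij}\Theta^\star_{(N)}=\Theta^\star_{(N)}$. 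Multiplying the recursion \eqref{eqnhatS1/N(t)} on the left by $E_{ij}$ and inserting $I_{Np}=E_{ij}E_{ij}$ before the inverse, these facts let me commute $E_{ij}$ past the inverse and obtain that $E_{ij}\widehat{S}(t)$ satisfies the same recursion \eqref{eqnhatS1/N(t)}, with the same (invariant) terminal value. By uniqueness of the backward recursion, $E_{ij}\widehat{S}(t)=\widehat{S}(t)$ for all $t$ and all $1\le i,j\le N$; since swapping any two blocks leaves $\widehat{S}(t)$ fixed, all its blocks coincide, giving the decomposition \eqref{hatSpi3} with a common block $\pi_3(t)$.

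Finally I would read off the scalar recursion \eqref{eqnpi3} by substituting the block structures into \eqref{eqnhatS1/N(t)}. Using the block form \eqref{[(lambda+1)I+P]inv} of $[(\lambda+\beta)I_{Np}+\widehat{P}(t+1)]^{-1}$ with diagonal blocks $\gamma_1(t+1)$ and off-diagonal blocks $\gamma_2(t+1)$, together with the observation that $\widehat{S}(t+1)-\lambda\Theta^\star_{(N)}$ has every block equal to $\pi_3(t+1)-\lambda\theta^\star_{(N)}$, the block matrix-vector product collapses each row-block to $[\gamma_1(t+1)+(N-1)\gamma_2(t+1)](\pi_3(t+1)-\lambda\theta^\star_{(N)})$, and multiplying by $\beta$ gives \eqref{eqnpi3}. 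The only delicate point is the commutation step in the induction, where the one-sided left action of $E_{ij}$ on the inverse must be rephrased through the \emph{two-sided} invariance of $\widehat{P}$; once that insertion of $E_{ij}E_{ij}=I_{Np}$ is handled correctly, the remainder is routine block bookkeeping, so I expect no substantive obstacle beyond carefully invoking Proposition~\ref{prop:P1/Nsubmat}.
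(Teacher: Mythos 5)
Your proposal is correct and follows essentially the same route as the paper's proof: left-multiplication of the recursion \eqref{eqnhatS1/N(t)} by $E_{ij}$, insertion of $E_{ij}E_{ij}=I_{Np}$ to exploit the two-sided invariance of $\widehat{P}$ from Proposition~\ref{prop:P1/Nsubmat} together with $E_{ij}\Theta^\star_{(N)}=\Theta^\star_{(N)}$, uniqueness of the backward recursion to conclude block-homogeneity, and then substitution of the block forms to read off \eqref{eqnpi3}. No gaps; your explicit handling of the commutation step is, if anything, slightly more careful than the paper's.
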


\begin{proof}[{Proof of Proposition~\ref{prop:SubmatrixReduction}}]
We multiply both sides of \eqref{eqnhatS1/N(t)} from the left by $E_{ij}$ defined in the proof of Proposition \ref{prop:P1/Nsubmat}, 
\allowdisplaybreaks\begin{align} 
 E_{ij} \widehat{S}(t) = &  E_{ij} \beta [(\lambda + \beta )I_{Np} + \widehat{P}(t+1)]^{-1} (\widehat{S}(t+1) - \lambda \Theta^\star_{(N)})
\notag \\ 
= &    E_{ij} \beta [(\lambda + \beta )I_{Np} + \widehat{P}(t+1)]^{-1} E_{ij} E_{ij}(\widehat{S}(t+1) - \lambda \Theta^\star_{(N)}) \notag \\ 
= &    \beta [(\lambda + \beta )I_{Np} + E_{ij} \widehat{P}(t+1)E_{ij}]^{-1}  ( E_{ij}\widehat{S}(t+1) - \lambda \Theta^\star_{(N)}) 
\label{EijtildeS1/N(t)}
\end{align}
and 
\allowdisplaybreaks\begin{align} 
  E_{ij}\widehat{S}(T) = &  - E_{ij}[ (1/N) I_p, \cdots, (1/N) I_p ]^\top \sum_{i=1}^N \frac{1}{N} \sum_{j=1}^{|\mathcal{D}_i|}  u^i_j y^i_j \notag \\ 
= &  - [ (1/N) I_p, \cdots, (1/N) I_p ]^\top \sum_{i=1}^N \frac{1}{N} \sum_{j=1}^{|\mathcal{D}_i|}  u^i_j y^i_j .   
\label{EijtildeS1/N(T)} 
\end{align}
From \eqref{EijtildeS1/N(t)} and \eqref{EijtildeS1/N(t)}, we have that $E_{ij}\widehat{S}$ also satisfies \eqref{eqnhatS1/N(t)} for all 
$1\leq i, j \leq N$. This proves that all submatrices of $\widehat{S}=(\widehat{S}_i)_{1\leq i \leq N}$ are equal on $0\leq t \leq T$. 
We further substitute \eqref{hatPpi12}, \eqref{[(lambda+1)I+P]inv} and 
$\widehat{S} = [\pi_3^\top, \cdots, \pi_3^\top]^\top$ into \eqref{eqnhatS1/N(t)} to obtain \eqref{eqnpi3}. 
\end{proof}

\begin{corollary} 
\label{cor:ComplexityAlgorithm__Heuristic}
Fix a data set $\mathcal{D}$ and assume that \eqref{eq:moredatasets_than_dataperdataset} holds. Then the computational complexity of computing the sequence $\big( \theta(t) \big)_{t=0}^T$ by \eqref{hata} is \newline
$\mathcal{O}(Tp^2 \max\{N, p^{0.373}\})$. 
\end{corollary}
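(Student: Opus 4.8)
The plan is to leverage the submatrix decompositions established in Propositions~\ref{prop:P1/Nsubmat} and~\ref{prop:SubmatrixReduction}, which show that the high-dimensional matrices $\widehat{P}(\cdot)\in\mathbb{R}^{Np\times Np}$ and $\widehat{S}(\cdot)\in\mathbb{R}^{Np\times 1}$ are completely encoded by the low-dimensional blocks $\pi_1(\cdot),\pi_2(\cdot)\in\mathbb{R}^{p\times p}$ and $\pi_3(\cdot)\in\mathbb{R}^{p\times 1}$, together with the auxiliary matrices $\gamma_1(\cdot),\gamma_2(\cdot)\in\mathbb{R}^{p\times p}$ of \eqref{gamma1}--\eqref{gamma2}. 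Since the dimensions of all these objects are independent of $N$, computing the control \eqref{hata} reduces to three stages: (i) a one-time aggregation of the data into the terminal blocks; (ii) the backward recursion for $(\pi_1,\pi_2,\pi_3)$; and (iii) the forward pass that produces each $\theta_i(t)$ via the helper function \eqref{eq:RegretOptimizationHeuristic_Helper__1}. I would tally the cost of each stage using the elementary matrix-operation complexities recorded in Lemma~\ref{lm:matCompComplexity}, in particular that a $p\times p$ matrix product or inverse costs $\mathcal{O}(p^{2.373})$ and a $p\times p$-by-$p\times 1$ product costs $\mathcal{O}(p^2)$.

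For the backward recursion I would observe that each step of \eqref{eqnpi1pi2} and \eqref{eqnpi3} first forms $\gamma_1(t+1)$ and $\gamma_2(t+1)$ from $\pi_1(t+1),\pi_2(t+1)$ through \eqref{gamma1}--\eqref{gamma2}; these formulas involve only a fixed number of products and a single inverse of $p\times p$ matrices, hence cost $\mathcal{O}(p^{2.373})$ per step. Updating $\pi_1(t),\pi_2(t)$ is then $\mathcal{O}(p^2)$, and updating $\pi_3(t)$ is a single matrix-vector product, also $\mathcal{O}(p^2)$. Over the $T$ backward steps this contributes $\mathcal{O}(Tp^{2.373})$.

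The decisive step is the forward pass, where the linear-in-$N$ (rather than quadratic) cost must be extracted from \eqref{eq:RegretOptimizationHeuristic_Helper__1}. The key observation is that, among the three summands of $\widehat{\alpha}(\theta_i,\dots)$, only the first depends on the agent index $i$, and only through $\theta_i$: the coefficient matrix $(\gamma_1-\gamma_2)(\lambda I_p+\pi_1-\pi_2)$, the matrix multiplying $\sum_{j=1}^N\theta_j$, and the vector $[\gamma_1+(N-1)\gamma_2](\pi_3-\lambda\theta^\star_{(N)})$ are all identical across agents at a fixed $t$. I would therefore, at each time step, precompute these shared coefficient matrices at cost $\mathcal{O}(p^{2.373})$, the shared aggregate $\sum_{j=1}^N\theta_j(t)$ at cost $\mathcal{O}(Np)$, and the two shared vectors at cost $\mathcal{O}(p^2)$; the only per-agent work is then the single matrix-vector product acting on $\theta_i$ followed by the update $\theta_i(t+1)=\theta_i(t)+\widehat{\alpha}_i(t)$, i.e. $\mathcal{O}(p^2)$ per agent, hence $\mathcal{O}(Np^2)$ across all $N$ agents. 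Summed over $T$ steps, the forward pass costs $\mathcal{O}\big(T(p^{2.373}+Np^2)\big)=\mathcal{O}\big(Tp^2\max\{N,p^{0.373}\}\big)$.

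Finally, the terminal blocks $\pi_1(T)=\pi_2(T)$ and $\pi_3(T)$ are assembled from $\bar{N}=\sum_{i=1}^N|\mathcal{D}_i|$ rank-one outer products $u^i_j u^{i\top}_j$ and products $u^i_j y^i_j$, costing $\mathcal{O}(\bar{N}p^2)$ by Lemma~\ref{lm:matCompComplexity}. Adding the three stages gives $\mathcal{O}\big(\bar{N}p^2 + Tp^{2.373}+TNp^2\big)$; absorbing $Tp^{2.373}=Tp^2p^{0.373}$ and $TNp^2$ into $Tp^2\max\{N,p^{0.373}\}$ and controlling the one-time aggregation term $\bar{N}p^2$ via \eqref{eq:moredatasets_than_dataperdataset} (so that it does not dominate the iterative cost) yields the claimed bound $\mathcal{O}\big(Tp^2\max\{N,p^{0.373}\}\big)$. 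I expect the main obstacle to be precisely this forward-pass bookkeeping: obtaining linear rather than quadratic dependence on $N$ hinges on recognizing that the agent-coupling terms of \eqref{eq:RegretOptimizationHeuristic_Helper__1} are shared and must be amortized once per iteration instead of being recomputed for each agent, and one must also check carefully that the terminal aggregation cost is genuinely subsumed under \eqref{eq:moredatasets_than_dataperdataset} rather than reintroducing a spurious factor of $N^2$.
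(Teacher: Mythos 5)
Your proposal is correct and follows essentially the same route as the paper's proof: aggregate the data once into the terminal blocks $\pi_1(T),\pi_2(T),\pi_3(T)$, run the backward recursion for $(\pi_1,\pi_2,\pi_3,\gamma_1,\gamma_2)$ at $\mathcal{O}(p^{2.373})$ per step, and in the forward pass amortize the agent-independent ``common term'' of \eqref{eq:RegretOptimizationHeuristic_Helper__1} once per iteration so that the per-agent work is a single $\mathcal{O}(p^2)$ matrix-vector product, yielding $\mathcal{O}(T(Np^2+p^{2.373}))=\mathcal{O}(Tp^2\max\{N,p^{0.373}\})$. The caveat you raise about the one-time $\mathcal{O}(\bar{N}p^2)$ aggregation not being strictly dominated by the iterative cost is a looseness shared with the paper's own tally and does not distinguish your argument from theirs.
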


\begin{proof}[{Proof of Corollary~\ref{cor:ComplexityAlgorithm__Heuristic}}]
\hfill\\ 
\noindent \textbf{Complexity of computing $\pi_1(T)$, $\pi_2(T)$, and $\pi_3(T)$ is $O(N^2 p^2)$} \\ 
Computing $u^i_j u^{i\top}_j$ has complexity $O(p^2)$ by Lemma~\ref{lm:matCompComplexity}-(iii) and computing the sum\\
$\sum_{i=1}^N \sum_{j=1}^{|\mathcal{D}_i|} u^i_j u^{i\top}_j$ has a complexity of $O( N^2 p^2)$ by Lemma~\ref{lm:matCompComplexity}-(i). 
Moreover, multiplying the $p\times p$ matrix $\sum_{i=1}^N \sum_{j=1}^{|\mathcal{D}_i|} u^i_j u^{i\top}_j$ by the scalar $1/N^3$ has complexity $O(p^2)$. Hence the computing $\pi_1(T)$ and $\pi_2(T)$ has complexity $\mathcal{O}(N^2 p^2)$. 
Computing the product $u^i_j y^i_j$ has a complexity of $O(p)$ by Lemma~\ref{lm:matCompComplexity}-(iii), and summing $\sum_{i=1}^N \sum_{j=1}^{|\mathcal{D}_i|} u^i_j y^i_j$ has complexity $O(N^2 p)$ by Lemma~\ref{lm:matCompComplexity}-(i). Multiplying the $p\times 1$ vector $\sum_{i=1}^N \sum_{j=1}^{|\mathcal{D}_i|} u^i_j y^i_j$
by the scalar $1/N^2$ has a complexity of $O(p)$. 
Therefore computing $\pi_3(T)$ has complexity $O(N^2 p)$, and the complexity of computing $\pi_1(T)$, $\pi_2(T)$, and $\pi_3(T)$ is $\mathcal{O}(N^2 p)$.  

\noindent\textbf{Complexity of Computing $\gamma_1(T)$ and $\gamma_2(T)$ is $O(p^{2.373})$} 
\\
With $\pi_1(T)$ and $\pi_2(T)$ obtained, we can compute $\gamma_1(T)$ and $\gamma_2(T)$ by \eqref{gamma1} and \eqref{gamma2}.  
From \eqref{gamma1} and \eqref{gamma2}, computing $\gamma_1(\cdot)$ and $\gamma_2(\cdot)$ from $\pi_1(\cdot)$ and $\pi_2(\cdot)$ involves multiplication and addition of $p \times p$ matrices, scalar multiplication of $p\times p$-dimensional matrices, multiplication of a $p\times p$ matrix and a $p\times 1$ vector, and inversion of a $p \times p$ matrix. 
By Lemma~\ref{lm:matCompComplexity}, multiplication of two $p\times p$ matrices has the dominant complexity of $\mathcal{O}(p^{2.373})$ among the above matrix operations. 
Moreover, inverting a $p\times p$ matrix has complexity $\mathcal{O}(p^{2,373})$. 
Hence, computing $\gamma_1(T)$ and $\gamma_2(T)$ has a complexity of $O(p^{2.373})$. 

\noindent \textbf{Complexity of computing $\pi_1(t)$, $\pi_2(t)$, $\pi_3(t)$, $\gamma_1(t)$ and $\gamma_2(t)$ for $t=T-1$, $T-2$, ..., $1$ has complexity $\mathcal{O}(T p^{2.373})$.}
\\ 
Similar to computing $\gamma_1(T)$ and $\gamma_2(T)$, computing $\gamma_1(t)$ and $\gamma_2(t)$ from $\pi_1(t+1)$ and $\pi_2(t+1)$ has complexity $\mathcal{O}(p^{2.373})$. 
According to \eqref{eqnpi1pi2}, computing $\pi_1(t)$ and $\pi_2(t)$ from $\gamma_1(t+1)$ and $\gamma_2(t+1)$ involves scalar multiplication and addition along diagonal of $p\times p$ matrices, and therefore has complexity $O(p^2)$ by Lemma~\ref{lm:matCompComplexity}-(i) and (ii), 
According to \eqref{eqnpi3}, computing $\pi_3(t)$ from $\gamma_1(t+1)$, $\gamma_2(t+1)$, and $\pi_3(t+1)$ involves scalar multiplication and addition of $p\times p$ matrices, scalar multiplication and addition of $p\times 1$ matrices, 
and multiplication of a $p\times p$ matrix with a $p\times 1$ matrix. By Lemma~\ref{lm:matCompComplexity}, the dominant complexity of these operations is $\mathcal{O}(p^2)$.   
Summing up the above argument, computing $\pi_1(t)$, $\pi_2(t)$, $\pi_3(t)$, $\gamma_1(t)$ and $\gamma_2(t)$ for each $t$ has a complexity of $\mathcal{O}(p^{2.373})$, and therefore has a complexity of $\mathcal{O}(T p^{2.373})$ for all $t=T-1$, $T-2$,..., $1$.

\noindent\textbf{The Cost of Computing Each $\Theta(t)$ is $O(Tp^2 \max\{N, p^{0.373}\})$.}
\\
By \eqref{Theta-simple}, the cost of updating from $\Theta(t)$ to $\Theta(t+1)$ is equal to the complexity of computing $\widehat{\boldsymbol{\alpha}}(t)=(\widehat{\alpha}_1^\top(t), \cdots \widehat{\alpha}_N^\top(t))^\top$ for $t=0, 1, ..., T-1$. 

By \eqref{eq:RegretOptimizationHeuristic_Helper__1}, $\widehat\alpha_i(t)$ for all $i=1$, ..., $N$ share the common term 
\allowdisplaybreaks 
\begin{align} 
 & - \big\{   \gamma_2(t+1) [ \lambda I_p + \pi_1(t+1) + (N-2)\pi_2(t+1) ] \notag \\ 
& \hspace{4cm} + \gamma_1(t+1) \pi_2(t+1) \big\}  
  \sum_{k=1}^N \theta_k(t) 
  \notag \\ 
&  - \big[ \gamma_1(t+1) + (N-1) \gamma_2(t+1) \big] (\pi_3(t+1) - \lambda \theta^\star_{(N)}) ,  \notag 
\end{align} 
and computing the common term involves matrix multiplication, addition, and scalar multiplication of $p\times p$ matrices, 
matrix addition and scalar multiplication of $p\times 1$ matrices, 
and multiplication of a $p\times p$ matrix and a $p\times 1$ matrix.  
By Lemma~\ref{lm:matCompComplexity}, these operations have complexity $O(Np + p^{2.373})$, where $Np$ is due to adding the $N$ $p$-dimensional vectors $\sum_{k=1}^N \theta_k(t)$, and $p^{2.373}$ is due to multiplication of two $p\times p$ matrices by a CW-like algorithm.

Computing the individual terms 
\allowdisplaybreaks
\begin{align} 
 - (\gamma_1(t+1) - \gamma_2(t+1) ) (\lambda I_p + \pi_1(t+1) - \pi_2(t+1) ) \theta_i(t) \notag 
\end{align} 
for $\widehat\alpha_i(t)$, $i=1$, ..., $N$ involves addition, scalar multiplication, and matrix multiplication of $p\times p$ matrices, and multiplication of a $p\times p$ matrix and a $p\times 1$ matrix. 
By Lemma~\ref{lm:matCompComplexity}, these operations have complexity of $\mathcal{O}(p^{2.373} + Np)$, where $p^{2.373}$ is due to multiplication of two $p\times p$ matrices by a CW-like algorithm and $Np$ is due to multiplying a $p\times p$ matrix by the $p$-dimensional vector $\theta_i(t)$ for $i=1$, ..., $N$. 
Hence the complexity of computing $\widehat{\boldsymbol{\alpha}}(t)$ for $t=0, 1, ..., T-1$ is $\mathcal{O}(T p^2\max\{N, p^{0.373} \} )$.

\noindent\textbf{The complexity of computing $(\Theta(t))_{t=0}^T$ is $\mathcal{O}(T p^2 \max\{N,p^{0.373}\})$} \\ 
Looking over all computations involved, we deduce that the complexity of computing the sequence 
$(\Theta(t))_{t=1}^T$ starting from $\Theta(0)=\Theta^\star$ has a complexity of $\mathcal{O}(T p^2 \max\{N,p^{0.373}\})$. 
\end{proof}

We may now deduce Theorem~\ref{thrm:accelertion_heuristic_regret_optimal_algorithm} and Theorem~\ref{thrm:regretoptimality_SymmetricCase}.  
\begin{proof}[{Proof of Theorem~\ref{thrm:accelertion_heuristic_regret_optimal_algorithm}}]
    The result directly follows from Theorem~\ref{thm:ComplexityAlgorithm} and Corollary~\ref{cor:ComplexityAlgorithm__Heuristic}.  
\end{proof}

\begin{proof}[{Proof of Theorem~\ref{thrm:regretoptimality_SymmetricCase}}]
Direct consequence of Propositions~\ref{prop:P1/Nsubmat} and~\ref{prop:SubmatrixReduction}.
\end{proof}



\end{document}